\newtheorem{thm}{Theorem}
\newtheorem{lemma}{Lemma}
\newtheorem{ass}{Assumption}
\setlist[itemize]{align=parleft,left=1em}
\def \S {\mathbf{S}}
\def \O {\mathcal{O}}
\def \R {\mathbb{R}}
\def \w {\mathbf{w}}
\def \x {\mathbf{x}}
\def \E {\mathrm{E}}
\def \x {\mathbf{x}}
\def \d {\mathbf{d}}
\def \1 {\mathbf{1}}
\def \I {\mathbb{I}}
\def \Q {\mathcal{Q}}
\def \B {\mathcalB}
\def \C {\mathbf C}
\def \m {\mathbf{m}}
\def \E {\mathrm{E}}
\def \x {\mathbf{x}}
\def \w {\mathbf{w}}
\def \R {\mathbb{R}}
\def \S {\mathcal{S}}
\def \Q {\mathcal{Q}}
\def \d {\mathbf{d}}
\def \I {\mathbb{I}}
\def \B {\mathcal{B}}
\def \C {\mathcal{C}}
\def \E {\mathbb{E}}
\title{Large-scale Stochastic Optimization of\\ NDCG Surrogates for Deep Learning\\ with Provable Convergence 
}
\author{%
    Zi-Hao Qiu$^1$\thanks{Contribute Equally. Correspondence to tianbao-yang@uiowa.edu}, Quanqi Hu$^2$\footnotemark[1], Yongjian Zhong$^3$, Lijun Zhang$^1$, Tianbao Yang$^3$\\
    $^1$ National Key Laboratory for Novel Software Technology, Nanjing University, Nanjing 210023, China\\
    $^2$ Department of Mathematics, the University of Iowa, Iowa City, IA 52242, USA\\
    $^3$ Department of Computer Science, the University of Iowa, Iowa City, IA 52242, USA\\
    \texttt{qiuzh@lamda.nju.edu.cn},\texttt{\{quanqi-hu,yongjian-zhong\}@uiowa.edu},\\\texttt{zlj@nju.edu.cn},\texttt{tianbao-yang@uiowa.edu}
}
\begin{document}

\abovedisplayskip=2pt
\abovedisplayshortskip=0pt
\belowdisplayskip=2pt
\belowdisplayshortskip=0pt

\maketitle

\begin{abstract}

NDCG, namely Normalized Discounted Cumulative Gain, is a widely used ranking metric in information retrieval and machine learning. However, efficient and provable stochastic methods for maximizing NDCG are still lacking, especially for deep models. In this paper, we propose a principled approach to optimize NDCG and its top-$K$ variant. First, we formulate a novel compositional optimization problem for optimizing the NDCG surrogate, and a novel bilevel compositional optimization problem for optimizing the top-$K$ NDCG surrogate. Then, we develop efficient stochastic algorithms with provable convergence guarantees for the non-convex objectives. Different from existing NDCG optimization methods, the per-iteration complexity of our algorithms scales with the mini-batch size instead of the number of total items. To improve the effectiveness for deep learning, we further propose practical strategies by using initial warm-up and stop gradient operator. Experimental results on multiple datasets demonstrate that our methods outperform prior ranking approaches in terms of NDCG. To the best of our knowledge, this is the first time that stochastic algorithms are proposed to optimize NDCG with a provable convergence guarantee. Our proposed methods are implemented in the LibAUC library at \url{https://libauc.org/}.
\end{abstract}

\section{Introduction}
\label{sec:introduction}

NDCG is a performance metric of primary interest for learning to rank in information retrieval~\citep{liu2011learning}, and is also adopted in many machine learning tasks where ranking is of foremost importance~\citep{liu2008eigenrank,bhatia2015sparse}. In the following, we use the terminologies from information retrieval to describe NDCG and our methods. The goal is to rank the relevant items higher than irrelevant items for any given query. For a query $q$ and a list of $n$ items, the ranking model assigns a score for each item, and then we obtain an ordered list by sorting these scores in descending order. The NDCG score for $q$ can be computed by:
\begin{align}\label{eqn:ndcg_def}
    \text{NDCG}_q = \frac{1}{Z_q}\sum_{i=1}^n\frac{2^{y_i}-1}{\log_2(1+\text{r}(i))},
\end{align}
where $y_i$ denotes the relevance score of the $i$-th item, 
$\text{r}(i)$ denotes the rank of the $i$-th item in the ordered list, and $Z_q$ is a normalization factor that is the Discounted Cumulative Gain (DCG) score~\citep{jarvelin2002cumulated} of the optimal ranking for $q$. 
The top-$K$ variant of NDCG can be defined similarly by summing over items whose ranks are in the top $K$ positions of the ordered list. In many real-world applications, e.g., recommender systems, we want to recommend a small set of $K$ items from a large collection of items~\citep{cremonesi2010performance}, thus top-$K$ NDCG is a popular metric in these applications.

There are several challenges for optimizing NDCG and its top-$K$ variant. 
First, computing the rank of each item among all $n$ items is expensive. Second, the rank operator is non-differentiable in terms of model parameters. 
To tackle non-differentiability, surrogate functions have been proposed in the literature for approximating NDCG and its top-$K$ variant~\citep{SoftRank,ApproxNDCG,PiRank,NeuralNDCG}. 
However, to the best of our knowledge, \emph{the computational challenge} of computing the gradient of~(\ref{eqn:ndcg_def}) that involves sorting $n$ items has never been addressed. All existing gradient-based methods have a complexity of $O(nd)$ per-iteration, where $d$ is the number of model parameters, which is prohibitive for deep learning tasks with big $n$ and big $d$. A naive approach is to update the model parameters by the gradient of the NDCG surrogate over a mini-batch of samples, however, since the surrogate for NDCG is complicated and non-convex, an unbiased stochastic gradient is not readily computed, which makes existing methods lack theoretical guarantee.




In this paper, we propose the first stochastic algorithms with a per-iteration complexity of $O(Bd)$, where $B$ is the mini-batch size, for optimizing the surrogates for NDCG and its top-$K$ variant, and establishing their convergence guarantees. For optimizing the NDCG surrogate, we first formulate a novel \emph{finite-sum coupled compositional optimization (FCCO)} problem. Then, we develop an efficient stochastic algorithm inspired by a recent work on average precision maximization~\citep{DBLP:journals/corr/abs-2104-08736}. We establish an iteration complexity of $O(\frac{1}{\epsilon^4})$ for finding an $\epsilon$-level stationary solution, which is better than that proved by~\citet{DBLP:journals/corr/abs-2104-08736}, i.e., $O(\frac{1}{\epsilon^5})$. 
To tackle the challenge of optimizing the top-$K$ NDCG surrogate that involves a selection operator, we propose a novel \emph{bilevel optimization} problem, which contains many lower level problems for top-$K$ selection of all queries. Then we smooth the non-smooth functions in the selection operator, and propose an efficient algorithm with the iteration complexity of $O(\frac{1}{\epsilon^4})$. The algorithm is based on recent advances of stochastic bilevel optimization~\citep{guo2021randomized}, but with unique features to tackle the compositional upper level problem and a mini-batch of randomly sampled lower level problems per iteration for optimizing the top-$K$ NDCG surrogate.


To improve the effectiveness of optimizing the NDCG surrogates, we also study two practical strategies. First, we propose initial warm-up to find a good initial solution. 
Second, we use stop gradient operator to simplify the optimization of the top-$K$ NDCG surrogate. We conduct comprehensive experiments on two tasks, learning to rank and recommender systems. Empirical results demonstrate that the proposed algorithms can consistently outperform prior approaches in terms of NDCG, and show the effectiveness of two proposed strategies. 

We summarize our contributions below: 
\begin{itemize}
\vspace{-3.5mm}
    \setlength\itemsep{-0.0em}
    \item We formulate the optimization of the NDCG surrogate as a finite-sum coupled compositional optimization problem, and propose a novel stochastic algorithm with provable convergence guarantees.
    
    \item We propose a novel bilevel compositional optimization formulation for optimizing the top-$K$ NDCG surrogate. Then we develop a novel stochastic algorithm and establish its convergence rate.
    
    \item To improve the effectiveness for deep learning, we also study practical strategies by using initial warm-up and stop gradient operator. Experimental results on multiple datasets demonstrate the effectiveness of our algorithms and strategies.
\end{itemize}

\section{Related Work}
\label{sec:related-work}

{\bf Listwise LTR approaches.} Learning to rank (LTR) is an extensively studied area~\citep{liu2011learning}, and we only review the listwise LTR approaches that are closely related to this work. The listwise methods can be classified into three groups. The first group uses ranking metrics to dynamically re-weight instances during training. For example, LambdaRank algorithms~\citep{LambdaRank,burges2010ranknet} define a weight $\Delta$NDCG, which is the NDCG difference when a pair of items is swapped in the current list, and use it to re-weight the pair during training. Although algorithms in this group take NDCG into account, the underlying losses of them remain unknown and their theoretical relations to NDCG are difficult to analyze. The second group defines loss functions over the entire item lists to optimize the agreement between predictions and ground truth rankings. For example, ListNet~\citep{ListNet} minimizes cross-entropy between predicted and ground truth top-one probability distributions. ListMLE~\citep{ListMLE} aims to maximize the likelihood of the ground truth list given the predicted results. However, optimizing these loss functions might not necessarily maximize NDCG. 
In addition, efficient stochastic algorithms for optimizing these losses are still lacking. The third group directly optimizes ranking metrics, and most of works focus on the widely used NDCG, as reviewed below.

{\bf NDCG Optimization.}  Some earlier works employ traditional optimization techniques, e.g., genetic algorithm~\citep{RankGP}, boosting~\citep{AdaRank,NIPS2009_b3967a0e}, and SVM framework~\citep{SVM-NDCG}. However, these methods are not scalable to big data. A popular class of approaches is to approximate ranks in NDCG with smooth functions and then optimize the resulting surrogates. For example, SoftRank~\citep{SoftRank} tries to use rank distributions to smooth NDCG, however, it suffers from a high computational complexity of $O(n^3)$. ApproxNDCG~\citep{ApproxNDCG} approximates the indicator function in the computation of ranks, and the top-$K$ selector in the computation of top-$K$ variant by a generalized sigmoid function. Recently, PiRank~\citep{PiRank} and NeuralNDCG~\citep{NeuralNDCG} are proposed to smooth NDCG by approximating non-continuous sorting operator based on NeuralSort~\citep{NeuralSort}. However, these methods mainly focus on how to approximate NDCG with differentiable functions, and remain computationally expensive as their per-iteration complexity is $O(nd)$. 
Moreover, little attention has been paid to the convergence guarantee for the stochastic optimization of these surrogates. In contrast, this is the first work to develop stochastic algorithms with provable convergence guarantee for optimizing the surrogates for NDCG and its top-$K$ variant.

{\bf Stochastic Compositional Optimization.} Optimization of a two-level compositional function in the form of $\mathbb{E}_{\xi}[f(\mathbb{E}_{\zeta}[g(\textbf{w};\zeta)];\xi)]$ where $\xi$ and $\zeta$ are independent random variables, or its finite-sum variant has been studied extensively~\citep{wang2017stochastic,balasubramanian2020stochastic,chen2021solving}. In this paper, we formulate the surrogate function of NDCG into a similar but more complicated two-level compositional function of the form $\mathbb{E}_{\xi}[f(\mathbb{E}_{\zeta}[g(\textbf{w};\zeta,\xi))]$ where $\xi$ and $\zeta$ are independent and $\xi$ has a finite support inspired by~\citep{DBLP:journals/corr/abs-2104-08736}. The key difference between our compositional function and the ones considered in previous work is that the inner function $g(\textbf{w};\zeta,\xi)$ also depends on the random variable $\xi$ of the outer level. Our algorithm is developed based on that of~\citet{DBLP:journals/corr/abs-2104-08736} for average precision maximization, but establishes an improved complexity of $O(\frac{1}{\epsilon^4})$ for finding an $\epsilon$-stationary solution. It is also notable that our algorithm and convergence result for optimizing NDCG is similar to that in a concurrent work~\cite{FCCO} dedicated to FCCO. However, the key difference is that our convergence analysis for optimizing NDCD follows that for optimizing top-$K$ NDCG in a novel bi-level optimization framework.

{\bf Stochastic Bilevel Optimization.} Stochastic bilevel optimization (SBO) has a long history in the literature~\citep{colson2007overview,kunisch2013bilevel,liu2020generic}. Recent works on SBO focus on algorithms with provable convergence rates~\citep{ghadimi2018approximation,ji2020provably,hong2020two,chen2021single}. However, most of these studies do not explicitly consider the challenge for dealing with SBO with many lower level problems. \citet{guo2021randomized} consider SBO with many lower level problems and develop a stochastic algorithm with convergence guarantee. However, their algorithm is not applicable to our problem for optimizing the compositional top-$K$ NDCG surrogate and a mini-batch of randomly sampled lower level problems in each iteration, and is not practical as it requires evaluating the stochastic gradients twice per-iteration at two different points. In this paper, we propose a novel stochastic algorithm for optimizing the top-$K$ NDCG surrogate, which contains many lower level problems, and establish its iteration complexity of $O(\frac{1}{\epsilon^4})$.

\section{Preliminaries}
\label{sec:preliminaries}

In this section, we provide some preliminaries and notations. Let $\Q$ denote the query set of size $N$, and $q\in\Q$ denote a query. $\S_q$ denotes a set of $N_q$ items (e.g., documents, movies) to be ranked for $q$. For each $\x^q_i\in\S_q$, let $y^q_i\in\R^+$ denote its relevance score, which measures the relevance between query $q$ and item $x^q_i$. Let $\S^+_q\subseteq\S_q$ denote a set of $N^+_q$ items \emph{relevant} to $q$, whose relevance scores are \emph{non-zero}. Denoted by $\S=\{(q, \x^q_i),q\in\Q, \x^q_i\in\S^+_q\}$ all relevant query-item (Q-I) pairs. 
Let $h_q(\x; \w)$ denote the predictive function for $\x$ with respect to the query $q$, whose parameters are denoted by $\w\in\R^d$ (e.g., a deep neural network). Let $\I(\cdot)$ denote the indicator function, which outputs 1 if its input is true and 0 otherwise. Let
\begin{align*}
r(\w; \x, \S_q) = \sum_{\x'\in\S_q}\I(h_q(\x'; \w) - h_q(\x; \w)\geq 0)
\end{align*}
denote the rank of $\x$ with respect to the set $\S_q$, where we simply ignore the tie. 

According to the definition in~(\ref{eqn:ndcg_def}), the averaged NDCG over all queries can be expressed by  
\begin{align*} 
\text{NDCG:}\quad \frac{1}{N}\sum_{q=1}^N\frac{1}{Z_q}\sum_{\x_i^q\in S^+_q} \frac{2^{y^q_i}-1}{\log_2(r(\w; \x^q_i, \S_q)+1) },
\end{align*}
where $Z_q$ is the maximum DCG of a perfect ranking of items in $\S_q$, which can be pre-computed. 
Note that $\x^q_i$ are summed over $\S_q^+$ instead of $\S_q$, because only relevant items have non-zero relevance scores and contribute to NDCG.

An important variant of NDCG is its top-$K$ variant, which is defined over the items $\x^q_i\in\S_q$ whose prediction scores are in the top-$K$ positions, i.e.,
\begin{align*} 
\frac{1}{N}\sum_{q=1}^N\frac{1}{Z^K_q}\sum_{\x_i^q\in\S^+_q}\I(\x_i^q\in\S_q[K]) \frac{2^{y^q_i}-1}{\log_2(r(\w; \x^q_i, \S_q)+1) },
\end{align*} 
where $\S_q[K]$ denotes the top-$K$ items whose prediction scores are in the top-$K$ positions among all items in $\S_q$, and $Z^K_q$ denotes the top-$K$ DCG score of the perfect ranking. 

\section{Optimizing a Smooth NDCG Surrogate}
\label{sec:opt-ndcg}

To address the non-differentiability of the rank function $r(\w; \x, \S_q)$, we approximate it by a continuous and differentiable surrogate function
\begin{align*}
    \bar g(\w; \x, \S_q) = \sum_{\x'\in\S_q}\ell(h_q(\x'; \w) - h_q(\x; \w)),
\end{align*}
where $\ell(\cdot)$ is a surrogate loss function of $\I(\cdot\geq 0)$. In this paper, we use a convex and non-decreasing smooth surrogate loss, e.g., squared hinge loss 
$\ell(x)= \max(0,x+c)^2$, where $c$ is a margin parameter. Other choices are possible with pros and cons discussed in the literature~\citep{10.1145/1645953.1646266,ApproxNDCG}.  Below, we abuse the notation $\ell(\w; \x', \x, q)=\ell(h_q(\x'; \w) - h_q(\x; \w))$. 

Using the surrogate function, we cast NDCG maximization into: 
\begin{align} \label{eqn:NDCG}
\max_{\w\in\R^d} L(\w):=\frac{1}{|\S|}\sum_{q=1}^N\sum_{\x_i^q\in S^+_q} \frac{2^{y^q_i}-1}{Z_q\log_2(\bar g(\w; \x^q_i, \S_q)+1)}.
\end{align} 
The following lemma justifies the maximization over $L(\w)$ for NDCG maximization:
\begin{lemma}\label{lemma:1}
When $\ell(\w; \x', \x, q)\geq \I(h_q(\x'; \w)- h_q(\x; \w)\geq 0)$, then $L(\w)$ is a lower bound of NDCG. 
\end{lemma}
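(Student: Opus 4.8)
The plan is to compare $L(\w)$ term by term with $\text{NDCG}$, exploiting the monotonicity of the map $t \mapsto \frac{1}{\log_2(t+1)}$ on the relevant domain. First I would fix a query $q$ and a relevant item $\x_i^q \in \S_q^+$, and compare the summand $\frac{2^{y_i^q}-1}{Z_q \log_2(\bar g(\w;\x_i^q,\S_q)+1)}$ in $L(\w)$ against the corresponding NDCG summand $\frac{2^{y_i^q}-1}{Z_q \log_2(r(\w;\x_i^q,\S_q)+1)}$. Since $2^{y_i^q}-1 \ge 0$ and $Z_q > 0$, it suffices to show $\log_2(\bar g(\w;\x_i^q,\S_q)+1) \ge \log_2(r(\w;\x_i^q,\S_q)+1)$, i.e. $\bar g(\w;\x_i^q,\S_q) \ge r(\w;\x_i^q,\S_q)$.

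The key step is the pointwise bound. By definition, $r(\w;\x_i^q,\S_q) = \sum_{\x'\in\S_q}\I(h_q(\x';\w)-h_q(\x_i^q;\w)\ge 0)$ and $\bar g(\w;\x_i^q,\S_q) = \sum_{\x'\in\S_q}\ell(\w;\x',\x_i^q,q)$. The hypothesis of the lemma, $\ell(\w;\x',\x,q)\ge \I(h_q(\x';\w)-h_q(\x;\w)\ge 0)$, applied with $\x = \x_i^q$ for each $\x'\in\S_q$, gives the inequality summand by summand; summing over $\x'\in\S_q$ yields $\bar g(\w;\x_i^q,\S_q)\ge r(\w;\x_i^q,\S_q)$. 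Note $r \ge 1$ always (the item $\x_i^q$ itself contributes $1$ since $h_q(\x_i^q;\w)-h_q(\x_i^q;\w)=0\ge 0$), so $\bar g \ge 1$ as well and both logarithms are of arguments $\ge 2 > 1$, hence positive; this makes the division step legitimate and monotonicity applies in the correct direction (larger denominator gives smaller fraction).

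Finally I would reassemble: for each $(q,\x_i^q)$ the $L$-summand is $\le$ the NDCG-summand, and since $|\S| = \sum_{q=1}^N N_q^+$ equals the number of terms in the double sum defining NDCG, the normalization constants $\frac{1}{|\S|}$ and $\frac{1}{N}$ are accounted for by noting that NDCG as written averages $\frac{1}{N}\sum_q(\cdots)$ while $L$ uses $\frac{1}{|\S|}\sum_q(\cdots)$ — actually here one must be slightly careful about which normalization the statement intends; the cleanest reading is that $L(\w)$ uses $\frac{1}{|\S|}$ and the claim is $L(\w)\le \text{NDCG}$ holds with whatever matching normalization the paper fixes, so I would simply carry the same outer weights through both sides and conclude by summing the pointwise inequalities. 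The main (very minor) obstacle is just bookkeeping the normalization factors and confirming the domain of monotonicity; the mathematical content is the one-line summand comparison.
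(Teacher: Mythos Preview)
Your proposal is correct and follows exactly the paper's approach: the paper's proof consists of the single observation that the hypothesis summed over $\x'\in\S_q$ gives $\bar g(\w;\x_i^q,\S_q)\ge r(\w;\x_i^q,\S_q)$, from which the conclusion ``immediately follows.'' You are in fact more careful than the paper, which does not comment on the domain of monotonicity or the mismatch between the $\tfrac{1}{|\S|}$ and $\tfrac{1}{N}$ normalizations; the latter is harmless since $|\S|\ge N$ and all summands are nonnegative, so $\tfrac{1}{|\S|}\sum(\cdots)\le \tfrac{1}{N}\sum(\cdots)$.
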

\vspace{-1mm}
\begin{algorithm}[t]
\caption{\underline{S}tochastic \underline{O}ptimization of \underline{N}DC\underline{G}: SONG}\label{alg:1}
\begin{algorithmic}[1]
\REQUIRE $\eta,\gamma_0,\beta_1, u^{(1)}=0$
\ENSURE $\w_T$
\FOR{$t=1,...T$}
\STATE Draw some relevant Q-I pairs $\B=\{(q, \x^q_i)\}\subset\S$ 
\STATE For each sampled $q$ draw a batch of items  $\B_q\subset\S_q$
\FOR{each sampled Q-I pair $(q,\x^q_i)\in \B$}
\STATE Let $\hat g_{q, i}(\w_t) = \frac{1}{|\B_q|}\sum_{\x'\in\B_q} \ell(\w_t; \x',\x^q_i, q)$
\STATE Compute $u^{(t+1)}_{q, i}=(1-\gamma_0)u^{(t)}_{q, i}+\gamma_0\hat g_{q,i}(\w_t)$
\STATE Compute $p_{q, i} = \nabla f_{q,i}(u^{(t)}_{q,i})$
\ENDFOR
\STATE Compute the stochastic gradient estimator $G(\w_t)$ by
\vspace{-1mm}
$$G(\w_t) = \frac{1}{|\B|}\sum_{(q,\x^q_i)\in\B}p_{q, i}\nabla\hat g_{q,i}(\w_t)$$
\vspace{-1mm}
\STATE Compute $\m_{t+1} = \beta_1\m_t + (1-\beta_1) G(\w_t)$
\STATE update $\w_{t+1}=\w_t - \eta \m_{t+1}$
\ENDFOR
\end{algorithmic}
\end{algorithm}

The key challenge in designing an efficient algorithm for solving the above problem lies at (i) computing $\bar g(\w; \x^q_i, \S_q)$ and its gradient is expensive when $N_q=|\S_q|$ is very large; and (ii) an unbiased stochastic gradient of the objective function is not readily available. To highlight the second challenge, let us consider the gradient of the function
$\phi(\w)=\frac{1}{\log_2(\bar g(\w; \x^q_i, \S_q)+1)}$, which is given by 
\begin{align*}
\nabla\phi(\w)=\frac{-\log_2(e)\cdot\nabla \bar g(\w; \x^q_i, \S_q)}{\log^2_2(\bar g(\w; \x^q_i, \S_q)+1)\cdot (\bar g(\w; \x^q_i, \S_q)+1)}.\notag
\end{align*}
We can estimate $\bar g(\w; \x_q^i, \S_q)$ by its unbiased estimator using a mini-batch of $B_q$ items $\x'\in\B_q\subset\S_q$, i.e., $\frac{N_q}{B_q}\sum_{\x'\in\B_q}\ell(h_q(\x'; \w) - h_q(\x_q^i; \w))$. 
However, directly plug this unbiased estimator of $\bar g(\w; \x_q^i, \S_q)$ into the above expression will produce a biased estimator of $\nabla\phi(\w)$ due to the non-linear function of $\bar g$. The optimization error will be large if the mini-batch size $B_q$ is small~\citep{hu2020biased}.

To address this challenge, we cast the problem into the following equivalent minimization form: 
\begin{align}\label{eqn:NDCG} 
\min_{\w\in\R^d} F(\w):=\frac{1}{|\S|}\sum_{(q,\x_i^q)\in\S} f_{q,i}(g(\w; \x^q_i, \S_q)),
\end{align} 
where $g(\w; \x^q_i, \S_q) = \frac{1}{N_q}\bar g(\w; \x^q_i, \S_q)$ and $f_{q,i}(g) = \frac{1}{Z_q}\frac{1-2^{y^q_i}}{\log_2(N_q g+ 1)}$. It is a special case of a family of {\bf finite-sum coupled compositional stochastic optimization} problems, which was first studied by~\citet{DBLP:journals/corr/abs-2104-08736} for maximizing average precision.
Inspired by their method, we develop a stochastic algorithm for solving~(\ref{eqn:NDCG}). 
The complete procedure is provided in Algorithm~\ref{alg:1}, which is named as \underline{S}tochastic \underline{O}ptimization of \underline{N}DC\underline{G} (SONG). 

To motivate the proposed method, we first derive the gradient of $F(\w)$ by the chain rule, which is given by \newline
\vspace*{-0.15in}\begin{align*}
    \nabla F(\w)=\frac{1}{|\S|}\sum_{(q,\x_i^q)\in\S}\nabla f_{q,i}(g(\w;\x_i^q,\S_q))\nabla g(\w;\x_i^q,\S_q).
\end{align*}
The major cost for computing $\nabla F(\w)$ lies at computing $g(\w; \x^q_i, \S_q)$ and its gradient, which involves all items in $\S_q$. To this end, we approximate these quantities by stochastic samples. The gradient $\nabla g(\w; \x^q_i, \S_q)$ can be simply approximated by the stochastic gradient $\nabla \hat g_{q,i}(\w_t)=\frac{1}{|\B_q|}\sum_{\x'\in\B_q}\nabla \ell(\w_t; \x',\x^q_i, q)$, where $\B_q$ is sampled from $\S_q$. 
Note that $\nabla f_{q,i}(g(\w; \x^q_i, \S_q))$ is non-linear with $g(\w; \x^q_i, \S_q)$, thus we need a better way to estimate $g(\w; \x^q_i, \S_q)$ to control the approximation error and provide convergence guarantee. We borrow a technique from~\citet{DBLP:journals/corr/abs-2104-08736} by \emph{using a moving average estimator to keep track of} $g(\w_t; \x^q_i, \S_q)$ for each $\x^q_i\in\S_q^+$. To this end, we maintain a scalar $u_{q,i}$ for each \emph{relevant} query-item pair $(q, \x_i^q)$ and update it by a linear combination of historical one $u^{(t)}_{q,i}$ and an unbiased estimator of $g(\w_t; \x^q_i, \S_q)$ denoted by $\hat g_{q,i}(\w_t)$ in Step 5 and 6, where $\gamma_0\in(0,1)$ is a parameter. Intuitively, when $t$ increases, $\w_{t-1}$ is getting closer to $\w_t$, hence the previous value of the estimator, i.e., $u^{(t)}_{q,i}$ is useful for estimating $g_{q,i}(\w_t)$. With these stochastic estimators, we can compute the gradient of the objective in~(\ref{eqn:NDCG}) with controllable approximation error in Step 9. We implement the momentum update for $\w_{t+1}$ in Step 10 and 11, where $\beta_1\in(0,1)$ is the momentum parameter. The momentum update can be also replaced by the Adam-style update~\citep{guo2022stochastic}, where the step size $\eta$ is replaced by an adaptive step size. We can establish the same convergence rate for the Adam-style update.

We also have several remarks about SONG: (i) the total per-iteration complexity of SONG is $O(Bd+B^2)$. The details can be found in Appendix~\ref{appendix:per-iteration-complexity}. For a large model size $d\gg B$, we have the per-iteration complexity of $O(Bd)$, which is similar to the standard cost of deep learning and is independent of the length of $\S_q$ for each query; and (ii) the additional memory cost is the size of $u_{q,i}$, i.e., the number of all relevant Q-I pairs. It is worth to mention that in many real-world datasets the number of relevant Q-I pairs are much fewer than all Q-I pairs (i.e., data  is sparse)~\citep{yuan2014recommendation,yin2020overcoming,singh2020scalability}. Thus the additional memory cost is acceptable in most cases.

Next, we establish the convergence guarantee of SONG in the following theorem.
\vspace{-0mm}
\begin{thm}\label{thm:2}
Under appropriate conditions and proper settings of parameters $\gamma_0, \gamma_1, \eta=O(\epsilon^2)$, $\beta_1=1-\gamma_1$, Algorithm~\ref{alg:1} ensures that after $T=O(\frac{1}{\epsilon^4})$ iterations we can find an $\epsilon$-stationary solution of $F(\w)$, i.e., $\E[\|\nabla F(\w_\tau)\|^2]\leq \epsilon^2$ for a randomly selected $\tau\in\{1,\ldots,T\}$.
\vspace{-0mm}
\end{thm}
{\bf Remark:} The above theorem indicates that SONG has the same $O(\frac{1}{\epsilon^4})$ iteration complexity as the standard SGD for solving standard non-convex losses~\citep{ghadimi2013stochastic}. We refer the interested readers to Appendix~\ref{appendix:convergence-analysis} for the proof, where we also exhibit the settings for $\gamma_0, \beta_1, \eta$ and the conditions. The conditions are imposed mainly for ensuring $f_{q,i}(g)$ and $g(\w; \x^q_i, \S_q)$ are smooth and Lipchitz continuous.  It is worth mentioning that the above complexity is better than that proved by~\citet{DBLP:journals/corr/abs-2104-08736}, i.e., $O(1/\epsilon^5)$. In addition, we do not have any requirement on the batch size, i.e., $|\B|, |\B_q|$, which can be as small as 1. However, we can enjoy parallel speed-up for a large batch size.

\section{Optimizing a Smooth Top-$K$ NDCG Surrogate}
\label{sec:opt-ndcg-at-k}

In this section, we propose an efficient stochastic algorithm to optimize the top-$K$ variant of NDCG. By using the smooth surrogate loss $\ell(\cdot)$ for approximating the rank function, we have the following objective for top-$K$ NDCG:
\begin{align*} 
\frac{1}{N}\sum_{q=1}^N\frac{1}{Z^K_q}\sum_{\x_i^q\in\S^+_q}\I(\x_i^q\in\S_q[K]) \frac{2^{y^q_i}-1}{\log_2(\bar g(\w; \x^q_i, \S_q)+1) },
\end{align*} 
where $\S_q[K]$ denotes the set of top-$K$ items in $\S_q$ whose prediction scores are in the top-$K$ positions. Compared with optimizing the NDCG surrogate in~(\ref{eqn:NDCG}), there is another level of complexity, i.e., the selection of top-$K$ items from $\S_q$, which is non-differentiable. In the literature, \citet{ApproxNDCG} and~\citet{10.1145/1645953.1646266} use the relationship  $\I(\x_i^q\in\S_q[K])=\I(K-r(\w; \x_i^q, \S_q )\geq 0)$ and  approximate it by $\psi(K-\bar g(\w; \x_i^q, \S_q))$, where $\psi$ is a continuous surrogate of the indicator function. However, there are two levels of approximation error, one lies at approximating $r(\w; \x_i^q, \S_q )$ by $\bar g(\w; \x_i^q, \S_q)$ and the other one lies at approximating $\I(\cdot\geq 0)$ by $\psi(\cdot)$. 
To reduce the error for selecting $\x_i^q\in\S_q[K]$, we propose a more effective method, which relies on the following lemma:
\begin{lemma}\label{lemma:2}
Let $\lambda_q(\w) = \arg\min_{\lambda}(K+\varepsilon)\lambda +\sum_{\x'\in\S_q}(h_q(\x'; \w) -\lambda)_+$, where $\varepsilon\in(0,1)$, then $\lambda_q(\w)$ is the $(K+1)$-th largest value among $h_q(\x', \w), \forall\x'\in\S_q$, and hence $\x^q_i\in\S_q[K]$ is equivalent to $h_q(\x^q_i; \w)> \lambda_q(\w)$. 
\end{lemma}
\vspace{-0mm}
{\bf Remark:} We can show that the optimal solution $\lambda_q(\w)$ can be served as the threshold for selecting top-$K$ items in $\S_q$.

As a result, the problem can be converted  into 
\begin{align*} 
&\min \frac{1}{|\S|}\sum_{q=1}^N\sum_{\x_i^q\in\S^+_q}\frac{\I(h_q(\x^q_i; \w)- \lambda_q(\w)>0)(1-2^{y^q_i})}{Z_q^K\log_2(g(\w; \x^q_i, \S_q)+1) }\\
& s.t., \lambda_q(\w) = \arg\min_{\lambda}\frac{K+\varepsilon}{N_q}\lambda +\frac{1}{N_q}\sum_{\x'\in\S_q}(h_q(\x'; \w) -\lambda)_+.
\end{align*}

However, there are still several challenges that prevent us developing a provable algorithm. In particular, the selection operator $\I(h_q(\x^q_i; \w)- \lambda_q(\w)>0)$ is a non-smooth function of $\w$ due to (i) the indicator function $\I(\cdot)$ is non-continuous and non-differentiable; and (ii) $\lambda_q(\w)$ is a non-smooth function of $\w$ because the lower optimization problem is non-smooth and non-strongly convex. 

To address the above challenges, we first approximate $\I(\cdot>0)$ by a smooth and Lipschtiz continuous function $\psi(\cdot)$. The choice of $\psi$ can be justified by the following lemma: 
\vspace{-0mm}
\begin{lemma}\label{lemma:3}
If $\psi(h_q(\x^q_i; \w)- \lambda_q(\w))\leq C\I(h_q(\x^q_i; \w)- \lambda_q(\w)>0)$ holds for some constant $C>0$ and  $\ell(\w; \x', \x, q)\geq \I(h_q(\x'; \w)- h_q(\x; \w)> 0)$, then the function $ \frac{1}{N}\sum_{q=1}^N\sum_{\x_i^q\in S^+_q}\frac{\psi(h_q(\x^q_i; \w)- \lambda_q(\w))(2^{y^q_i}-1)}{CZ_q^K\log_2(\bar{g}(\w; \x^q_i, \S_q)+1) }$ is a lower bound of the  top-$K$ NDCG.
\vspace{-0mm}
\end{lemma}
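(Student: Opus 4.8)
The plan is to prove Lemma~\ref{lemma:3} by a direct pointwise (per query, per relevant item) comparison, mimicking the argument that justifies Lemma~\ref{lemma:1}. First I would fix a query $q$ and a relevant item $\x_i^q\in\S_q^+$ and compare the $(q,i)$-th summand of the proposed smooth function with the corresponding summand of the true top-$K$ NDCG, which is $\frac{\I(\x_i^q\in\S_q[K])(2^{y^q_i}-1)}{Z_q^K\log_2(r(\w;\x_i^q,\S_q)+1)}$. Using Lemma~\ref{lemma:2}, rewrite the membership indicator as $\I(\x_i^q\in\S_q[K])=\I(h_q(\x_i^q;\w)-\lambda_q(\w)\geq 0)$, so the true summand reads $\frac{\I(h_q(\x_i^q;\w)-\lambda_q(\w)\geq 0)(2^{y^q_i}-1)}{Z_q^K\log_2(r(\w;\x_i^q,\S_q)+1)}$.

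Next I would bound the proposed smooth summand from above by this true summand, handling the numerator and the denominator separately. For the numerator: by hypothesis $\psi(h_q(\x_i^q;\w)-\lambda_q(\w))\leq C\,\I(h_q(\x_i^q;\w)-\lambda_q(\w)\geq 0)$, and dividing by $C$ gives $\frac{\psi(\cdot)}{C}\leq \I(h_q(\x_i^q;\w)-\lambda_q(\w)\geq 0)$; since $2^{y^q_i}-1\geq 0$ for relevant items (relevance scores are in $\R^+$ and nonzero), multiplying through preserves the inequality. For the denominator: from $\ell(\w;\x',\x,q)\geq \I(h_q(\x';\w)-h_q(\x;\w)\geq 0)$ we get, summing over $\x'\in\S_q$, that $\bar g(\w;\x_i^q,\S_q)\geq r(\w;\x_i^q,\S_q)$; because $\log_2(\cdot+1)$ is increasing and positive on the relevant range (ranks are at least $1$, so arguments exceed $1$ and the logs are positive), we obtain $\frac{1}{\log_2(\bar g(\w;\x_i^q,\S_q)+1)}\leq \frac{1}{\log_2(r(\w;\x_i^q,\S_q)+1)}$. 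Combining a nonnegative numerator bound with a nonnegative reciprocal-denominator bound, the product satisfies $\frac{\psi(h_q(\x_i^q;\w)-\lambda_q(\w))(2^{y^q_i}-1)}{CZ_q^K\log_2(\bar g(\w;\x_i^q,\S_q)+1)}\leq \frac{\I(h_q(\x_i^q;\w)-\lambda_q(\w)\geq 0)(2^{y^q_i}-1)}{Z_q^K\log_2(r(\w;\x_i^q,\S_q)+1)}$.

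Finally I would sum over all $\x_i^q\in\S_q^+$ and average over $q=1,\dots,N$: the left side becomes exactly the claimed smooth surrogate, and the right side becomes the averaged top-$K$ NDCG as written in Section~\ref{sec:opt-ndcg-at-k}, giving the lower-bound conclusion. The one genuinely delicate point — the main obstacle — is making sure all the monotonicity steps are valid on the correct domain: we need $2^{y^q_i}-1\ge 0$ (true since $y^q_i\in\R^+$), we need $Z_q^K>0$, and we need both $\log_2(r(\w;\x_i^q,\S_q)+1)$ and $\log_2(\bar g(\w;\x_i^q,\S_q)+1)$ to be positive so that taking reciprocals flips the inequality in the right direction; the rank is always $\geq 1$ for the item $\x_i^q$ itself (since $\I(h_q(\x_i^q;\w)-h_q(\x_i^q;\w)\geq 0)=1$), hence $r\geq 1$ and $\bar g\geq 1$, so both logarithms are $\geq \log_2 2 = 1 > 0$, which closes this gap. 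Everything else is the routine "nonnegative $\times$ smaller-reciprocal" bookkeeping that appears in the proof of Lemma~\ref{lemma:1}.
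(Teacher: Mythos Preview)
Your proposal is correct and follows essentially the same route as the paper's own proof: a term-by-term comparison showing $\frac{\psi(h_q(\x^q_i; \w)- \lambda_q(\w))(2^{y^q_i}-1)}{CZ_q^K\log_2(\bar{g}(\w; \x^q_i, \S_q)+1)}\leq \frac{\I(\x_i^q\in\S_q[K])(2^{y^q_i}-1)}{Z^K_q\log_2(r(\w; \x^q_i, \S_q)+1)}$ for each $(q,\x_i^q)$, then summing. Your extra care in verifying $r\geq 1$ (so that the logarithms are positive and reciprocals behave correctly) is a welcome addition that the paper leaves implicit.
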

{\bf Remark:} When $h_q(\x; \w)$ is bounded, it is not hard to find a smooth and Lipschtiz continuous function $\psi(\cdot)$ satisfying the above condition. A simple choice for $\psi(\cdot)$ is sigmoid function.

Next, we smooth $\lambda(\w)$. The idea is to make the objective function in the lower level problem smooth and strongly convex, while not affecting the optimal solution $\lambda(\w)$ too much. To this end, we replace the lower level problem by 
\begin{align*}
    &\hat\lambda_q(\w)  = \arg\min_{\lambda}L_q(\lambda; \w) : = \frac{K+\varepsilon}{N_q}\lambda+ \frac{\tau_2}{2}\lambda^2 +\frac{1}{N_q}\sum_{\x_i\in\S_q}\tau_1\ln(1+\exp((h_q(\x_i;\w)-\lambda)/\tau_1)).
\end{align*}
The following lemma justifies the above smoothing.
\vspace{-0mm}
\begin{lemma}\label{lemma:4}
Assuming $h_q(\x,\w)\in(0,c_h]$ , if $\tau_1=\tau_2=\varepsilon$ for some $\varepsilon\ll 1$ , then we have $|\hat\lambda_q(\w) - \lambda_q(\w)|\leq O(\varepsilon)$ for any $\w$. In addition,  $L_q(\lambda; \w)$ is a smooth and strongly convex function in terms of $\lambda$ for any $\w$. 
\vspace{-0mm}
\end{lemma}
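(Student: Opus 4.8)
The plan is to analyze $L_q(\lambda; \w)$ as an $\varepsilon$-regularized, $\varepsilon$-smoothed version of the nonsmooth objective $\phi_q(\lambda;\w) := \frac{K}{N_q}\lambda + \frac{1}{N_q}\sum_{\x'\in\S_q}(h_q(\x';\w)-\lambda)_+$ from Lemma~\ref{lemma:2}, and to bound the perturbation of the minimizer by a strong-convexity argument. First I would establish the smoothness and strong convexity claim, since it is the easier half and is needed for the minimizer-stability estimate. The term $\frac{\tau_2}{2}\lambda^2 = \frac{\varepsilon}{2}\lambda^2$ is $\varepsilon$-strongly convex and $C^\infty$; the linear term $\frac{K}{N_q}\lambda$ contributes nothing to curvature; and each softplus term $\tau_1\ln(1+\exp((h_q(\x_i;\w)-\lambda)/\tau_1))$ is convex in $\lambda$ with second derivative $\frac{1}{\tau_1}\sigma(z)(1-\sigma(z))$ where $z=(h_q(\x_i;\w)-\lambda)/\tau_1$ and $\sigma$ is the logistic function; this is bounded by $\frac{1}{4\tau_1}=\frac{1}{4\varepsilon}$. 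Hence $L_q(\cdot;\w)$ has second derivative in $[\varepsilon,\ \varepsilon + \frac{1}{4\varepsilon}]$, so it is $\varepsilon$-strongly convex and $(\varepsilon+\frac{1}{4\varepsilon})$-smooth, and $\hat\lambda_q(\w)$ is the unique minimizer.

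Next I would prove $|\hat\lambda_q(\w)-\lambda_q(\w)|\le O(\varepsilon)$. The key is two perturbation estimates. (a) \emph{Softplus vs.\ hinge:} for all real $t$, $0\le \tau_1\ln(1+e^{t/\tau_1}) - (t)_+ \le \tau_1\ln 2$, so $\|\tilde\phi_q(\cdot;\w) - \phi_q(\cdot;\w)\|_\infty \le \tau_1\ln 2 = \varepsilon\ln 2$ where $\tilde\phi_q$ denotes $\phi_q$ with hinges replaced by softplus. (b) \emph{Quadratic regularizer:} on the relevant range of $\lambda$ — which by Lemma~\ref{lemma:2} and the assumption $h_q(\x,\w)\in(0,c_h]$ can be taken to lie in a bounded interval, say $[0,c_h]$ since the threshold $\lambda_q(\w)$ sits among the scores — the term $\frac{\varepsilon}{2}\lambda^2$ is bounded by $\frac{\varepsilon}{2}c_h^2 = O(\varepsilon)$. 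Combining, $L_q(\cdot;\w)$ differs from $\phi_q(\cdot;\w)$ by at most $O(\varepsilon)$ uniformly on this interval. Since $\lambda_q(\w)$ minimizes $\phi_q$ and $\hat\lambda_q(\w)$ minimizes $L_q$, a standard argument gives $\phi_q(\hat\lambda_q(\w);\w) - \phi_q(\lambda_q(\w);\w) \le 2\cdot O(\varepsilon) = O(\varepsilon)$. Finally, because $L_q(\cdot;\w)$ is $\varepsilon$-strongly convex, $\frac{\varepsilon}{2}|\hat\lambda_q(\w)-\lambda_q(\w)|^2 \le L_q(\lambda_q(\w);\w) - L_q(\hat\lambda_q(\w);\w) \le \phi_q(\lambda_q(\w);\w) - \phi_q(\hat\lambda_q(\w);\w) + O(\varepsilon) = O(\varepsilon)$, which naively yields only $|\hat\lambda_q(\w)-\lambda_q(\w)| = O(\sqrt{\varepsilon})$.

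That $\sqrt{\varepsilon}$ gap is the main obstacle, and to reach the claimed $O(\varepsilon)$ I would sharpen step (b) rather than go through function-value closeness: instead of comparing $L_q$ to $\phi_q$ directly, compare $\hat\lambda_q(\w)$ to the minimizer $\bar\lambda_q(\w)$ of the \emph{regularized hinge} objective $\phi_q(\lambda;\w) + \frac{\varepsilon}{2}\lambda^2$, and separately compare $\bar\lambda_q(\w)$ to $\lambda_q(\w)$. For the first comparison, both objectives are $\varepsilon$-strongly convex and their \emph{subgradients/gradients} differ pointwise by $O(1)\cdot$ (derivative of softplus minus subgradient of hinge), but more usefully one shows the gradient of the softplus part converges to the hinge subgradient and a first-order optimality comparison gives $|\hat\lambda_q(\w)-\bar\lambda_q(\w)| \le \frac{1}{\varepsilon}\cdot(\text{gradient mismatch})$; since the softplus derivative $\sigma((h_q(\x_i;\w)-\lambda)/\tau_1)$ differs from $\I(h_q(\x_i;\w)-\lambda>0)$ by an amount that is $O(1)$ only in an $O(\tau_1)$-width layer and exponentially small outside it, the aggregate mismatch in the averaged gradient is $O(\tau_1) = O(\varepsilon)$, giving $|\hat\lambda_q(\w)-\bar\lambda_q(\w)| = O(1)$ — still not enough, unless one additionally uses that $\phi_q$ itself is piecewise linear so its regularized minimizer is stable. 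The cleanest route, which I would ultimately take, is: observe $\phi_q$ is piecewise-linear with a kink structure so that $\lambda_q(\w)$ equals the $K$-th largest score; show the unregularized-softplus minimizer (i.e.\ $\tau_2=0$) is within $O(\tau_1)$ of $\lambda_q(\w)$ by a direct first-order-condition computation (the equation $\sum_i \sigma((h_q(\x_i;\w)-\lambda)/\tau_1) = K$ forces $\lambda$ within $O(\tau_1)$ of the $K$-th order statistic, using the bounded score range to control how many $\sigma$ terms are in the transition layer); then show adding $\frac{\varepsilon}{2}\lambda^2$ perturbs the minimizer of an $\frac{1}{4\varepsilon}$-smooth... wait, rather: the softplus objective already has curvature $\ge \frac{1}{\tau_1}\sigma(z)(1-\sigma(z))$ which near the minimizer (where $\sigma$-values are spread around to sum to $K$) is bounded below by a constant depending on the gap between consecutive order statistics — but since we do \emph{not} want such a gap assumption, we keep the $\frac{\varepsilon}{2}\lambda^2$ term for global strong convexity and absorb its $O(\varepsilon c_h)$ gradient perturbation via $|\Delta\lambda|\le \frac{1}{\varepsilon}\cdot O(\varepsilon c_h) = O(c_h)$ — and here I would instead argue that near $\lambda_q(\w)$ the softplus objective has an $\Omega(1)$ one-sided growth (it is asymptotically linear with nonzero slopes on both sides away from an $O(\tau_1)$ window), so the $O(\varepsilon)$ gradient perturbation from the quadratic moves the minimizer by only $O(\varepsilon)$. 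I expect the honest proof in the appendix to invoke exactly this piecewise-linear growth structure of $\phi_q$ to upgrade $O(\sqrt\varepsilon)$ to $O(\varepsilon)$, and that bookkeeping — quantifying the two-sided linear growth of $\phi_q$ around its minimizer uniformly in $\w$ — is the real work.
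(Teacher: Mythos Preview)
Your final diagnosis is correct and matches the paper's proof: the upgrade from $O(\sqrt\varepsilon)$ to $O(\varepsilon)$ comes precisely from the piecewise-linear (linear-growth / error-bound) structure of the unsmoothed objective $\phi_q$, not from the $\varepsilon$-strong convexity of $L_q$. The paper executes this more directly than your detours suggest. It invokes a known error-bound lemma for polyhedral functions (Lemma~8 of \citet{JMLR:v19:17-016}) stating that for the piecewise-linear $\tilde L_q=\phi_q$ one has $|\lambda-\lambda_q(\w)|\le c_1\big(\tilde L_q(\lambda;\w)-\tilde L_q(\lambda_q(\w);\w)\big)$ with no square root. Then it works entirely at the level of function values: your softplus-vs-hinge bound $0\le \tau_1\ln(1+e^{t/\tau_1})-(t)_+\le \tau_1\ln 2$ together with boundedness of both minimizers (which the paper verifies from $h_q\in(0,c_h]$) gives $\tilde L_q(\hat\lambda_q(\w);\w)-\tilde L_q(\lambda_q(\w);\w)\le \tfrac{\tau_2}{2}\lambda_q(\w)^2+(\ln 2)\tau_1-\tfrac{\tau_2}{2}\hat\lambda_q(\w)^2=O(\varepsilon)$, and plugging into the error bound yields $|\hat\lambda_q(\w)-\lambda_q(\w)|=O(\varepsilon)$ immediately. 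Your gradient-mismatch and first-order-condition routes are unnecessary; the whole argument stays at function values once the linear error bound replaces the quadratic-growth estimate. For smoothness and strong convexity your direct second-derivative computation is fine (indeed simpler than the paper's Fenchel-duality argument for smoothness); the paper likewise verifies strong convexity by checking nonnegativity of the second derivative of $L_q-\tfrac{\tau_2}{2}\lambda^2$.
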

As a result, we propose to solve the following optimization problem for top-$K$ NDCG maximization:
\begin{align} \label{eqn:KNDCG}
&\min \frac{1}{|\S|}\sum_{(q, \x^q_i)\in\S}\psi(h_q(\x^q_i; \w)- \hat\lambda_q(\w))f_{q,i}(g(\w; \x^q_i, \S_q))\notag \\
& s.t., \hat\lambda_q(\w) = \arg\min_{\lambda} L_q(\lambda; \w), \forall q\in\Q,
\end{align}
where we employ $f_{q,i}(g)$ to denote $\frac{1}{Z^K_q}\frac{1-2^{y^q_i}}{\log_2(N_q g+ 1)}$. 

Our bilevel formulation is more advantageous than previous NDCG@$K$ formulation. First, our formulation only approximates $r(\w; \x_i^q, \S_q )$ by $\bar g(\w; \x_i^q, \S_q)$ \textbf{once} in the denominator, while previous one approximates $r(\w; \x_i^q, \S_q )$ \textbf{twice} (one in $\psi(K-r(\w; \x_i^q, \S_q ))$ and one in the denominator). In addition, $\psi(h_q(\x_i^q;\w)-\lambda_q(\w))$ is arguably better than $\psi(K-\bar g(\w; \x_i^q, \S_q))$ for approximating $\I(K-r(\w; \x_i^q, \S_q )\geq 0)$ due to Lemma~\ref{lemma:2}.

Although~(\ref{eqn:KNDCG}) is a bilevel optimization problem, existing stochastic algorithms for bilevel optimization are not applicable to solving the above problem. That is because there are several differences from the standard bilevel optimization problem studied in the literature. First, an unbiased stochastic gradient of the objective function is not readily computed as we explained before. Second, there are multiple lower level problems in~(\ref{eqn:KNDCG}), whose solutions cannot be updated at the same time for all $q\in\Q$ when $N$ is large. To address these challenges, we develop a tailored stochastic algorithm for solving~(\ref{eqn:KNDCG}).

The proposed algorithm is presented in Algorithm~\ref{alg:2}, to which we refer as K-SONG. To motivate K-SONG, we first consider the gradient of the objective function denoted by $F_K(\w)$, which can be computed as
\begin{align}
    \nabla F_K(\w)&=\frac{1}{|\S|}\sum_{(q,\x^q_i)\in\S} \bigg(\psi'(h_q(\x^q_i; \w)- \hat\lambda_q(\w))  
    \cdot \notag (\nabla h_q(\x^q_i; \w) -\nabla_\w\hat\lambda_q(\w)) \bigg)f_{q,i}(g(\w; \x^q_i, \S_q))\label{eqn:nabla_F}\\
    &+\psi(h_q(\x^q_i; \w)-\hat\lambda_q(\w))\nabla g(\w;\x^q_i, \S_q)f_{q,i}'(g(\w; \x^q_i, \S_q)\notag.
\end{align}
Similar to SONG, we can estimate $g(\w_t; \x^q_i, \S_q)$ by $u^{(t)}_{q,i}$. An inherent challenge of bilevel optimization is to estimate the implicit gradient $\nabla_\w\hat\lambda(\w)$. According to the optimality condition of $\hat\lambda(\w)$~\citep{ghadimi2018approximation}, we can derive 
\[
\nabla_\w\hat\lambda_q(\w)=-\nabla_{\lambda,\w}^2 L_q(\hat\lambda_q(\w); \w) (\nabla^2_{\lambda}L_{q}( \hat\lambda_q(\w); \w))^{-1}.
\]
To estimate $\nabla_{\lambda,\w}^2 L_q(\hat\lambda(\w); \w)$ at the $t$-th iteration, we use the current estimate $\lambda_{q,t}$ in place of $\hat \lambda_q(\w_t)$ and use $L_q(\hat\lambda, \w; \B_q)$ that is defined by a mini-batch samples of $\B_q$ in place of $L_q(\hat\lambda; \w)$, i.e., 
\begin{align}
 & L_q(\lambda, \w; \B_q) = \frac{K}{N_q}\lambda+ \frac{\tau_2}{2}\lambda^2+\frac{1}{|\B_q|}\sum_{\x_i\in\B_q}\tau_1\ln(1+\exp((h_q(\x_i;\w)-\lambda)/\tau_1))\notag.
\end{align}
The issue of estimating $(\nabla^2_{\lambda}L_{q}( \hat\lambda_q(\w); \w))^{-1}$ is more tricky. In the literature~\citep{ghadimi2018approximation}, a common method is to use von Neuman series with stochastic samples to estimate it. However, such method requires multiple samples in the order of $O(1/\tau_2)$, which is a large number when $\tau_2$ is small. To address this issue, we follow a similar strategy of~\citet{guo2021randomized} to estimate $\nabla^2_{\lambda}L_{q}( \hat\lambda_q(\w); \w)$ directly by using mini-batch samples. In the proposed algorithm, we use a moving average estimator denoted by $s_{q}$ as shown in Step 10. Finally, we have the following stochastic gradient estimator:
\begin{equation}\label{eqn:update_grad}
    \begin{aligned}
    G(\w_t)&=\frac{1}{|\B|}\sum_{(q,\x_i^q)\in\B}p_{q,i}\nabla\hat g_{q,i}(\w_t) \\
   &+\psi'
    (h_q(\x^q_i; \w_t)- \lambda_{q,t})\bigg[\nabla_\w h_q(\x^q_i; \w_t)+\nabla_{\lambda,\w}^2 L_q(\w_t,\lambda_i^t;\B_t) s^{-1}_{q,t}\bigg]f(u^{(t)}_{q,i})
    \end{aligned}
\end{equation}
where $p_{q,i}$ is computed in Step 7 in  K-SONG. 

\begin{algorithm}[t]
\caption{Stochastic Optimization of top-$K$ NDCG: K-SONG}\label{alg:2}
\begin{algorithmic}[1]
\REQUIRE $\eta_0,\eta_1,\gamma_0,\gamma_0',\beta_1,  u^{(1)}=0, \lambda=0$
\ENSURE $\w_T$
\FOR{$t=1,...T$}
\STATE Draw some relevant Q-I pairs $\B=\{(q, \x^q_i)\}\subset\S$ 
\STATE For each sampled $q$ draw a batch of items  $\B_q\subset\S_q$
\FOR{each sampled Q-I pair $(q,\x^q_i)\in \B$}
\STATE Let $\hat g_{q, i}(\w_t) = \frac{1}{|\B_q|}\sum_{\x'\in\B_q} \ell(\w_t; \x',\x^q_i, q)$
\STATE Let $u^{(t+1)}_{q, i}=(1-\gamma_0)u^{(t)}_{q, i}+\gamma_0\hat g_{q,i}(\w_t)$
\STATE Let $p_{q, i} = \psi(h_q(\x^q_i; \w_t)- \lambda_{q,t})\nabla f_{q,i}(u^{t}_{q,i})$
\ENDFOR
\FOR{each sampled query $q\in \B$}
\STATE Let $s_{q,t+1} = (1-\gamma_0')s_{q,t} + \gamma_0'\nabla^2_{\lambda} L_q(\lambda_{q,t}; \w_t; \B_q)$
\STATE Let $\lambda_{q,t+1} = \lambda_{q, t} - \eta_0 \nabla_\lambda L_q(\lambda_{q,t}; \w_t; \B_q)$
\ENDFOR
\STATE Compute a stochastic gradient $G(\w_t)$ according to~(\ref{eqn:update_grad}) or ~~(\ref{eqn:update_grad2})
\STATE Compute $\m_{t+1} = \beta_1\m_t + (1-\beta_1) G(\w_t)$
\STATE Update $\w_{t+1}=\w_t - \eta_1 \m_{t+1}$
\ENDFOR
\end{algorithmic}
\end{algorithm}

We follow a similar strategy as~\citet{guo2022stochastic} to update $\lambda_{q,t+1}$ by a simple stochastic gradient update, shown in Step 11. 
It is notable that different from~\citet{guo2021randomized}, we update $\lambda_{q,t+1}$ for a mini-batch of randomly sampled queries $q$, which makes the analysis more challenging. 

Finally, we present the convergence guarantee of K-SONG. 
\begin{thm}\label{thm:3}
Under appropriate conditions and proper settings of parameters $\gamma_0,\gamma_0',\eta_0 = \O(|\B_q|\epsilon^2)$, $\gamma_1 = \O(\min\{|\B|,|\B_q|\}\epsilon^2)$, $\beta_1=1-\gamma_1$, $\eta_1 = \O\left(\min\left \{\frac{|\B||\B_q|\epsilon^2}{|\S|},\min\{|\B|,|\B_q|\}\epsilon^2\right\}\right)$, Algorithm~\ref{alg:2} ensures that after $T=\O\left( \max\left\{\frac{|\S|}{|\B||\B_q|\epsilon^4},\frac{1}{\min\{|\B|,|\B_q|\}\epsilon^4} \right\}\right)$ iterations we can find an $\epsilon$-stationary solution of $F_K(\w)$, i.e., $\E[\|\nabla F_K(\w_\tau)\|^2]\leq \epsilon^2$ for a randomly selected $\tau\in\{1,\ldots,T\}$.
\end{thm}
{\bf Remark:} The above theorem indicates that K-SONG also has the iteration complexity of $O(\frac{1}{\epsilon^4})$ in terms of $\epsilon$. We refer the interested readers to Appendix~\ref{appendix:convergence-analysis} for details.

\section{Practical Strategies}
\label{sec:practical-strategies}

In this section, we present two practical strategies for improving the effectiveness of SONG/K-SONG. 

{\bf Initial Warm-up.} A potential problem of optimizing NDCG is that it may not lead to a good local minimum if a bad initial solution is given.
To address this issue, we use warm-up to find a good initial solution by solving a well-behaved objective. 
Similar strategies have been used in the literature~\citep{yuan2020robust,DBLP:journals/corr/abs-2104-08736}, however, their objectives are not suitable for ranking. Here we choose the listwise cross-entropy loss~\citep{ListNet}, i.e.,
\newline\vspace*{-0.15in}
\begin{align*} 
&\min_{\w}\quad \frac{1}{N}\sum_{q=1}^N\frac{1}{N_q}\sum_{\x_i^q\in \S^+_q} -\ln\left(\frac{\exp(h_q(\x^q_i; \w)}{ \sum_{\x_j^q\in\S_q} h_q(\x^q_j; \w))}\right),
\end{align*} 
which is the cross-entropy between predicted and ground truth top-one probability distributions. The objective can be formulated as a similar finite-sum coupled compositional problem as NDCG, and  a similar algorithm to SONG can be used to solve it. We present the formulation and detailed algorithm in Appendix~\ref{appendix:initial-warm-up}.

{\bf Stop Gradient for the top-$K$ Selector.}
Given a good initial solution, we justify that the second term in~(\ref{eqn:update_grad}) is close to 0 under a reasonable condition, and present the details in Appendix~\ref{appendix:stop-gradient-operator}. Thus, the gradient of the top-$K$ selector $\psi(h(\x^q_i,\w)- \hat\lambda_q(\w))$ is not essential. We can apply the stop gradient operator on the top-$K$ selector, and compute the gradient estimator by 
\begin{equation}\label{eqn:update_grad2}
    \begin{aligned}
    &G(\w_t)=\frac{1}{|\B|}\sum_{(q,\x^q_i)\in\B}p_{q,i}\nabla\hat g_{q,i}(\w_t),
    \end{aligned}
\end{equation}
which \textbf{simplifies} K-SONG by avoiding maintaining and updating $s_{q,t}$. We refer to the K-SONG using the gradient in~(\ref{eqn:update_grad}) as theoretical K-SONG, and the K-SONG using the gradient in~(\ref{eqn:update_grad2}) as practical K-SONG.

\section{Experiments}
\label{sec:experiments}

In this section, we evaluate our algorithms through comprehensive experiments on two different domains: learning to rank and recommender systems. Experimental results show that our algorithms can outperform prior ranking methods in terms of NDCG. We also conduct experiments to demonstrate the convergence speed of training and verify our algorithmic designs, including the moving average estimator and the bilevel formulation for K-SONG. In addition, we examine the effectiveness of initial warm-up and stop gradient operator. We implement our proposed methods in the LibAUC\footnote{https://libauc.org/} library. To show the advantages of our library, we compare SONG and K-SONG in LibAUC with several listwise ranking approaches implemented in TensorFlow Ranking\footnote{https://www.tensorflow.org/ranking} library.  The code to reproduce the results in this paper is available at \url{https://github.com/zhqiu/NDCG-Optimization}.

We compare our algorithms, SONG and K-SONG, against the following methods that optimize different loss functions. \textbf{RankNet}~\citep{RankNet} is a commonly used pairwise loss.
\textbf{ListNet}~\citep{ListNet} and \textbf{ListMLE}~\citep{ListMLE} are two listwise losses that optimize the agreement between predictions and ground truth rankings. \textbf{LambdaRank}~\citep{LambdaRank} is a listwise loss that takes NDCG into account, but not directly optimizes NDCG. \textbf{ApproxNDCG}~\citep{ApproxNDCG} and \textbf{NeuralNDCG}~\citep{NeuralNDCG} are two losses that optimize the NDCG surrogates directly. Similar to NeuralNDCG, PiRank~\citep{PiRank} also employs NeuralSort~\citep{NeuralSort} to approximate NDCG, so we do not compare with it. We do not compare with SoftRank~\citep{SoftRank}, as its $O(n^3)$ complexity is prohibitive. 

For all methods, we sample a batch of queries, and a few (e.g., 10) relevant items and some irrelevant items for each query per iteration. For K-SONG, we report its \emph{theoretical} version results unless specified otherwise. We use the Adam-style update for all methods and set the momentum parameters to their default values~\citep{Adam}. The hyper-parameters of all losses are fine-tuned using grid search with training/validation splits mentioned below. Due to the limited space, we present the detailed implementation and datasets information in Appendix~\ref{appendix:implemention} and ~\ref{appendix:data}, respectively. To further show the effectiveness of our methods, we conduct more experiments on multi-label classification and provide the results in Appendix~\ref{sec:multi-label-cls}.

\subsection{Learning to Rank}

{\bf Data.} Learning to rank (LTR) algorithms aim to rank a set of candidate items for a given search query. 
We consider two datasets: MSLR-WEB30K~\cite{mslr} and Yahoo! LTR dataset~\cite{yahoo}, which are the largest public LTR datasets from commercial search engines. Both datasets contain query-document pairs represented by real-valued feature vectors, and have associated relevance scores on the scale from 0 to 4. Following~\citet{ai2019learning}, we use the training/validation/test sets in the Fold1 of MSLR-WEB30K dataset for evaluation. The Yahoo! LTR dataset splits the queries arbitrarily and uses 19,944 for training, 2,994 for validation and 6,983 for testing. 

{\bf Setup.} For the backbone network, we adopt the Context-Aware Ranker~\cite{ltr-model}, a ranking model based on the Transformer. 
For all methods, we first pre-train a model by initial warm-up. Then we re-initialize the last layer and train the model by different methods as mentioned before. In both stages, we set the initial learning rate and batch size to 0.001 and 64, respectively. We train the networks for 100 epochs, decaying the learning rate by 0.1 after 50 epochs. We tune $\gamma_0$ and $K$ in our algorithms from \{0.1, 0.2, 0.3, 0.4, 0.5\} and \{10, 20, 50\}, respectively.

{\bf Results.} We evaluate all methods and calculate NDCG@$k$ ($k\in[1,3,5]$) on the test data. We provide partial results in Table~\ref{tab:part-testing-results}, and full results in Table~\ref{tab:ltr-testing-results} in Appendix~\ref{sec:additional-exp-results}. We notice that, in general, methods that directly optimize the NDCG surrogates achieve higher performance. Similar conclusions have been reached in other studies~\cite{ApproxNDCG,NeuralNDCG}. We also observe that our SONG and K-SONG can consistently outperform all baselines on both datasets. These results clearly show that our methods are effective for LTR tasks.

\begin{table*}[t]
\vspace{-4mm}
\caption{The test NDCG on four datasets. We report the average NDCG@3 for two LTR datasets, the average NDCG@20 for two RS datasets, and standard deviation over 3 runs with different random seeds. Full results are in Appendix~\ref{sec:additional-exp-results}.}
\vspace{-4mm}
\label{tab:part-testing-results}
\vskip 0.2in
\begin{center}
\begin{small}
\begin{sc}
\begin{tabular}{p{2.0cm}p{2.5cm}p{2.5cm}p{2.5cm}p{2.5cm}}
\toprule
\multirow{1}{*}{\thead{Method}} &
\multicolumn{2}{c}{\thead{NDCG@3}} &
\multicolumn{2}{c}{\thead{NDCG@20}} \\
\cmidrule(lr){2-3}
\cmidrule(lr){4-5}
& MSLE WEB30K & Yahoo! LTR & MovieLens20M & Netflix Prize  \\
\midrule
RankNet   & 0.5105$\pm$0.0004 & 0.7150$\pm$0.0004 &  0.0744$\pm$0.0013  &  0.0489$\pm$0.0003 \\
ListNet   & 0.5058$\pm$0.0001 & 0.7151$\pm$0.0004 &  0.0875$\pm$0.0004  &  0.0700$\pm$0.0002 \\
ListMLE   & 0.5074$\pm$0.0002 & 0.7146$\pm$0.0006 &  0.0799$\pm$0.0001  &  0.0508$\pm$0.0004 \\
LambdaRank & 0.5118$\pm$0.0003 & 0.7155$\pm$0.0002 &  0.0913$\pm$0.0002  &  0.0693$\pm$0.0002 \\
ApproxNDCG & 0.5114$\pm$0.0005 & 0.7152$\pm$0.0007 &  0.0938$\pm$0.0003  &  0.0592$\pm$0.0009   \\
NeuralNDCG & 0.5101$\pm$0.0005 & 0.7139$\pm$0.0001 &  0.0901$\pm$0.0003  &  0.0718$\pm$0.0003  \\
SONG  & 0.5136$\pm$0.0006 & 0.7187$\pm$0.0004 &  0.0969$\pm$0.0002  &  \textbf{0.0749}$\pm$0.0002 \\
K-SONG & \textbf{0.5147}$\pm$0.0006 & \textbf{0.7191}$\pm$0.0004&   \textbf{0.0973}$\pm$0.0003  &  0.0743$\pm$0.0003   \\
\bottomrule
\end{tabular}
\end{sc}
\end{small}
\end{center}
\end{table*}

\subsection{Recommender Systems}

{\bf Data.} Recommender systems (RS) are widely used in IT industry~\citep{lu2015recommender}. We use two large-scale movie recommendation datasets: MovieLens20M~\citep{ml-20m} and Netflix Prize dataset~\citep{netflix}. Both datasets contain large numbers of users and movies, which are represented with integer IDs. All users have rated several movies, with ratings range from 1 to 5. To create training/validation/test sets, we use the most recent rated item of each user for testing, the second recent item for validation, and the remaining items for training, which is widely-used in the literature~\citep{loo-1,loo-2}. When evaluating models, we need to collect irrelevant (unrated) items and rank them with the relevant (rated) item to compute NDCG metrics. During training, inspired by~\citet{wang2019modeling}, we randomly sample 1000 unrated items to save time. When testing, however, we adopt the all ranking protocol~\cite{wang2019neural,he2020lightgcn} --- all unrated items are used for evaluation. 

{\bf Setup.} 
We choose NeuMF~\cite{NCF} as the backbone network, which is commonly used in RS tasks. For all methods, models are first pre-trained by our initial warm-up method for 20 epochs with the learning rate 0.001 and a batch size of 256. Then the last layer is randomly re-initialized and the network is fine-tuned by different methods. At the fine-tuning stage, the initial learning rate and weight decay are set to 0.0004 and 1e-7, respectively. We train the models for 120 epochs with the learning rate multiplied by 0.25 at 60 epochs. The hyper-parameters of all methods are individually tuned for fair comparison, e.g., we tune $\gamma_0$ in SONG and K-SONG from \{0.1, 0.2, 0.3, 0.4, 0.5\}, and $K$ in K-SONG in a range \{50, 100, 300, 500\}.

{\bf Results.} We evaluate all methods and calculate NDCG@$k$ ($k\in[10, 20, 50]$) on the test data. Part of the results are reported in Table~\ref{tab:part-testing-results}, and full results are in Table~\ref{tab:rs-testing-results} in Appendix~\ref{sec:additional-exp-results}. First, SONG outperforms all baselines on both datasets. Specifically, SONG achieves 3.30\% and 4.32\% improvements on NDCG@20 over the best baseline on MovieLens20M and Netflix Prize, respectively. Besides, K-SONG performs better than SONG in most cases. These results clearly demonstrate that our algorithms are effective for optimizing NDCG and its top-$K$ variant. It is worth to mention that the improvements from our methods on RS datasets are higher than that on LTR datasets. The reason is that RS datasets have about 20,000 items per query, while most queries in LTR datasets have less than 1,000 items (detailed statistics in Appendix~\ref{appendix:data}). These results validate that our methods are more advantageous for large-scale data.

\subsection{More Studies}

{\bf Convergence Speed.} We plot the convergence curves for optimizing NDCG on two RS datasets in Figure~\ref{fig:part-training-curves}. All convergence curves for four datasets are shown in Figure~\ref{fig:training-curves} in Appendix~\ref{sec:additional-exp-results}. We can observe that our proposed SONG and K-SONG converge much faster than other methods.

\begin{figure}
\centering
\begin{minipage}[c]{0.4\textwidth}
\centering\includegraphics[width=1\textwidth]{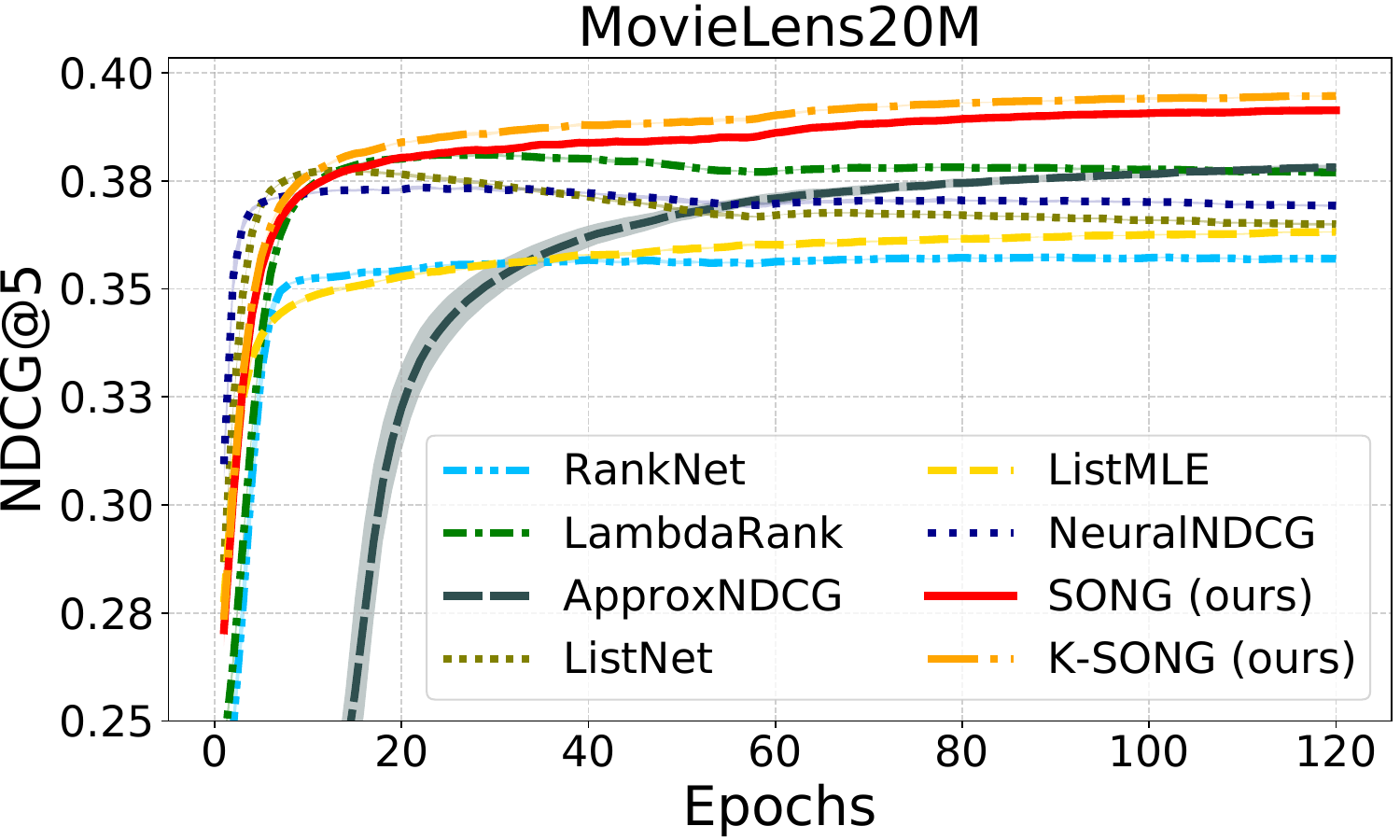}
\end{minipage}
\begin{minipage}[c]{0.4\textwidth}
\centering\includegraphics[width=1\textwidth]{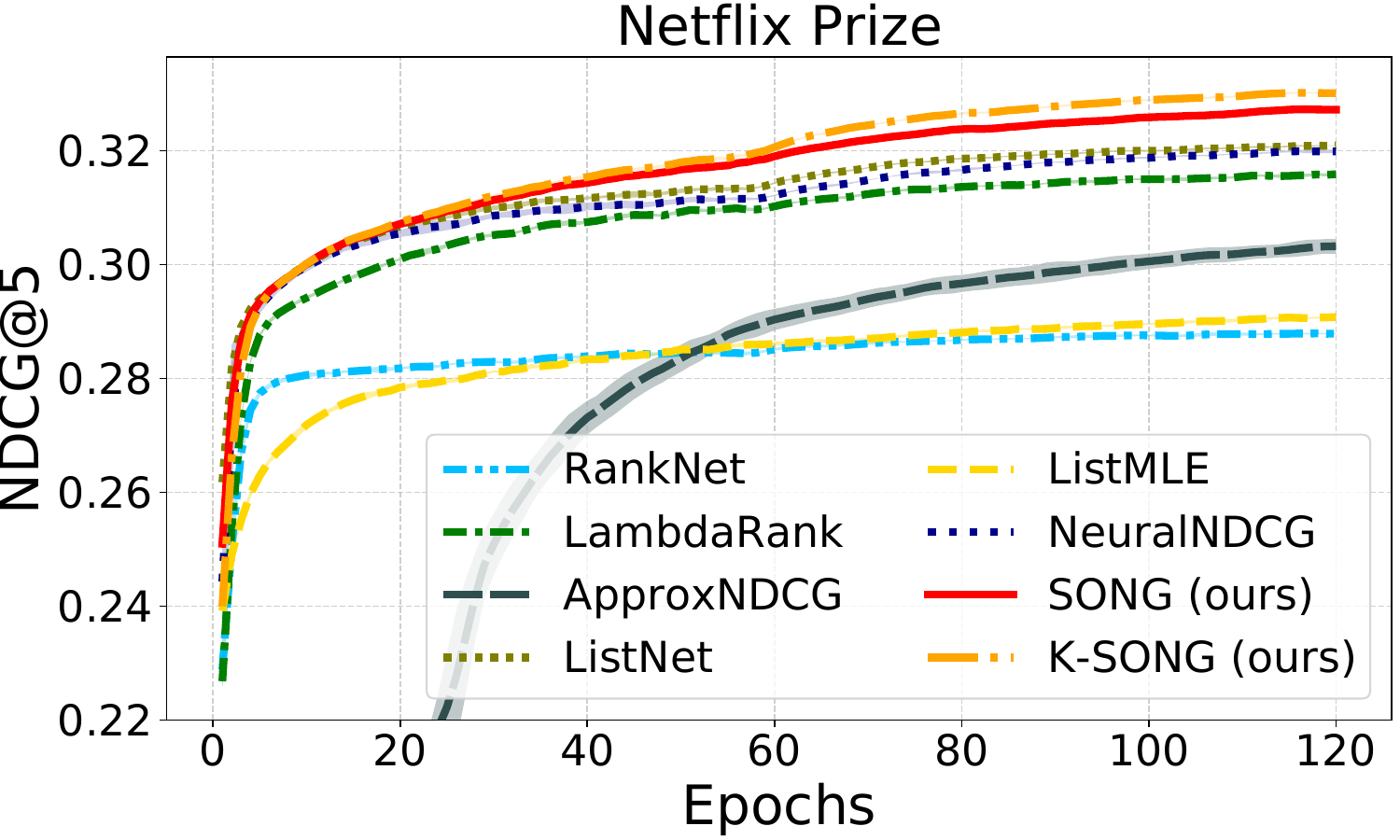}
\end{minipage}
\caption{Comparison of convergence of different methods in terms of validation NDCG@5 scores on two RS datasets.}
\label{fig:part-training-curves}
\vspace{-3mm}
\end{figure}

{\bf Ablation Studies.} We now study the effects of the moving average estimators in our methods and initial warm-up. We present the experimental results of two RS datasets in Figure~\ref{fig:part-ablation} and more results in Figure~\ref{fig:ablation} in Appendix~\ref{sec:additional-exp-results}. First, we can observe that maintaining the moving average estimators enables our algorithm perform better. To further study the effect of $\gamma_0$, we provide more results and analysis in Appendix~\ref{sec:additional-exp-results}. Second, we consistently observe that initial warm-up can bring the model to a good initialization state and improve the final performance of the model.

\begin{figure}
\centering
\begin{minipage}[c]{0.4\textwidth}
\centering\includegraphics[width=1\textwidth]{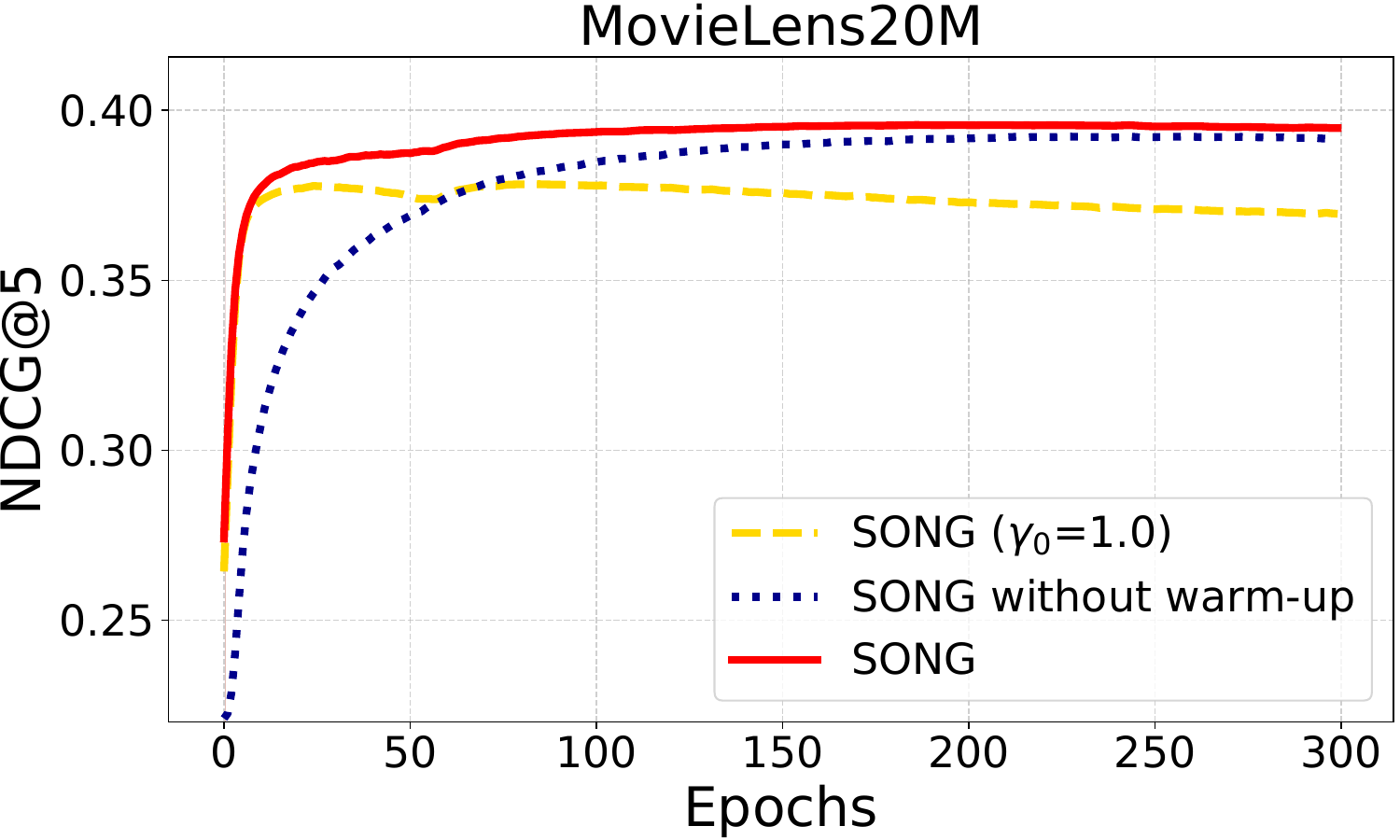}
\end{minipage}
\begin{minipage}[c]{0.4\textwidth}
\centering\includegraphics[width=1\textwidth]{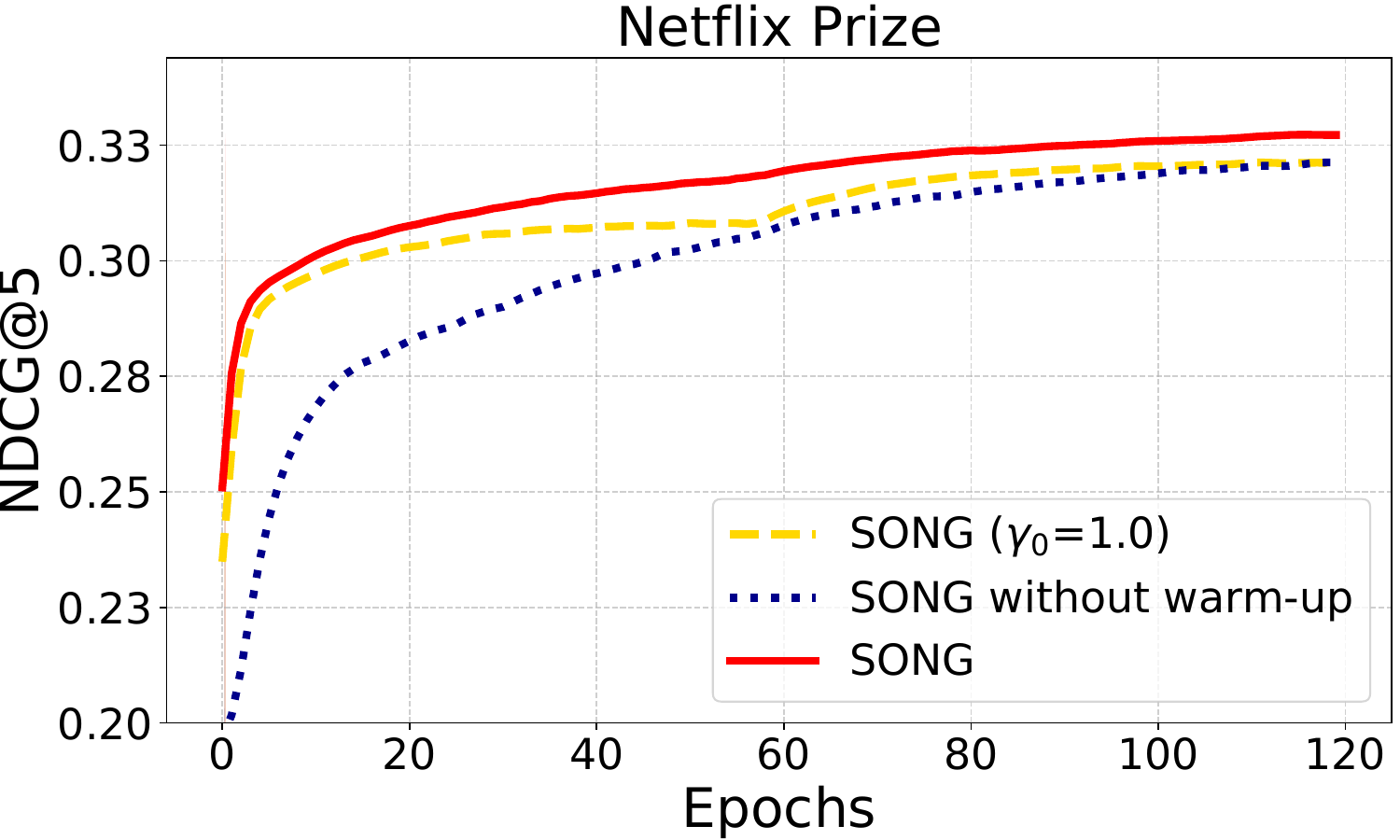}
\end{minipage}
\caption{Ablation study on two variants of SONG.}
\label{fig:part-ablation}
\end{figure}

\begin{figure}
\centering
\begin{minipage}[c]{0.4\textwidth}
\centering\includegraphics[width=1\textwidth]{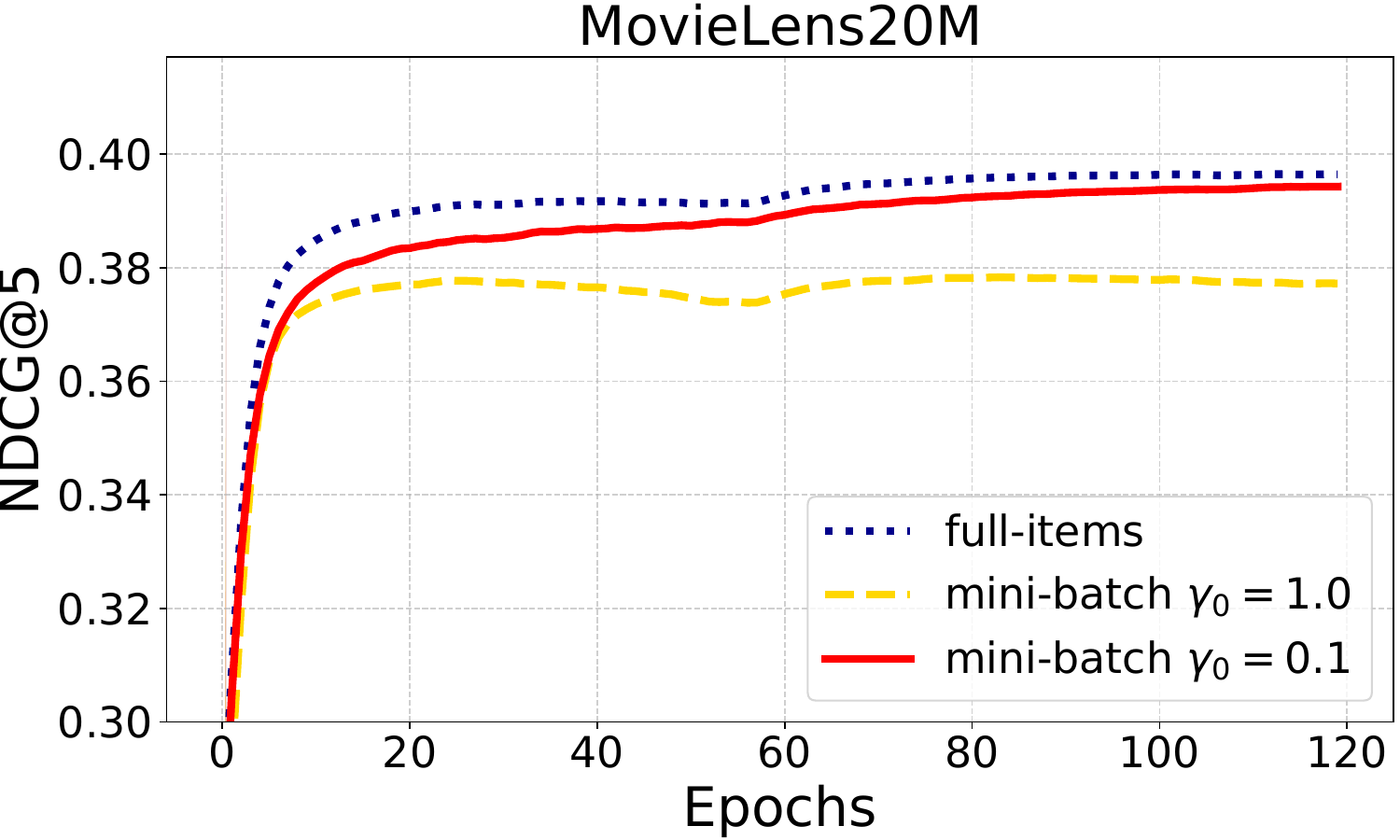}
\end{minipage}
\begin{minipage}[c]{0.4\textwidth}
\centering\includegraphics[width=1\textwidth]{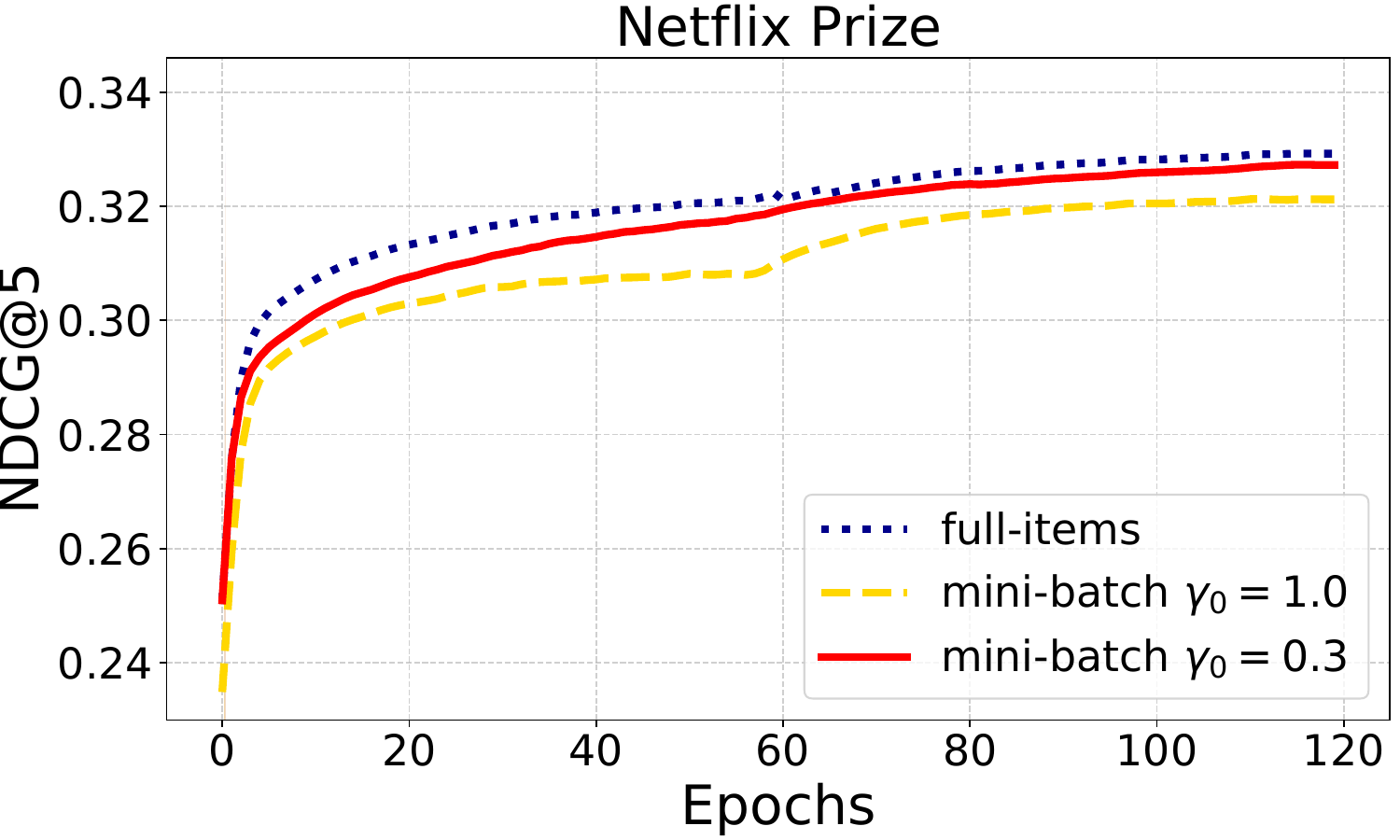}
\end{minipage}
\caption{Comparison of full-items and mini-batch training.}
\label{fig:part-full_batch_comp}
\vspace{-2mm}
\end{figure}

\begin{figure}
\centering
\begin{minipage}[c]{0.4\textwidth}
\centering\includegraphics[width=1\textwidth]{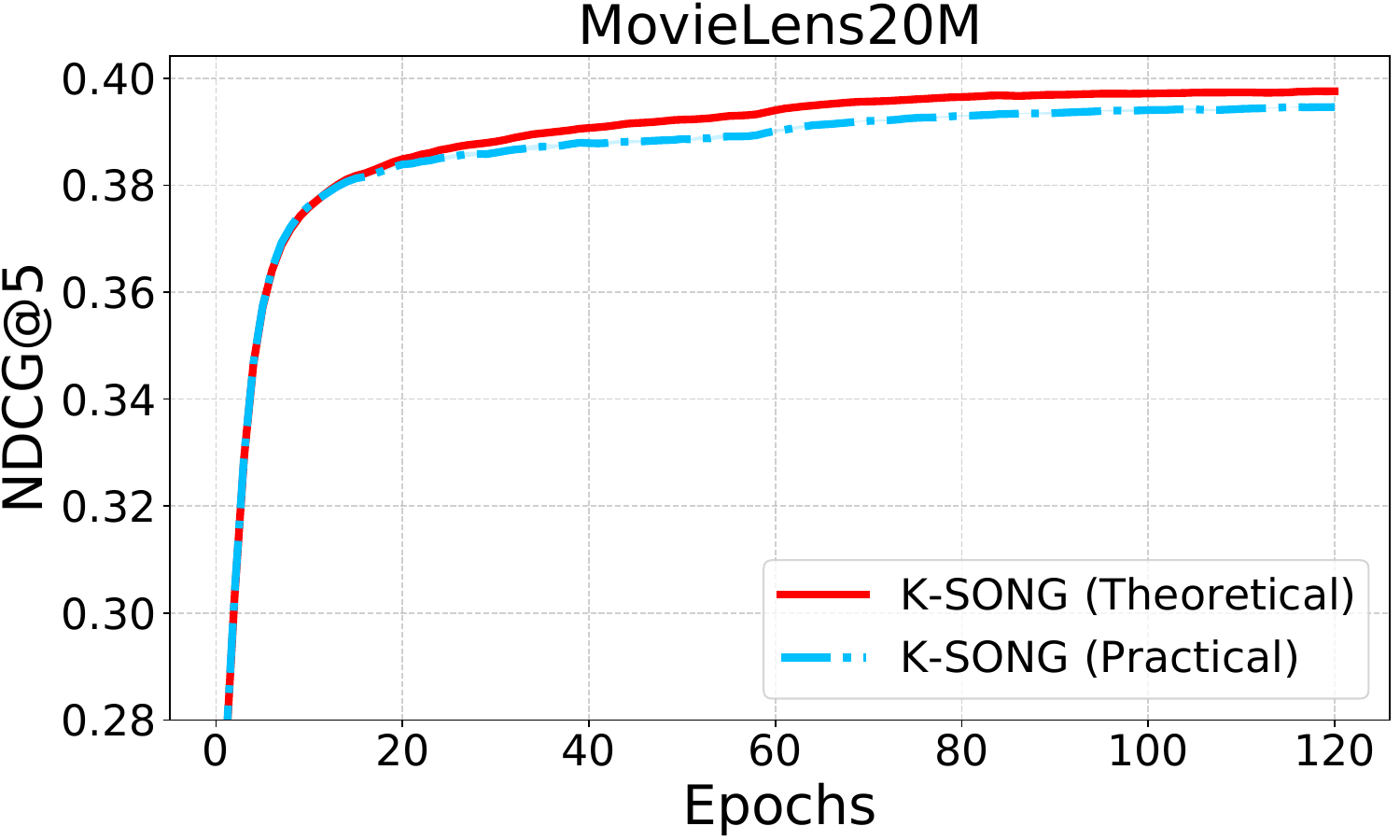}
\end{minipage}
\begin{minipage}[c]{0.4\textwidth}
\centering\includegraphics[width=1\textwidth]{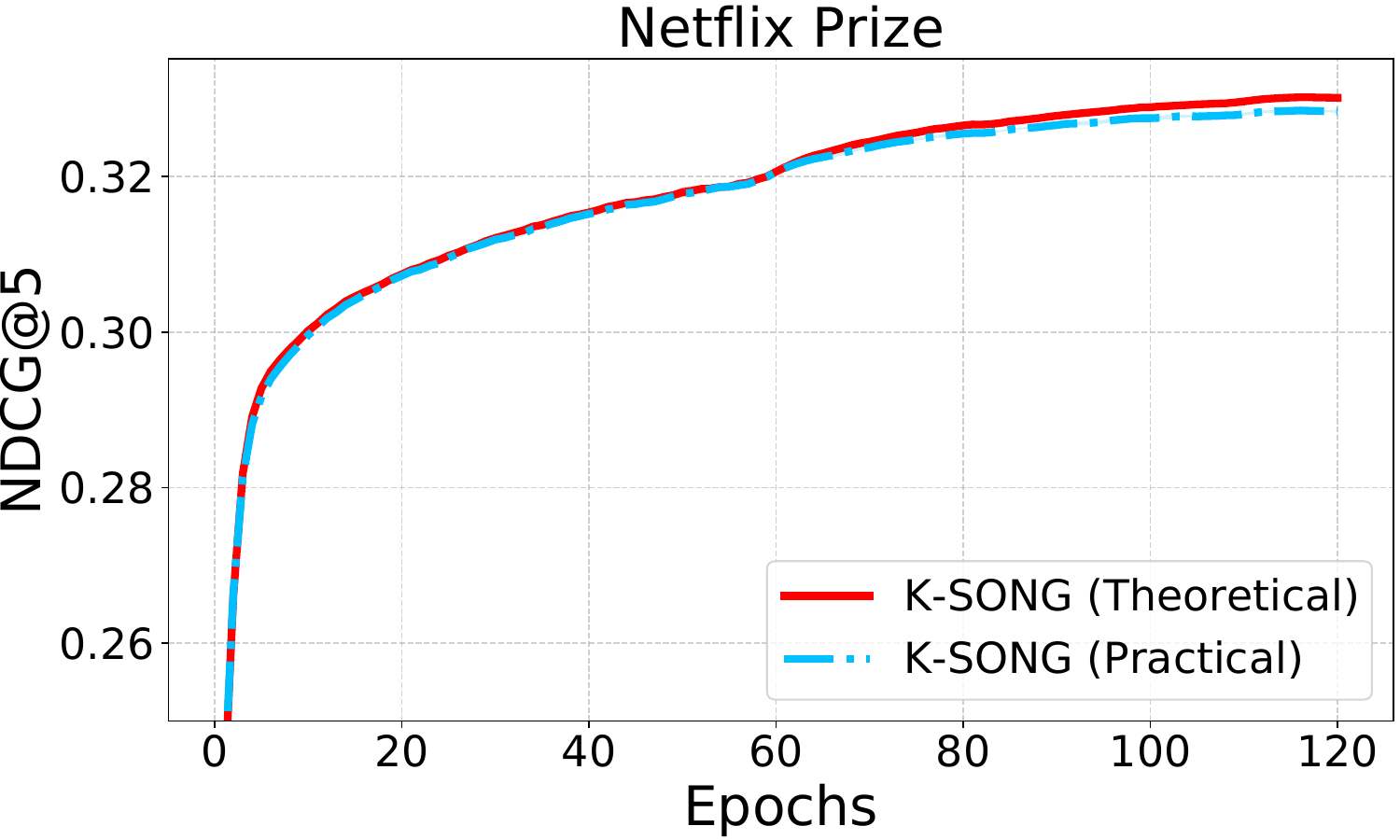}
\end{minipage}
\caption{Comparison of theoretical and practical K-SONG.}
\label{fig:K-SONG-comp}
\end{figure}

{\bf Comparison with Full-Items Training.} We compare three different training methods: full-items gradient descent that uses \emph{all items} in $\S_q$ to computing $g(\w; \x_i^q, \S_q)$ and its gradient, biased mini-batch gradient descent (i.e., set $\gamma_0=1.0$ in our algorithms), and our algorithms (i.e., with $\gamma_0$ tuned). We compare these methods for NDCG maximization and present the results in Figure~\ref{fig:part-full_batch_comp}. We can see that our methods converge to that of full-items gradient descent, which proves the effectiveness of our algorithms. We also provide the negative loglikelihood loss curves of three different training methods for warm-up in Figure~\ref{fig:full_batch_comp} in Appendix~\ref{sec:additional-exp-results}, and similar conclusions can be reached.

{\bf Theoretical and Practical K-SONG.} To verify the effectiveness of stop gradient operator, we present the comparison of theoretical K-SONG and practical K-SONG in Figure~\ref{fig:K-SONG-comp}. We observe that practical K-SONG and theoretical K-SONG achieve similar performance on both datasets, which indicates that the proposed stop gradient operator is effective in simplifying theoretical K-SONG.

{\bf The advantage of the bi-level formulation.} To demonstrate the advantage of our bi-level formulation for optimizing the top-$K$ NDCG surrogate, we implement previous NDCG@$K$ formulation by modifying our Algorithm 1 for optimizing the NDCG@$K$ objective with $\psi(K-\bar{g}(\w,\x))$ in place of $\mathbb I(K\geq r(\w;\x))$. We compare these two formulations and present the results in Figure~\ref{fig:ndcg-topk-comp}, and we can see that our bi-level formulation is more advantageous.

{\bf Comparison with TensorFlow Ranking.} We implement our SONG and K-SONG into the LibAUC library  and compare it with TensorFlow Ranking library~\citep{TensorflowRanking} (TFR), which is an open-source library for neural learning to rank implemented by Google. Specifically, we compare our implementations of SONG and K-SONG with four listwise ranking methods implemented in TFR, including ListNet, ListMLE, ApproxNDCG, and Gumbel-ApproxNDCG. For all methods, models are trained for 120 epochs on MovieLens20M with the learning rate 0.001 and a batch size of 256. For SONG and K-SONG, we first train the models by initial warm-up for the first 20 epochs, and then keep training the models by SONG or K-SONG for 100 epochs. We present the comparison of convergence and training time per epoch in Figure~\ref{fig:tfr-comp}. We notice that our implementation of SONG and K-SONG in the LibAUC library with initial warm-up converge faster than the algorithms implemented in the TFR library by Google, and the training time our methods is competitive if not better than that of TFR library, which indicates the advantages of our implementations in LibAUC. In addition, our algorithm for optimizing ListNet in LibAUC is better than that implemented in TFR due to that our algorithm has rigorous convergence guarantee and that in TFR is a mini-batch based heuristic method. 

\begin{figure}
\centering
\begin{minipage}[c]{0.4\textwidth}
\centering\includegraphics[width=1\textwidth]{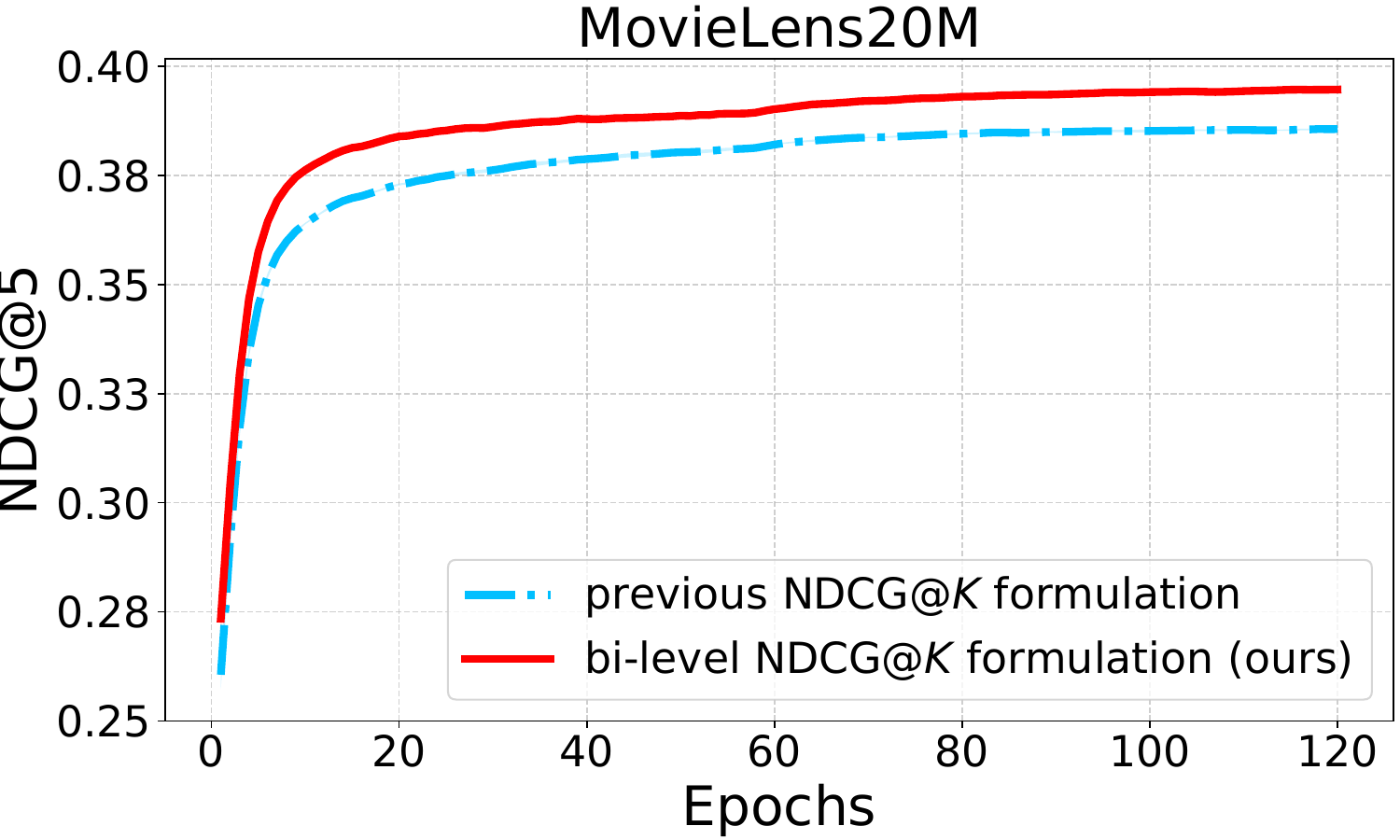}
\end{minipage}
\begin{minipage}[c]{0.4\textwidth}
\centering\includegraphics[width=1\textwidth]{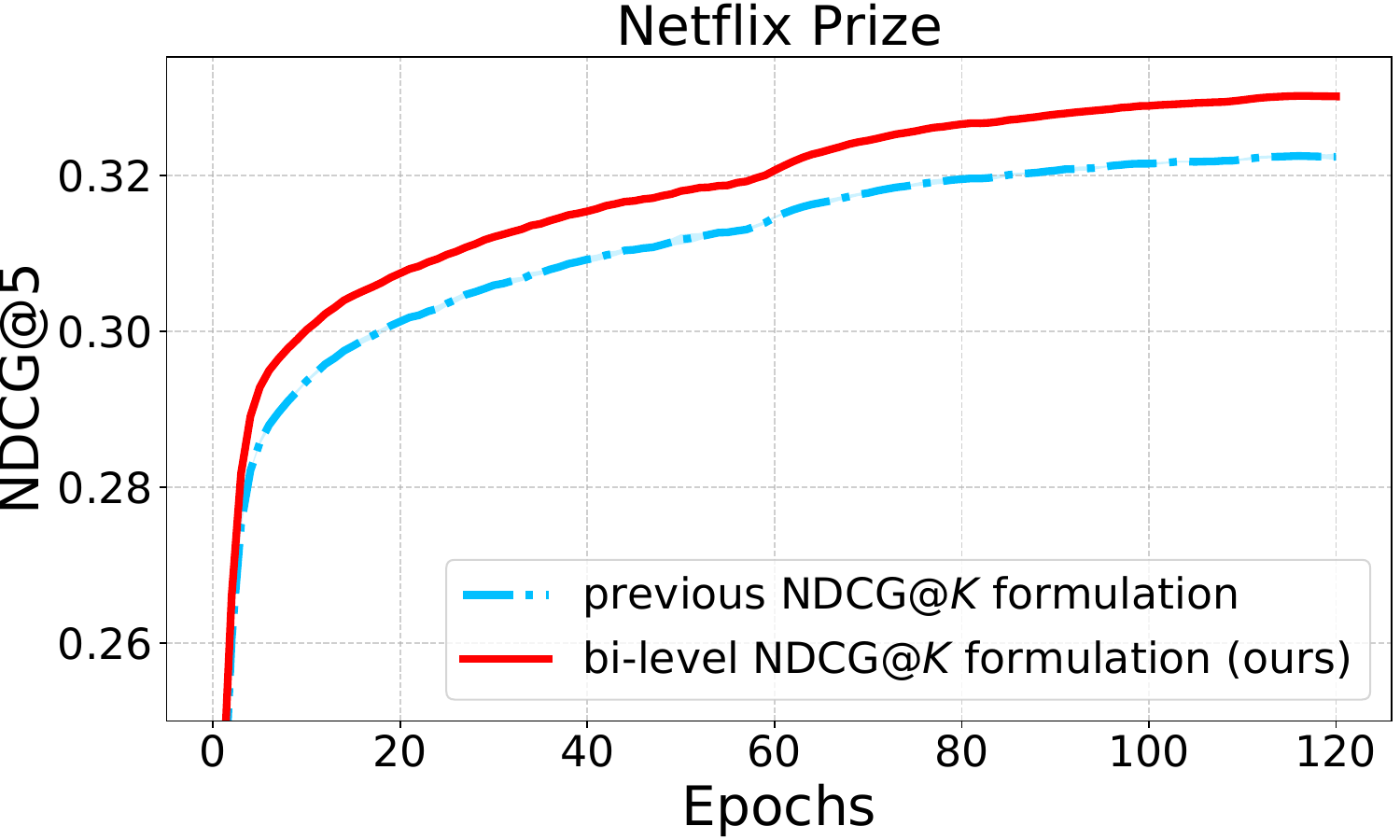}
\end{minipage}
\caption{Comparison of our bilevel NDCG@$K$ formulation and previous NDCG@$K$ formulation.}
\label{fig:ndcg-topk-comp}
\end{figure}

\begin{figure}
\centering
\begin{minipage}[c]{0.4\textwidth}
\centering\includegraphics[width=1\textwidth]{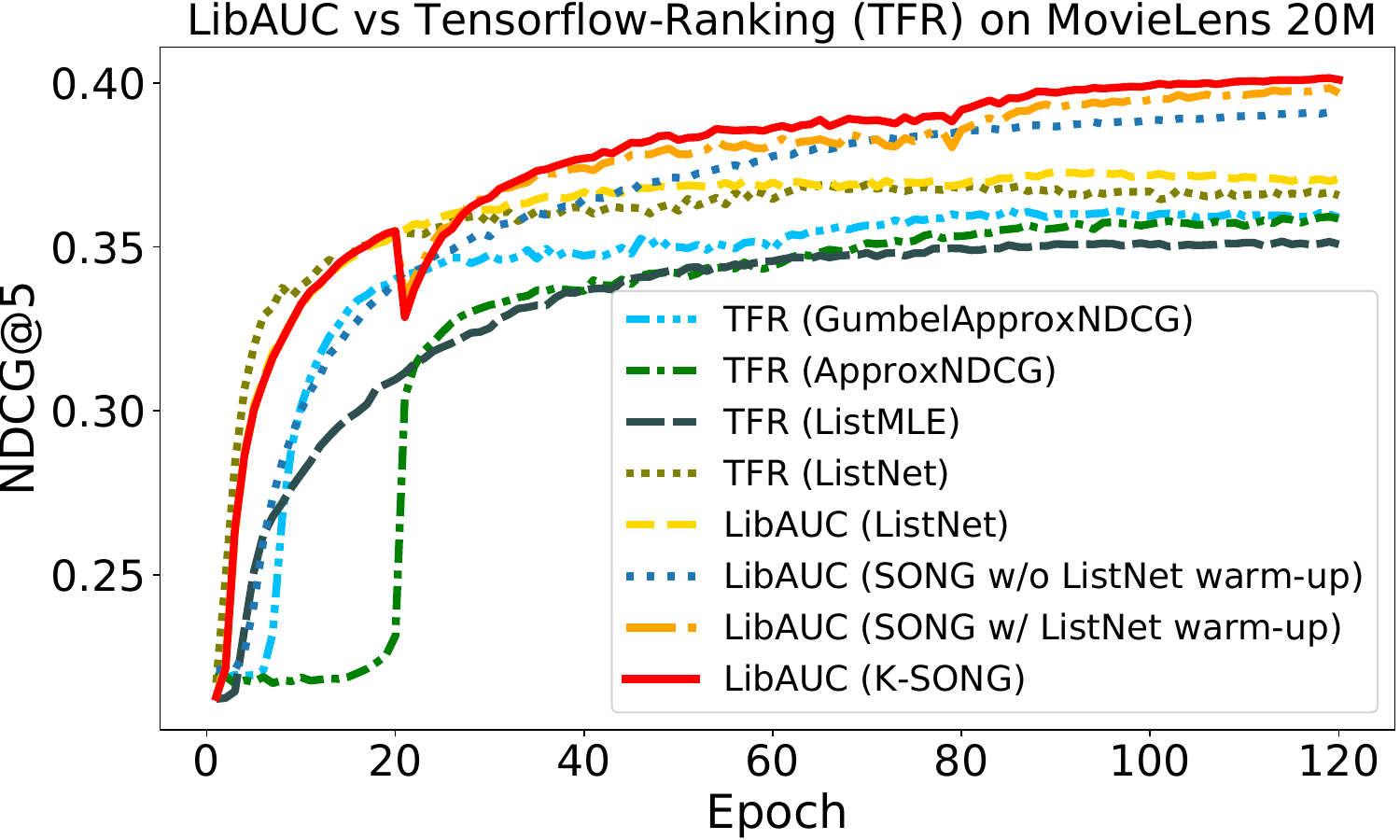}
\end{minipage}
\begin{minipage}[c]{0.4\textwidth}
\centering\includegraphics[width=1\textwidth]{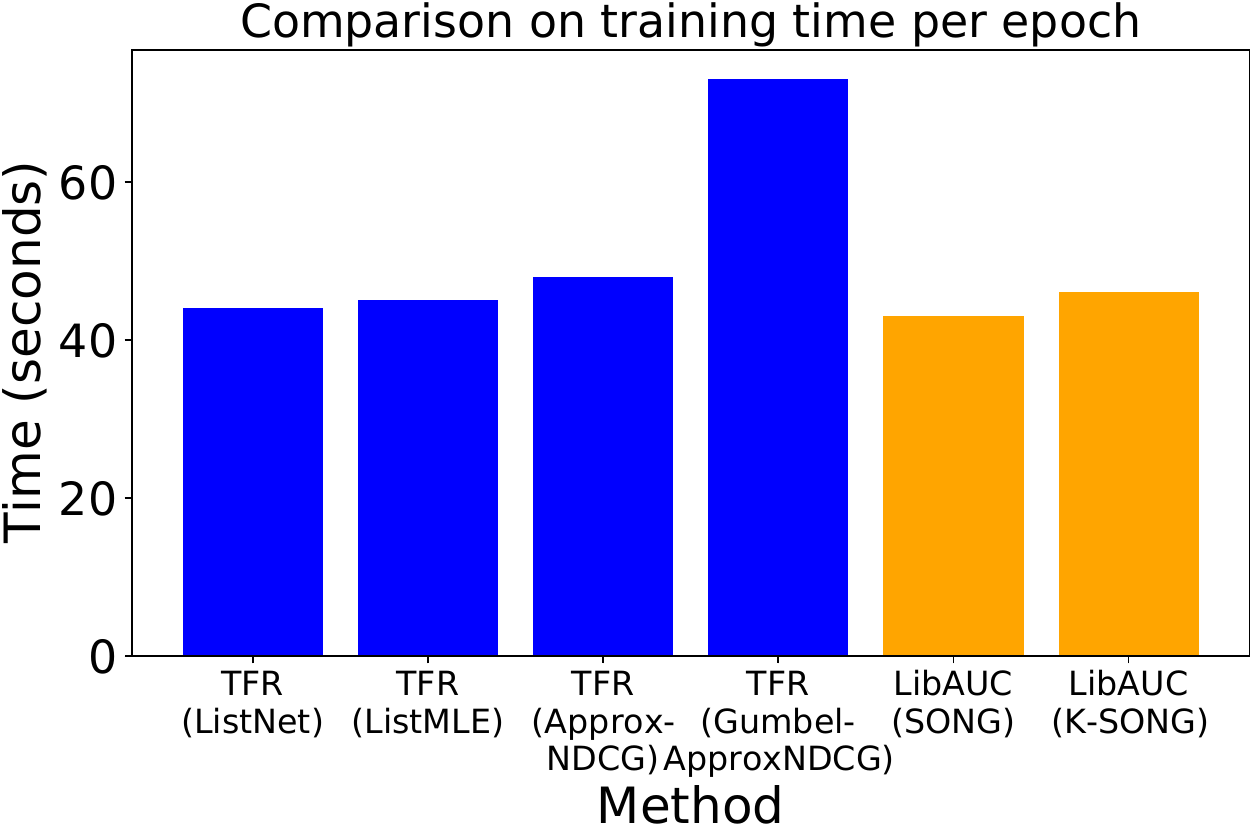}
\end{minipage}
\caption{Comparison of convergence (left) and training time (right) between  LibAUC (ours) and TensorFlow Ranking library.}
\label{fig:tfr-comp}
\end{figure}

\section{Conclusion}
\label{sec:conclusion}

In this work, we propose stochastic methods to optimize NDCG and its top-$K$ variant that have been widely used in various ranking tasks. The optimization problems of NDCG and top-$K$ NDCG are casted into a novel compositional optimization problem and a novel bilevel optimization problem, respectively. We design efficient stochastic algorithms with provable convergence guarantee to compute the solutions. We also study initial warm-up and stop gradient operator to improve the effectiveness for deep learning. Extensive experimental results on multiple domains demonstrate that our methods can achieve promising results.

\section*{Acknowledgements}
Q. Hu, Y. Zhong and T. Yang were partially supported by NSF Grant 2110545 and NSF Career Award 1844403.  Z. Qiu and L. Zhang were partially supported by NSFC (62122037, 61921006). Part work of Z. Qiu was done when he was visiting T. Yang's lab virtually. 

\newpage
\bibliography{arxiv-ndcg}
\bibliographystyle{icml2022}

\newpage
\appendix
\onecolumn

\section{Per-iteration Complexity} \label{appendix:per-iteration-complexity}

For complexity analysis, let $\Q_t$ denote the sampled queries at the $t$-th iteration and $\B^+_q$ denote the sampled relevant items for each sampled query.  
In terms of the per-iteration complexity of SONG, we need to conduct forward propagation for computing $h_q(\x^q_i,\w), \forall \x_q^i\in\B^+_q\cup \B_q$ and back-propagation for computing  $\nabla h_q(\x^q_i,\w), \forall \x_q^i\in\B^+_q\cup \B_q$. The complexity for these forward propagations and back-propagations is $\sum_{q\in\Q_t}(|\B^q_+|+|\B_q|)d \leq O(Bd)$, where $B=\sum_{q\in\Q_t}(|\B^+_q|+|\B_q|)$ is the total mini-batch size. With these computed,  the cost for computing $\hat g_{q,i}(\w_t)$ and $\nabla \hat g_{q,i}(\w_t)$ for all $q, \x^q_i\in\B^+_q$ is $ \sum_{q\in\Q_t}|\B_q^+||\B_q| \leq O(B^2)$. Hence, the total complexity per iteration is $O(Bd+B^2)$. For a large model size $d\gg B$, we have the per-iteration complexity of $O(Bd)$, which is similar to the standard cost of deep learning per-iteration and is independent of the length of $\S_q$ for each query.

\section{Initial Warm-up}
\label{appendix:initial-warm-up}

The listwise cross-entropy loss can be reformulated as follows:
\begin{align*} 
&\min_{\w}\quad \frac{1}{N}\sum_{q=1}^N\frac{1}{N_q}\sum_{\x_i^q\in \S^+_q} -\ln\left(\frac{\exp(h_q(\x^q_i; \w)}{ \sum_{\x_j^q\in\S_q} h_q(\x^q_j; \w))}\right)\\
&= \frac{1}{N}\sum_{q=1}^N\frac{1}{N_q}\sum_{\x_i^q\in\S^+_q}\ln\left(\sum_{\x^q_j\in\S_q}\exp(h_q(\x^q_j) - h_q(\x^q_i))\right).
\end{align*} 
The above objective has the same structure of the NDCG surrogate, i.e., it is an instance of finite-sum coupled compositional stochastic optimization problem. Hence, we can use a similar algorithm to SONG to solve the above problem. We present the details in Algorithm~\ref{alg:3}. 

\begin{algorithm}[h]
\caption{Stochastic Optimization of Listwise CE loss: SOLC}\label{alg:3}
\begin{algorithmic}
\REQUIRE $\eta,\beta_0,\beta_1, u^{(1)}=0$
\ENSURE $\w_T$
\FOR{$t=1,...T$}
\STATE draw a set of queries denoted by  $\Q_t$
\STATE For each query draw a batches of examples  $\{\B^+_q, \B_q\}$, where $\B^+_q$ denote a set of sampled relevant documents for $q$ and $\B_q$ denote a set of sampled documents from $\S_q$ 
\FOR{$\x^q_i\in \B^+_q$ for each $q\in\Q_t$}
\STATE $u^{(t+1)}_{q, i}=(1-\gamma_0)u^{(t)}_{q, i}+\gamma_0\frac{1}{|\B_q|}\sum_{\x'\in\B_q}  \exp(h_q(\x'; \w) - h_q(\x; \w))$
\STATE Compute $p_{q, i} = 1/u^{t+1}_{q,i}$
\ENDFOR
\STATE Compute gradient $$G(\w_t)= \frac{1}{|\Q_t|}\frac{1}{|\B_q^+|}\frac{1}{|\B_q|}\sum_{q\in\Q_t}\sum_{\x^q_i\in\B^+_q}\sum_{\x^q_j\in\B_q}p_{q, i}\nabla_\w(h_q(\x^q_j; \w_t) - h_q(\x^q_i; \w_t))$$
\STATE Compute $\m_{t+1} = \beta_1\m_t + (1-\beta_1) G(\w_t)$
\STATE Update $\w_{t+1}=\w_t - \eta \m_{t+1}$
\ENDFOR
\end{algorithmic}
\end{algorithm}

\section{Justification of Stop Gradient Operator}
\label{appendix:stop-gradient-operator}

Below, we provide a justification by showing that the second term in~(\ref{eqn:update_grad}) is close to 0 under a reasonable condition. For simplicity of notation, we let $\psi_i(\w, \hat\lambda_q(\w)) = \psi(h(\x^q_i,\w)- \hat\lambda_q(\w))$.  Its gradient is given by 
\begin{align*}
\nabla_\w\psi_i=\psi'_i(\w,\hat\lambda_q(\w))\bigg(\nabla_\w h(\x^q_i, \w) -\nabla_{\w \lambda}^2 L_q(\w,\hat\lambda_q(\w)) [\nabla_{\lambda}^2 L_q(\w,\hat\lambda_q(\w))]^{-1}\bigg).
\end{align*}
For the purpose of justification, we can approximate $\phi(h_q(\x_i;\w)-\lambda)= \tau_1\log(1+\exp((h_q(\x_i;\w)-\lambda)/\tau_1))$ by a smoothed hinge loss function, $\kappa(h_q(\x_i;\w)-\lambda)=\max_{\alpha}\alpha(h_q(\x_i;\w)-\lambda)-\tau_1\alpha^2/2$, which is equivalent to
\begin{align*}
    \kappa(h_q(\x_i;\w)-\lambda)=
    \begin{cases}
    0, & h_q(\x_i;\w)-\lambda\leq 0 \\
    \frac{(h_q(\x_i;\w)-\lambda)^2}{2\tau_1}, & 0<h_q(\x_i;\w)-\lambda\leq\tau_1 \\
    h_q(\x_i;\w)-\lambda - \frac{\tau_1}{2}, & h_q(\x_i;\w)-\lambda> \tau_1
    \end{cases}
\end{align*}
Please refer to Figure~\ref{fig:three_funcs} for the curves of $[\cdot]_+$ and $\phi(\cdot)$ and $\kappa(\cdot)$. Below, we assume $L_q(\w, \lambda)$ is defined by using $\kappa(h_q(\x_i; \w) -\lambda)$ in place of $\phi(h_q(\x_i;\w)-\lambda)$.

For any $\w$, let us consider a subset $\C_q=\{\x^q_i\in\S_q^+: h_\w(\x^q_i) - \hat\lambda_q(\w)\in(0, \tau_1)\}$. It is not difficult to show that 
\begin{align*}
&\nabla_{\w \lambda}^2 L_q(\w,\hat\lambda_q(\w))= \frac{1}{N_q}\sum_{\x^q_i\in\C_q}\frac{-\partial_\w h(\x^q_i; \w)}{\tau_1}\\
&\nabla_{\lambda}^2 L_q(\w,\hat\lambda_q(\w))= \frac{1}{N_q}\sum_{\x^q_i\in\C_q}\frac{1}{\tau_1}+\tau_2\approx \frac{1}{N_q}\sum_{\x^q_i\in\C_q}\frac{1}{\tau_1}
\end{align*}
for sufficiently small $\tau_1,\tau_2$. Then we have $$\frac{\nabla_{\w \lambda}^2 L_q(\w,\lambda(\w))}{ \nabla_{\lambda}^2 L_q(\w,\lambda_q(\w))} = \frac{1}{|\C_q|}\sum_{\x^q_i\in\C_q}-\partial h_\w(\x^q_i).$$
Assume that $\psi$ is chosen such that $\psi'_i(\w,\lambda_q(\w))\approx 0$ if $h_\w(\x^q_j)-\lambda_q(\w)\not\in[0,\tau_1]$, and $\psi'_i(\w,\lambda_q(\w))\approx c_1$ and $f_{q,i}(g(\w; \x^q_i, \S_q))\approx c_2$ if $h_\w(\x^q_j)-\lambda_q(\w)\in[0,\tau_1]$, then we have

\begin{align*}
\sum_{\x^q_i\in\S_q}\nabla_\w\psi_i f_{q,i}(g(\w; \x^q_i, \S_q))&\approx \sum_{\x_i^q\in\C_q}\psi'_i(\w,\lambda_q(\w))\cdot
    \bigg(\nabla_\w h(\x^q_i; \w)- \frac{1}{|\C_q|}\sum_{\x_j^q\in\C_q}\nabla_\w h(\x^q_j; \w)\bigg)f_{q,i}(g(\w; \x^q_i, \S_q))\\
    &\approx c_1c_2\sum_{\x_i^q\in\C_q}\left(\nabla_\w h_\w(\x^q_i; \w) + \frac{1}{|\C_q|}\sum_{\x_j^q\in\C_q}-\nabla_\w h(\x^q_j; \w)\right)\\
    &=0
\end{align*}
As a result, when $\tau_1$ is small enough the condition $\psi'_i(\w,\lambda_q(\w))\approx 0$ if $h_\w(\x^q_j)-\lambda_q(\w)\not\in[0,\tau_1]$,  and $\psi'_i(\w,\lambda_q(\w))\approx c$ if $h_\w(\x^q_j)-\lambda_q(\w)\in[0,\tau_1]$ is well justified. An example of such $\psi(\cdot)$ is provided in the Figure~\ref{fig:smoothed_ind}.  As a result, with initial warm-up, we can compute the gradient estimator by 

\begin{equation*}
    \begin{aligned}
    &G(\w_t)=\frac{1}{|\B|}\sum_{(q,\x^q_i)\in\B}p_{q,i}\nabla\hat g_{q,i}(\w_t),
    \end{aligned}
\end{equation*}
which simplifies K-SONG by avoiding maintaining and updating $s_{q,t}$.

\begin{figure}
  \begin{minipage}[t]{0.5\linewidth}
    \centering
    \includegraphics[scale=0.3]{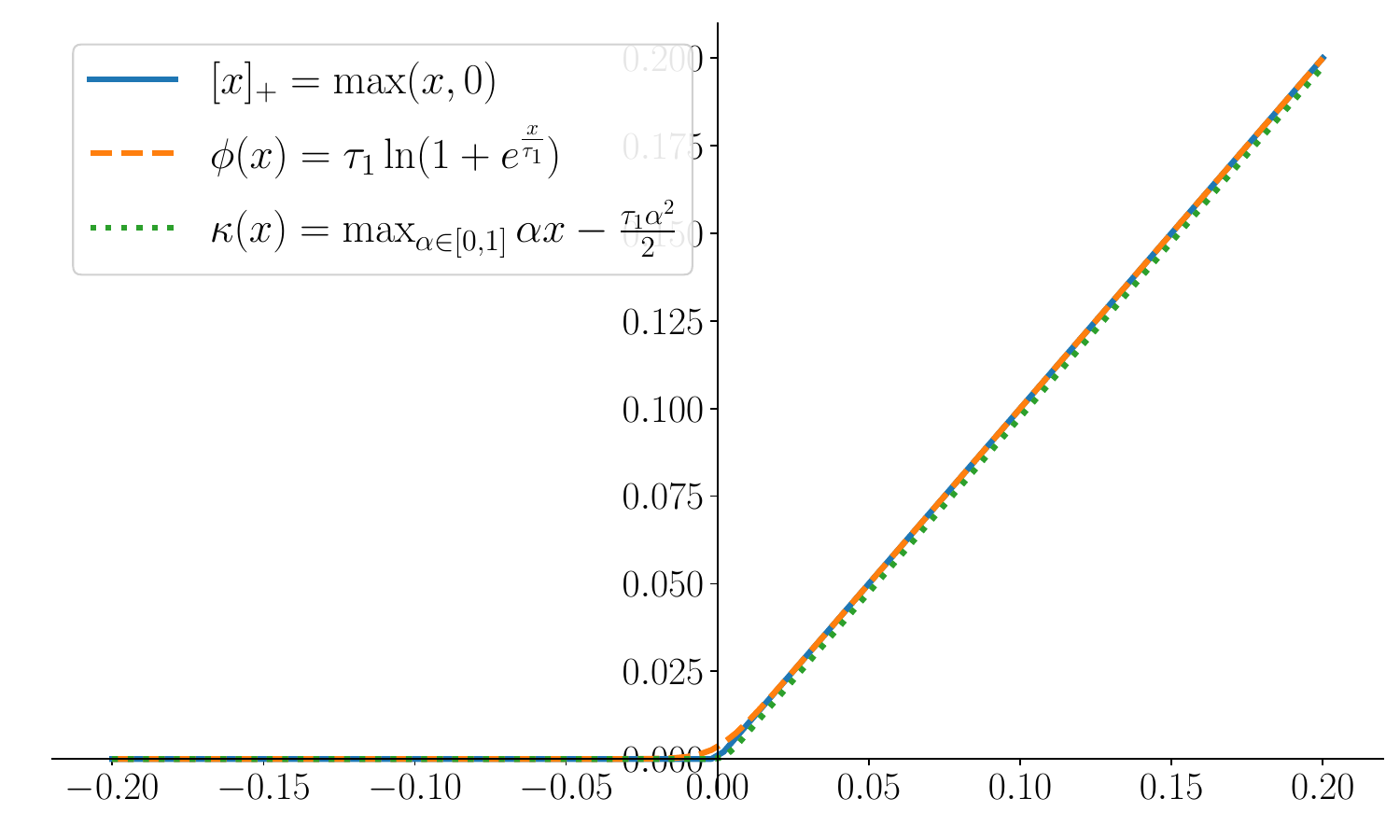}
    \caption{Curves of $[\cdot]_{+}$, $\phi(\cdot)$, and $\kappa(\cdot)$.}
    \label{fig:three_funcs}
  \end{minipage}%
  \begin{minipage}[t]{0.5\linewidth}
    \centering
    \includegraphics[scale=0.3]{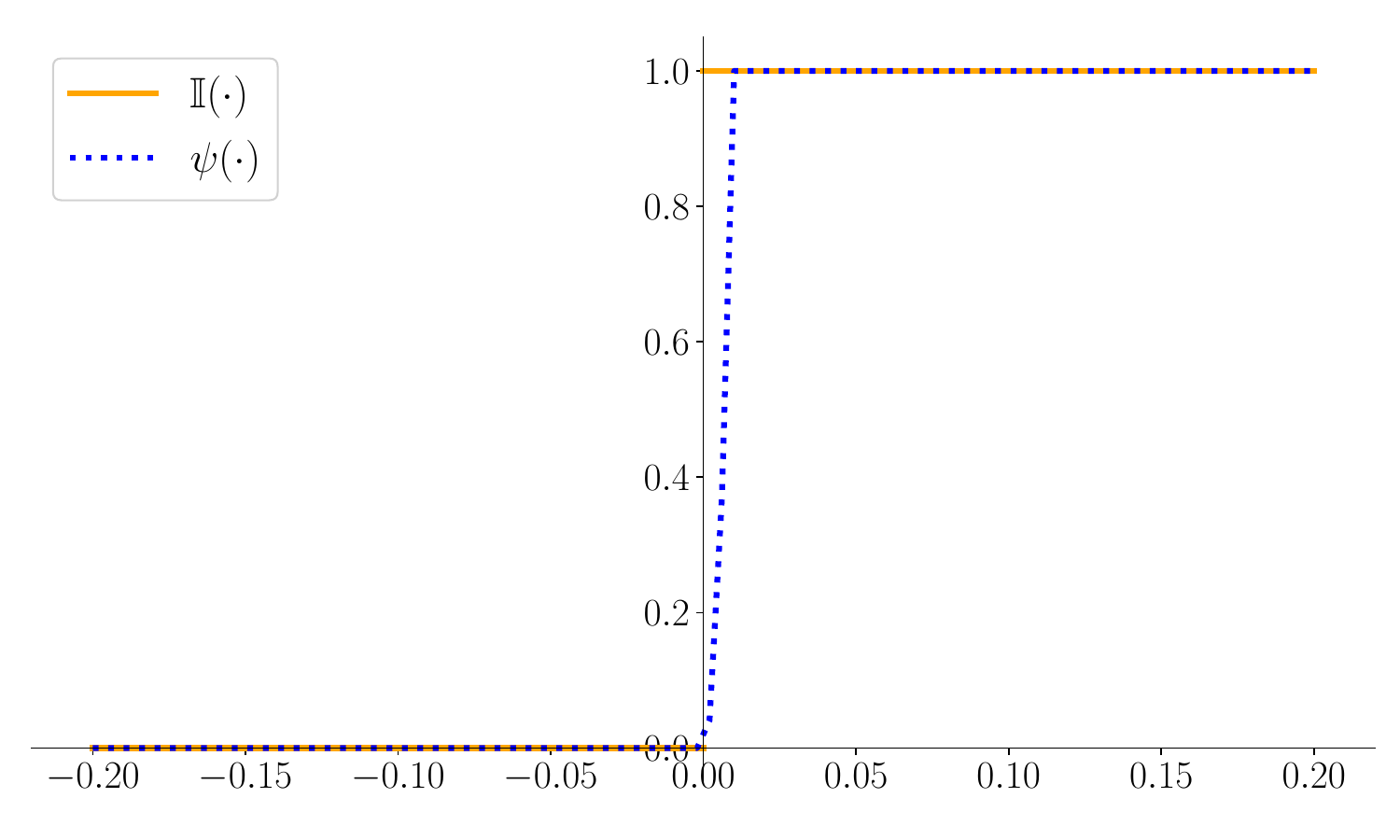}
    \caption{An example of $\psi(\cdot)$}
    \label{fig:smoothed_ind}
  \end{minipage}
\end{figure}

\section{Experiments}

\subsection{Details of Implementation}\label{appendix:implemention}

For the experiments on two LTR datasets, we adopt allRank framework\footnote{https://github.com/allegro/allRank}~\cite{ltr-model}. We implement some baseline methods based on their code. For the recommender systems experiments, we use ReChorus framework\footnote{https://github.com/THUwangcy/ReChorus}~\cite{wang2020make}, which is a general PyTorch framework for Top-K recommendation. We also follow the scripts in ReChorus to preprocess the datasets. The hyper-parameters for SONG and K-SONG are presented in Table~\ref{hyper-parameters}.

We train our models on one Tesla V100 GPU with 32GB memory. The training on the Context-Aware Ranker model takes about 2\textasciitilde3 hours for convergence, while the training of the NeuMF model takes about 8\textasciitilde12 hours for convergence.

\subsection{Details of Data}
\label{appendix:data}

MSLR-WEB30K\footnote{https://www.microsoft.com/en-us/research/project/mslr/} and Yahoo! LTR dataset\footnote{https://webscope.sandbox.yahoo.com} are the largest public LTR datasets from commercial English search engines. We provide the statistics of these two datasets in Table~\ref{ltr-datasets-statistics}. In MSLR-WEB30K dataset, there are 5 folds containing the same data, and each fold randomly splits to training, validation, and test sets. Due to privacy concerns, these datasets do not disclose any text information and only provide feature vectors for each query-document pair. For these two LTR datasets, we standarize the features, log-transforming selected ones, before feeding them to the learning algorithms. Since the lengths of search results lists in the datasets are unequal, we truncate or pad samples to the length of 40 and 100 for Yahoo! LTR dataset and MSLR-WEB30K when training, respectively, but use the full list for evaluation.

MovieLens20M\footnote{https://grouplens.org/datasets/movielens/20m/} contains 20 million ratings applied to 27,000 movies by 138,000 users, and all users have rated at least 20 movies. Netflix Prize dataset\footnote{https://www.kaggle.com/netflix-inc/netflix-prize-data} consists of about 100,000,000 ratings for 17,770 movies given by 480,189 users. We filter the Netflix Prize dataset by retaining users with at least $100$ interactions to cater sufficient information for modeling. In both datasets, users and movies are represented with integer IDs, while ratings range from 1 to 5. The statistics of these two datasets are shown in Table~\ref{rs-datasets-statistics}.

\begin{table}[t]
\caption{Statistics of Learning to Rank Datasets.}
\label{ltr-datasets-statistics}
\vskip 0.1in
\begin{center}
\begin{small}
\begin{sc}
\begin{tabular}{lcc}
\toprule
Dataset & MSLR-WEB30K & Yahoo! LTR dataset \\
\midrule
Query    &  30,000  & 29,921 \\
Q-D pair &  3,771,125  & 709,877 \\
max Q-D pair per query  & 1,245 & 135 \\
min Q-D pair per query & 1 & 1 \\
\bottomrule
\end{tabular}
\end{sc}
\end{small}
\end{center}
\end{table}

\begin{table}[t]
\caption{Statistics of Recommender Systems Datasets.}
\label{rs-datasets-statistics}
\vskip 0.1in
\begin{center}
\begin{small}
\begin{sc}
\begin{tabular}{lcccc}
\toprule
Dataset & \# users & \# items & \# interactions & sparsity \\
\midrule
MovieLens20M & 138,493  & 26,744  & 20,000,263 & 99.46\% \\
Netflix Prize dataset & 236,117  & 17,770 & 89,973,534 & 97.86\% \\
\bottomrule
\end{tabular}
\end{sc}
\end{small}
\end{center}
\end{table}

\begin{table}[t]
\caption{Hyper-parameters for SONG and K-SONG.}
\label{hyper-parameters}
\vskip 0.1in
\begin{center}
\begin{small}
\begin{sc}
\begin{tabular}{lcccc}
\toprule
 & MovieLens20M & Netflix Prize & MSLR-WEB30K & Yahoo! LTR\\
\midrule
$\gamma_0$ & 0.1 & 0.3 & 0.3 & 0.3 \\
$K$ & 300 & 300 & 10 & 10 \\
\bottomrule
\end{tabular}
\end{sc}
\end{small}
\end{center}
\end{table}

\subsection{Additional Experimental Results}
\label{sec:additional-exp-results}

\begin{table*}[t]
\caption{The test NDCG on two Learning to Rank datasets. We report the average NDCG@$k$ ($k\in[1,3,5]$) and standard deviation over 3 runs with different random seeds.}
\label{tab:ltr-testing-results}
\vskip -1in
\begin{center}
\begin{small}
\begin{sc}
\begin{tabular}{p{2.0cm}p{2.0cm}p{2.0cm}p{2.0cm}p{2.0cm}p{2.0cm}p{2.0cm}}
\toprule
\multirow{2}{*}{\thead{Method}} &
\multicolumn{3}{c}{\thead{MSLR WEB30K}} &
\multicolumn{3}{c}{\thead{Yahoo! LTR Dataset}} \\
\cmidrule(lr){2-4}
\cmidrule(lr){5-7}
& NDCG@1 & NDCG@3 & NDCG@5 & NDCG@1 & NDCG@3 & NDCG@5 \\
\midrule
RankNet  & 0.5138$\pm$0.0008 & 0.5105$\pm$0.0004 & 0.5159$\pm$0.0003
 & 0.7066$\pm$0.0006 & 0.7150$\pm$0.0004 & 0.7368$\pm$0.0005 \\
ListNet  & 0.5105$\pm$0.0001 & 0.5058$\pm$0.0001 & 0.5146$\pm$0.0002
 & 0.7066$\pm$0.0002 & 0.7151$\pm$0.0004 & 0.7352$\pm$0.0004 \\
ListMLE  & 0.5153$\pm$0.0012 & 0.5074$\pm$0.0002 & 0.5136$\pm$0.0005
 & 0.7067$\pm$0.0008 & 0.7146$\pm$0.0006 & 0.7353$\pm$0.0007 \\
LambdaRank  & 0.5173$\pm$0.0014 & 0.5118$\pm$0.0003 & 0.5187$\pm$0.0003
 & 0.7084$\pm$0.0003 & 0.7155$\pm$0.0002 & 0.7352$\pm$0.0004 \\
ApproxNDCG & 0.5204$\pm$0.0007 & 0.5114$\pm$0.0005 & 0.5179$\pm$0.0006
 & 0.7085$\pm$0.0009 & 0.7152$\pm$0.0007 & 0.7350$\pm$0.0006 \\
NeuralNDCG & 0.5160$\pm$0.0006 & 0.5101$\pm$0.0005 & 0.5155$\pm$0.0002
 & 0.7076$\pm$0.0003 & 0.7139$\pm$0.0001 & 0.7349$\pm$0.0003 \\
SONG & 0.5265$\pm$0.0005 & 0.5136$\pm$0.0006 & \textbf{0.5206}$\pm$0.0003
 & \textbf{0.7131}$\pm$0.0002 & 0.7187$\pm$0.0004 & 0.7390$\pm$0.0002 \\
K-SONG & \textbf{0.5271}$\pm$0.0006 & \textbf{0.5147}$\pm$0.0006 & 0.5204$\pm$0.0003
 & 0.7128$\pm$0.0004 & \textbf{0.7191}$\pm$0.0004 & \textbf{0.7394}$\pm$0.0008\\
\bottomrule
\end{tabular}
\end{sc}
\end{small}
\end{center}
\end{table*}

\begin{table*}[t]
\caption{The test NDCG on two movie recommendation datasets. We report the average NDCG@$k$ ($k\in[10,20,50]$) and standard deviation over 3 runs with different random seeds.}
\label{tab:rs-testing-results}
\vskip 0.1in
\begin{center}
\begin{small}
\begin{sc}
\begin{tabular}{p{2.0cm}p{2.0cm}p{2.0cm}p{2.0cm}p{2.0cm}p{2.0cm}p{2.0cm}}
\toprule
\multirow{2}{*}{\thead{Method}} &
\multicolumn{3}{c}{\thead{MovieLens20M}} &
\multicolumn{3}{c}{\thead{Netflix Prize Dataset}} \\
\cmidrule(lr){2-4}
\cmidrule(lr){5-7}
& NDCG@10 & NDCG@20 & NDCG@50 & NDCG@10 & NDCG@20 & NDCG@50 \\
\midrule
RankNet  & 0.0538$\pm$0.0011 & 0.0744$\pm$0.0013 & 0.1086$\pm$0.0013 & 0.0362$\pm$0.0002 & 0.0489$\pm$0.0003  & 0.0730$\pm$0.0003 \\
ListNet  & 0.0660$\pm$0.0003 & 0.0875$\pm$0.0004 & 0.1227$\pm$0.0003 & 0.0532$\pm$0.0002 & 0.0700$\pm$0.0002  & 0.0992$\pm$0.0002 \\
ListMLE  & 0.0588$\pm$0.0001 & 0.0799$\pm$0.0001 & 0.1137$\pm$0.0001 & 0.0376$\pm$0.0003 & 0.0508$\pm$0.0004 & 0.0753$\pm$0.0001 \\
LambdaRank  & 0.0697$\pm$0.0001 & 0.0913$\pm$0.0002 & 0.1259$\pm$0.0001 & 0.0531$\pm$0.0002 & 0.0693$\pm$0.0002 & 0.0976$\pm$0.0003 \\
ApproxNDCG & 0.0735$\pm$0.0005 & 0.0938$\pm$0.0003 & 0.1284$\pm$0.0002 & 0.0434$\pm$0.0005 & 0.0592$\pm$0.0009 & 0.0873$\pm$0.0012 \\
NeuralNDCG & 0.0692$\pm$0.0003 & 0.0901$\pm$0.0003 & 0.1232$\pm$0.0007 & 0.0554$\pm$0.0002  & 0.0718$\pm$0.0003 & 0.1003$\pm$0.0002 \\
SONG & \textbf{0.0748}$\pm$0.0002 & 0.0969$\pm$0.0002 & 0.1326$\pm$0.0001 & 0.0571$\pm$0.0002 & \textbf{0.0749}$\pm$0.0002  & \textbf{0.1050}$\pm$0.0003 \\
K-SONG & 0.0747$\pm$0.0002 & \textbf{0.0973}$\pm$0.0003 & \textbf{0.1340}$\pm$0.0001 & \textbf{0.0573}$\pm$0.0003 & 0.0743$\pm$0.0003  & 0.1042$\pm$0.0001 \\
\bottomrule
\end{tabular}
\end{sc}
\end{small}
\end{center}
\vskip -0.1in
\end{table*}

\begin{figure*}[!h]
\centering
\begin{minipage}[c]{0.24\textwidth}
\centering\includegraphics[width=1\textwidth]{training_curves/ml-20m-training-curves.pdf}
\end{minipage}
\begin{minipage}[c]{0.24\textwidth}
\centering\includegraphics[width=1\textwidth]{training_curves/netflix-training-curves.pdf}
\end{minipage}
\begin{minipage}[c]{0.24\textwidth}
\centering\includegraphics[width=1\textwidth]{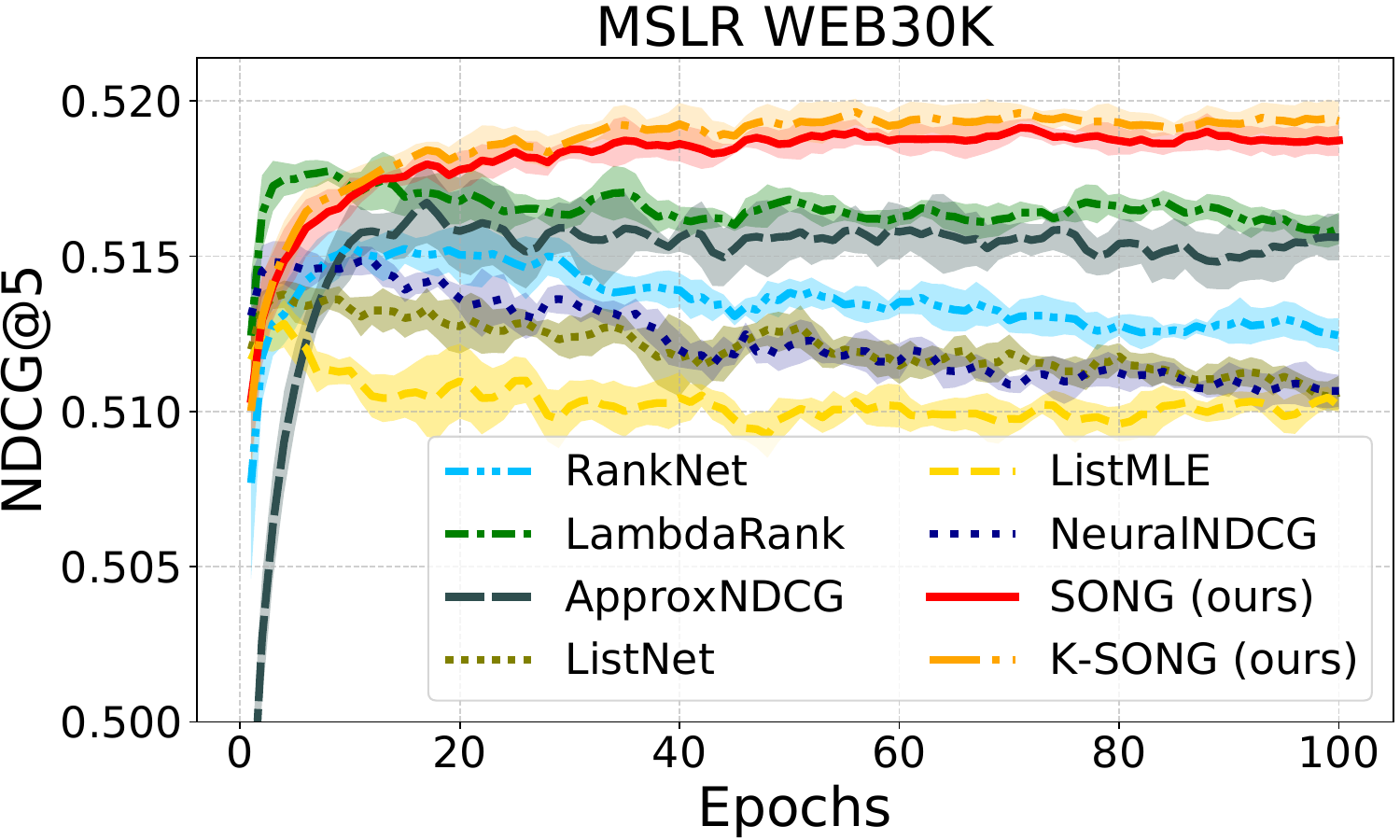}
\end{minipage}
\begin{minipage}[c]{0.24\textwidth}
\centering\includegraphics[width=1\textwidth]{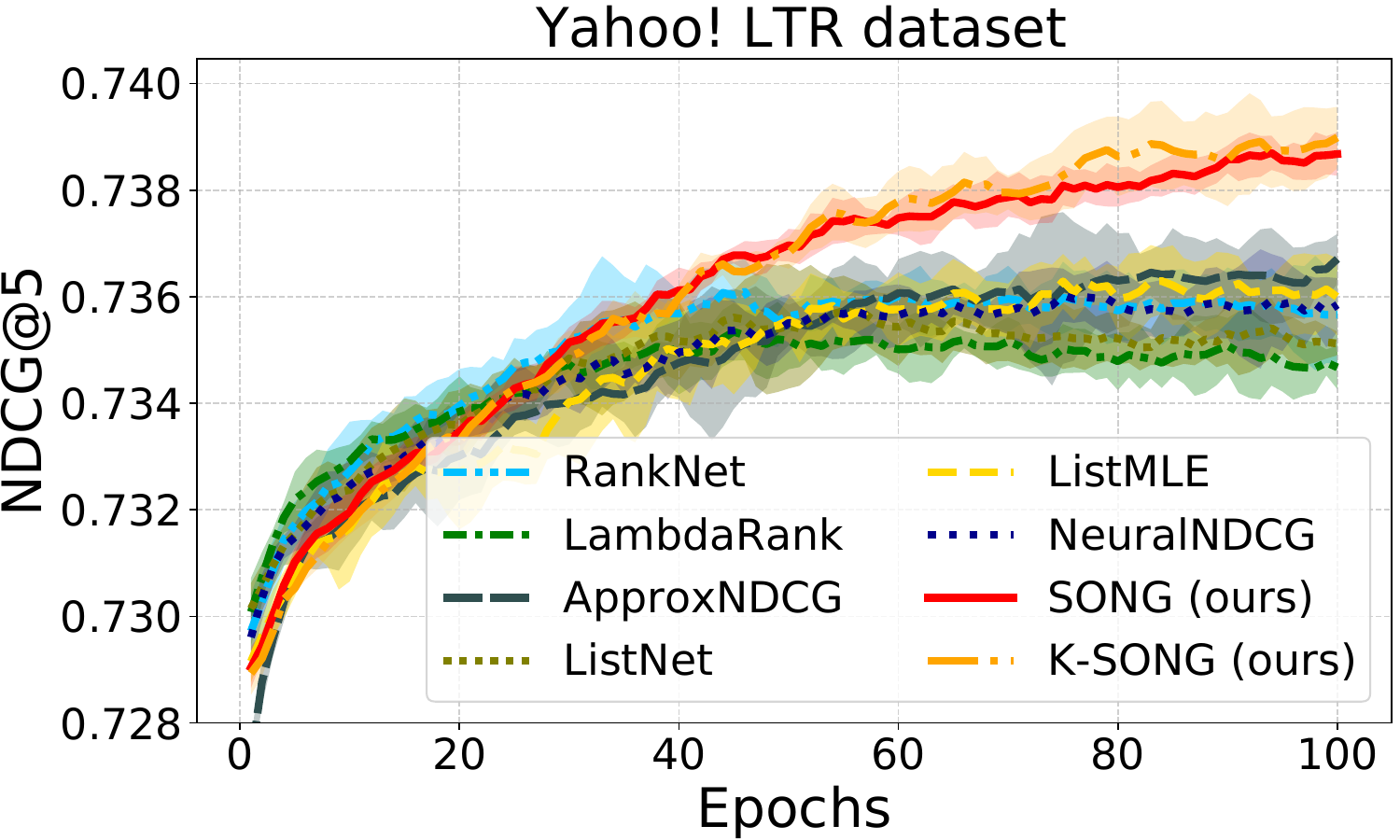}
\end{minipage}
\caption{Comparison of convergence of different methods in terms of validation NDCG@5 scores.}
\label{fig:training-curves}
\end{figure*}

\begin{figure*}[!h]
\centering
\begin{minipage}[c]{0.24\textwidth}
\centering\includegraphics[width=1\textwidth]{ablation_study/ml-20m-ablations.pdf}
\end{minipage}
\begin{minipage}[c]{0.24\textwidth}
\centering\includegraphics[width=1\textwidth]{ablation_study/netflix-ablations.pdf}
\end{minipage}
\begin{minipage}[c]{0.24\textwidth}
\centering\includegraphics[width=1\textwidth]{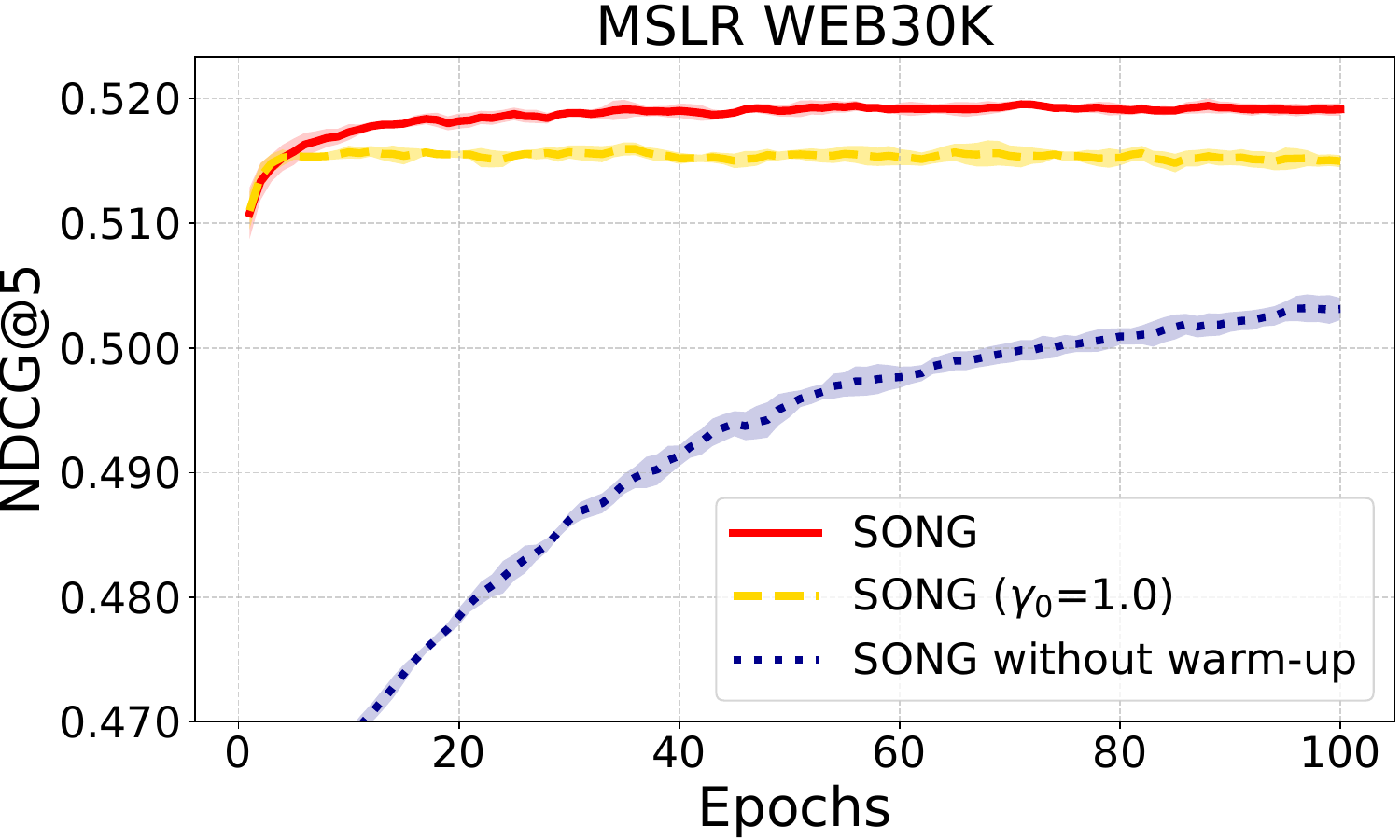}
\end{minipage}
\begin{minipage}[c]{0.24\textwidth}
\centering\includegraphics[width=1\textwidth]{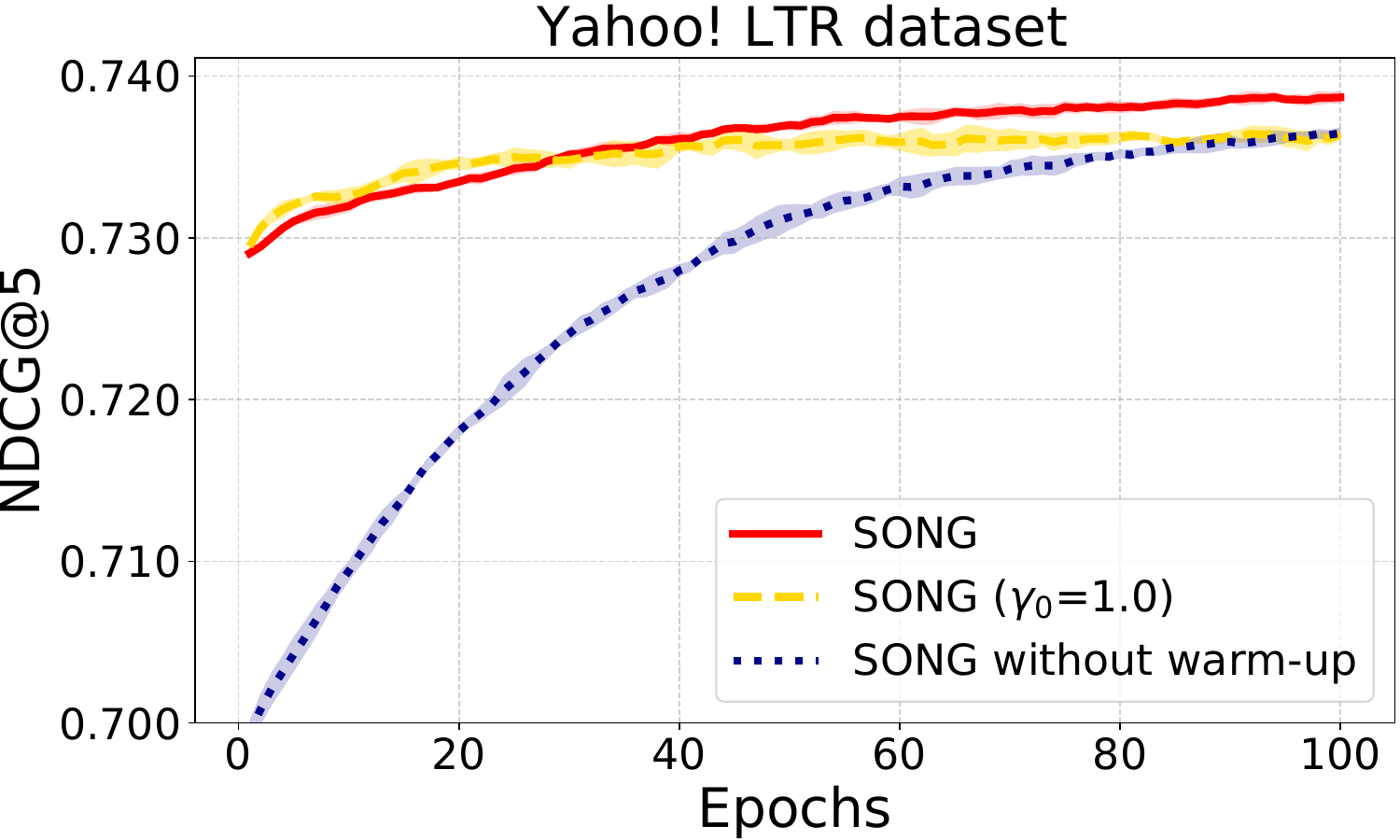}
\end{minipage}
\caption{Ablation study on two variants of SONG on four different datasets.}
\label{fig:ablation}
\end{figure*}

{\bf Convergence Speed.} We present the training curves on four different datasets (MovieLens20M, Netflix Prize dataset, MSLR WEB30K, and Yahoo! LTR dataset) in Figure~\ref{fig:training-curves}.

{\bf Ablation Studies.} We provide the full ablation studies on four datasets in Figure~\ref{fig:ablation}.

{\bf The Effect of Varying $\gamma_0$.} We adjust $\gamma_0$ in our algorithms from $\{0.1,0.3,0.5,0.7,1.0\}$, and report the training curves of warm-up and SONG in Figure~\ref{fig:varying_gamma}. We observe that $\gamma_0=0.1$ achieves the best performance in most cases. Setting $\gamma_0=1.0$ is equivalent to update the model with a biased stochastic gradient, which leads to the worst performance. These results signify the importance of moving average estimators in our methods.

\begin{figure*}[!h]
\centering
\begin{minipage}[c]{0.24\textwidth}
\centering\includegraphics[width=1\textwidth]{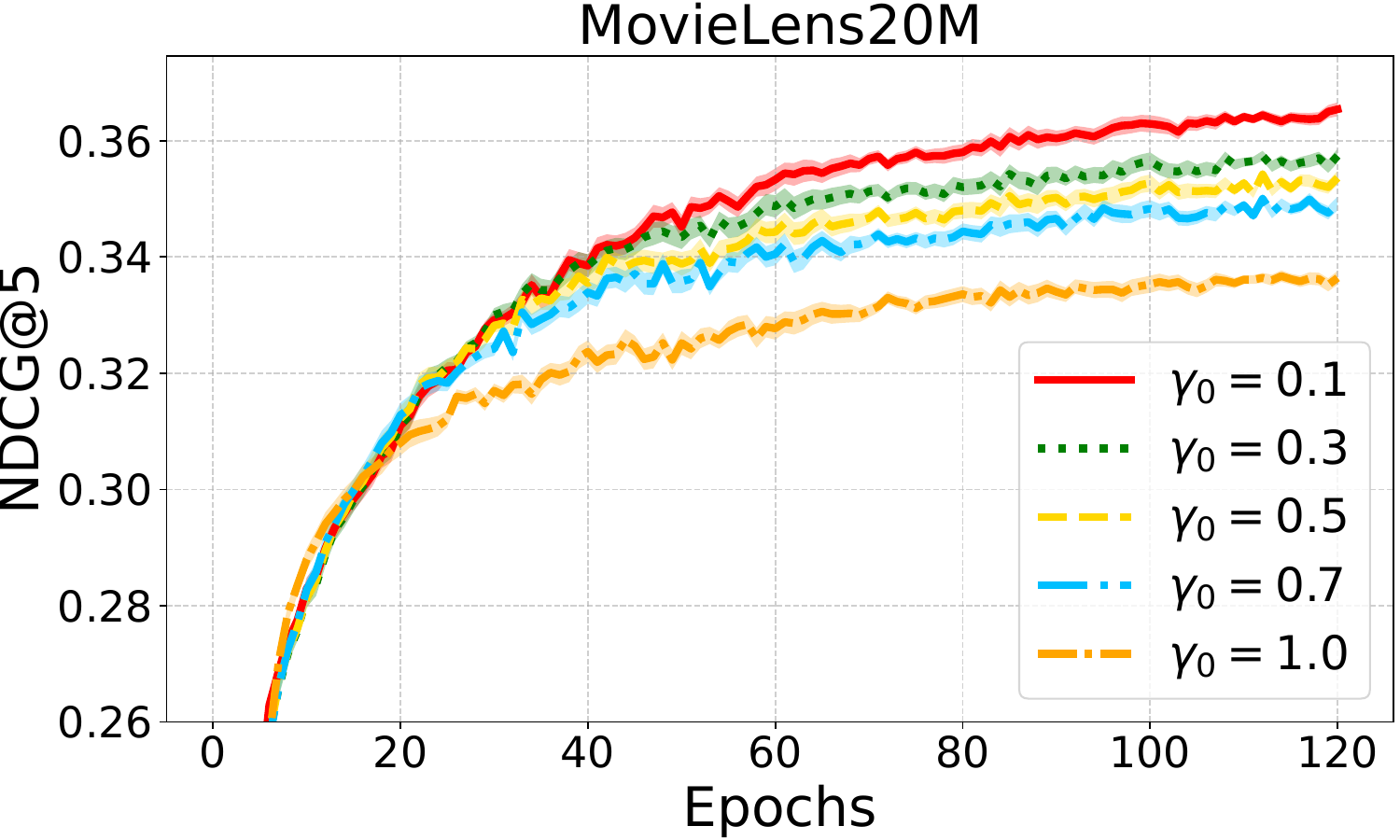}
\end{minipage}
\begin{minipage}[c]{0.24\textwidth}
\centering\includegraphics[width=1\textwidth]{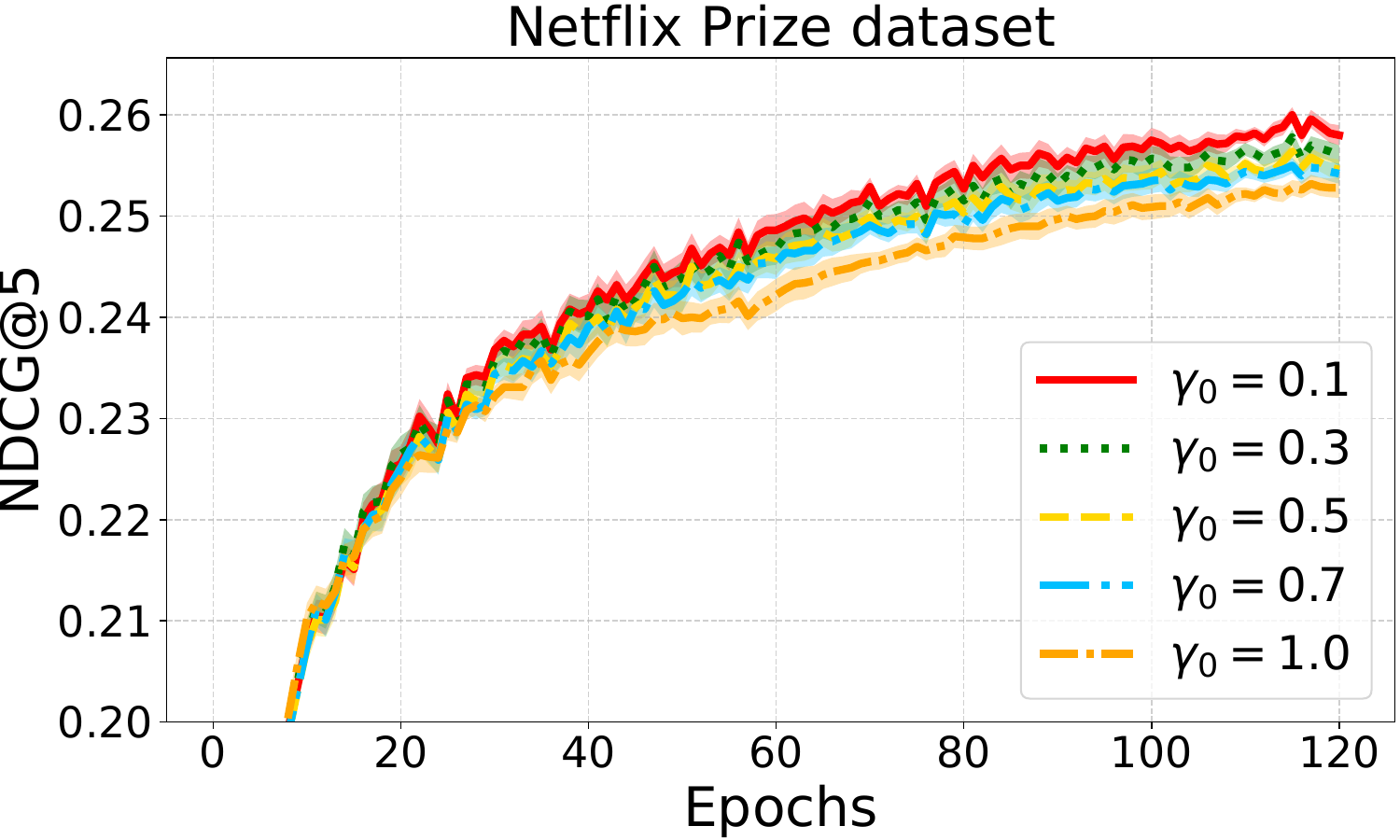}
\end{minipage}
\begin{minipage}[c]{0.24\textwidth}
\centering\includegraphics[width=1\textwidth]{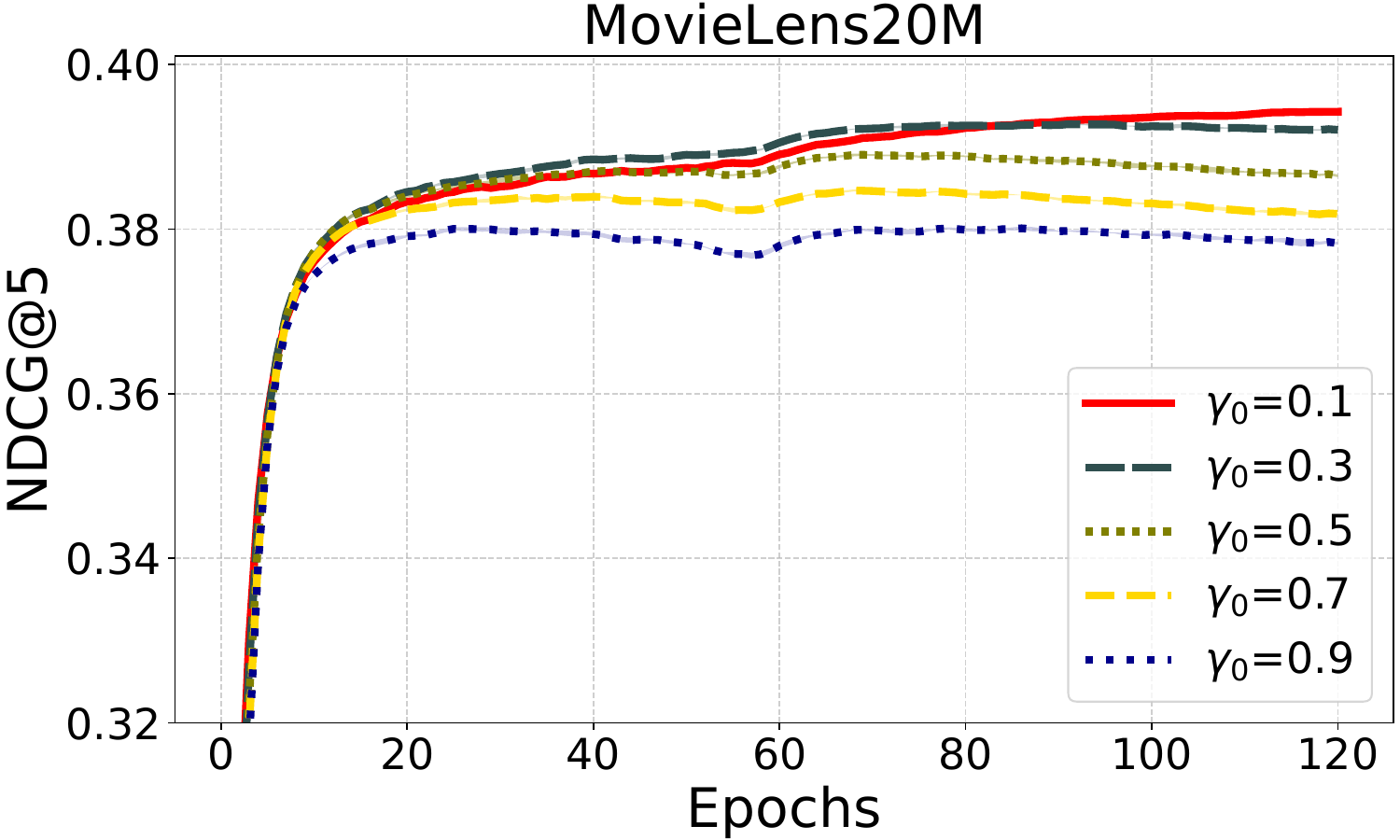}
\end{minipage}
\begin{minipage}[c]{0.24\textwidth}
\centering\includegraphics[width=1\textwidth]{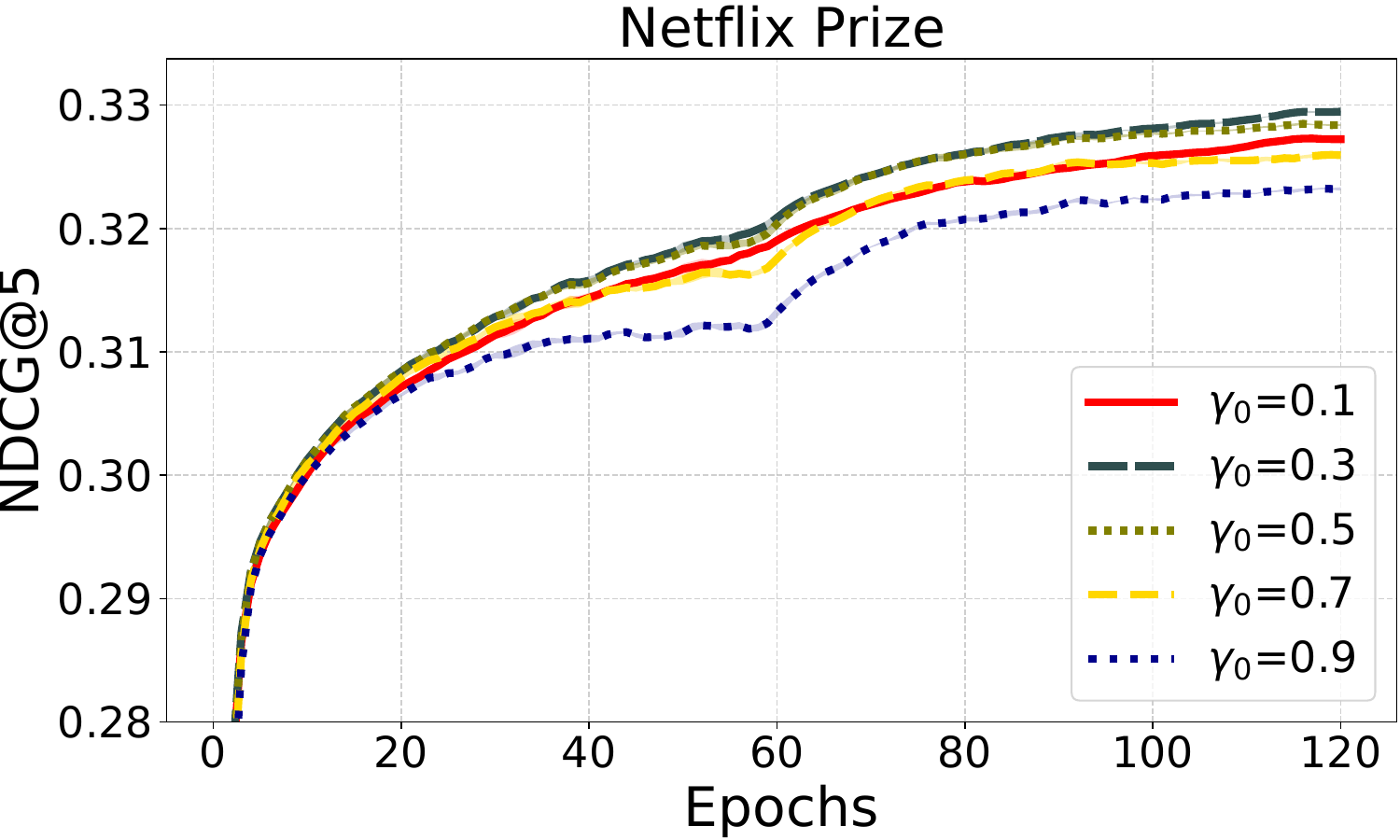}
\end{minipage}
\caption{The effect of varying $\gamma_0$ for warm-up (left two) and SONG (right two).}
\label{fig:varying_gamma}
\vskip -0.0in
\end{figure*}

{\bf Comparison with Full-Items Training.} We provide the negative loglikelihood loss curves of three different training methods for warm-up in Figure~\ref{fig:full_batch_comp}.

\begin{figure}[!h]
\centering
\begin{minipage}[c]{0.24\textwidth}
\centering\includegraphics[width=1\textwidth]{full_batch_comp/ml-20m-ndcg.pdf}
\end{minipage}
\begin{minipage}[c]{0.24\textwidth}
\centering\includegraphics[width=1\textwidth]{full_batch_comp/netflix-ndcg.pdf}
\end{minipage}
\begin{minipage}[c]{0.24\textwidth}
\centering\includegraphics[width=1\textwidth]{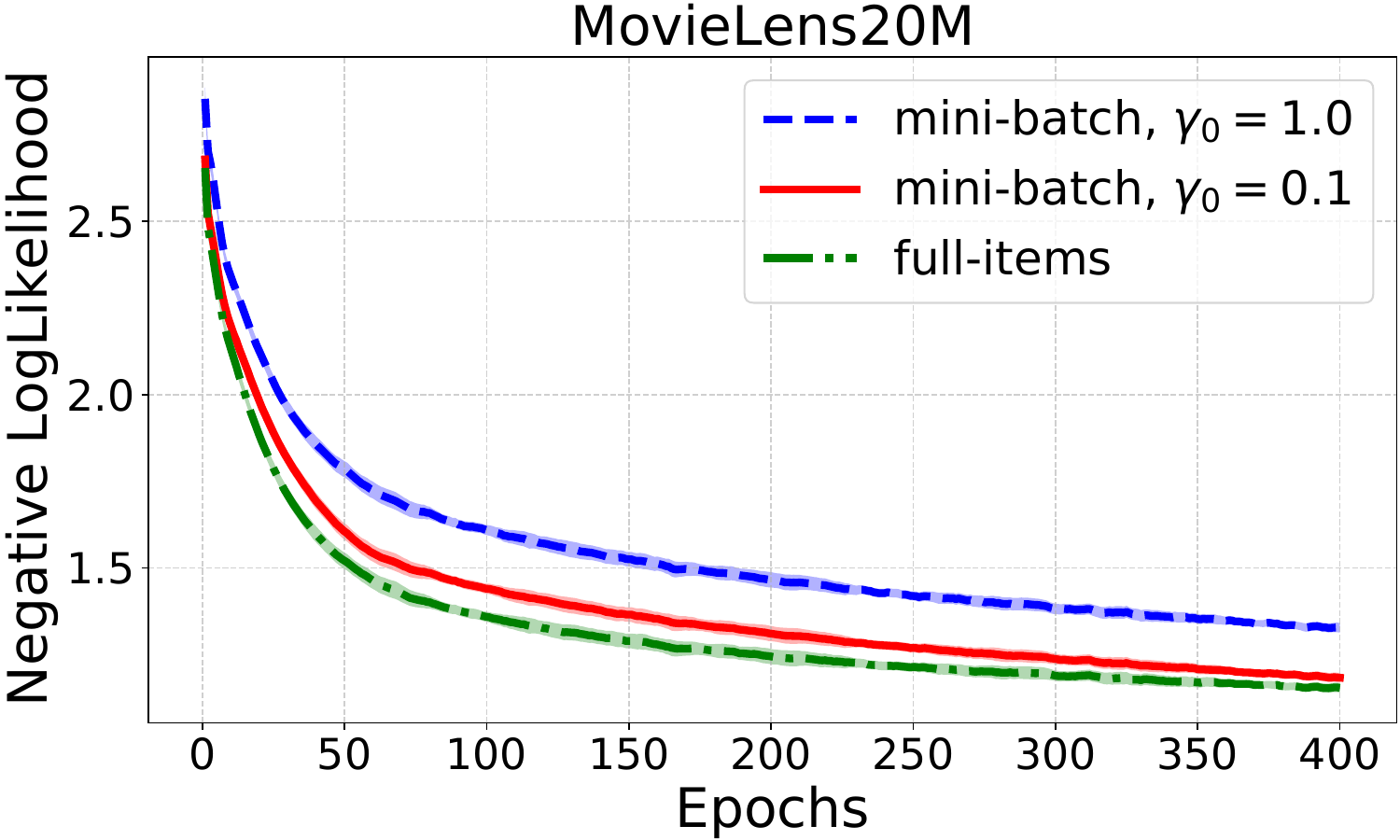}
\end{minipage}
\begin{minipage}[c]{0.24\textwidth}
\centering\includegraphics[width=1\textwidth]{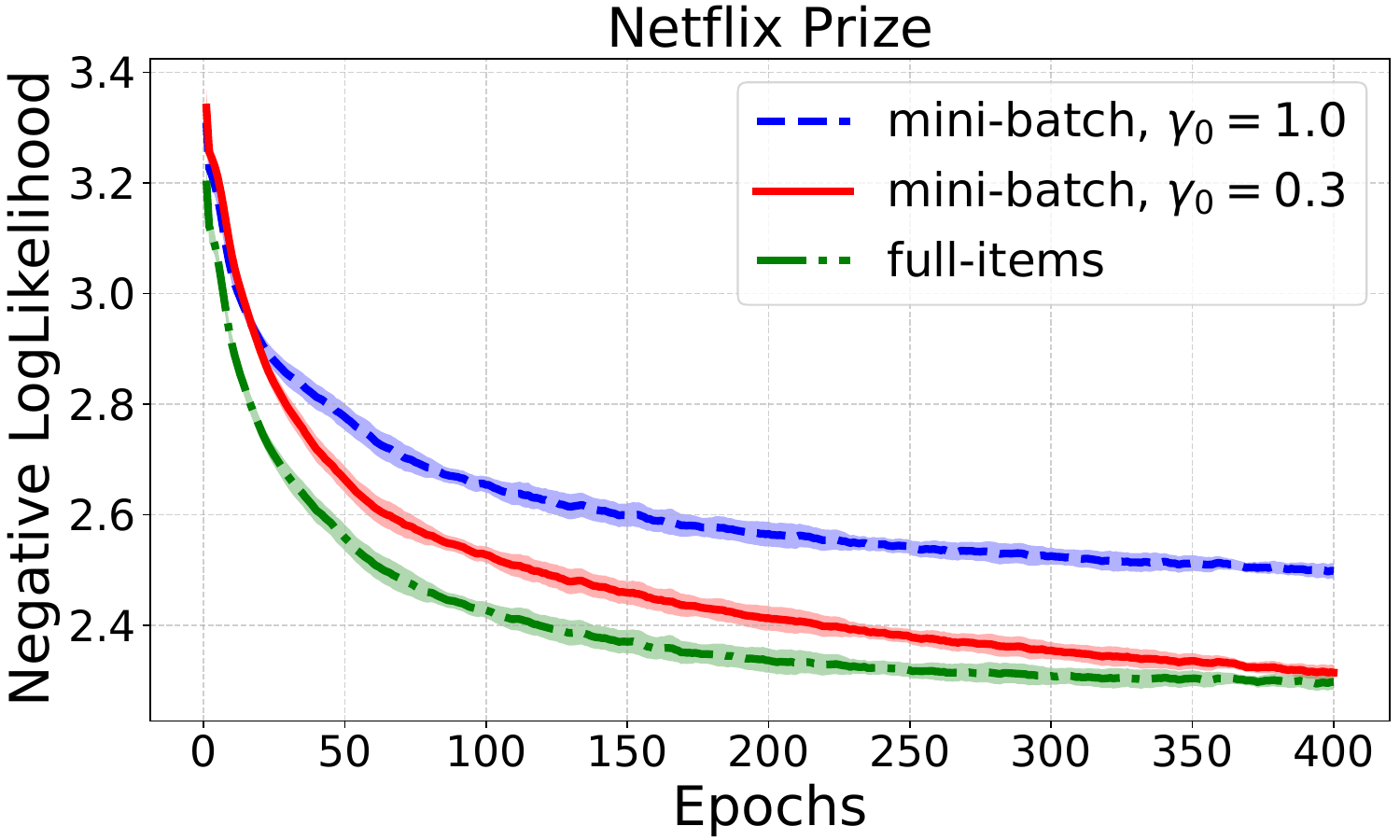}
\end{minipage}
\caption{Comparison of full-items and mini-batch training on SONG (left two) and warm-up (right two).}
\label{fig:full_batch_comp}
\end{figure}

\subsection{Experiments on Multi-label Classification}
\label{sec:multi-label-cls}

To further verify the effectiveness of our methods, we also conduct experiments on multi-label datasets. Similar to Learning to Rank task, we treat each instance as a query and each label as an item.  We adopt XML-CNN\footnote{https://github.com/siddsax/XML-CNN}~\citep{liu2017deep} as our base model. We download data from The Extreme Classification Repository\footnote{http://manikvarma.org/downloads/XC/XMLRepository.html\#Bi13} and conduct experiments on two datasets: EUR-Lex~\citep{mencia2008efficient} and Wiki10-31K~\citep{zubiaga2012enhancing}. The statistics of these two datasets are presented in Table~\ref{stats_ml_data}. In our experiments, we use raw data to classify.

\begin{table}[!hbtb]
    \centering
    \caption{Statistics of Multi-label Datasets.}
    \begin{tabular}{cccccc}
    \toprule
        \multirow{2}*{Dataset} & \multirow{2}*{Labels} & Training &Testing& Avg. Points&Avg. Labels \\
        && Samples&Samples& per Label&per Points\\ \midrule
        EURLex-4K & 3,993&	15,539&	3,809&	25.73&	5.31\\
        Wiki10-31K& 309,38& 14,146 & 6,616 & 8.52 & 18.64 \\\bottomrule
    \end{tabular}
    \label{stats_ml_data}
\end{table}

In extreme multi-label (XML) classification, label spaces usually are large; however, each instance only has very few relevant labels. Therefore, we adopt NDCG@$k$ as our evaluation metric, which is also a common way in evaluating XML methods.  

Upon XML-CNN, we compare our method with other NDCG optimization methods: ApproxNDCG~\citep{ApproxNDCG} and NeuralNDCG~\citep{NeuralNDCG}. The results are summarized in Table~\ref{results_ml_data}.

\begin{table}[!hbtb]
    \centering
    \caption{Results in NDCG@$k$; bold indicates the best performance among all methods}
    \begin{tabular}{ccccccc}
    \toprule
         Datasets & Metrics & Baseline & ApproxNDCG & NeuralNDCG & SONG & K-SONG \\\midrule
         \multirow{2}{*}{EUR-Lex} 
         &NDCG@3&67.15&66.59&67.68&67.84&\textbf{68.11} \\
         &NDCG@5&61.13&60.23&\textbf{61.86}&61.32&61.74 \\ \midrule
         \multirow{2}{*}{Wiki10-31K}
         &NDCG@3&71.26&71.49&71.52&72.90&\textbf{74.01} \\
         &NDCG@5&63.23&62.43&62.85&65.10&\textbf{66.15} \\ \bottomrule
    \end{tabular}
    \label{results_ml_data}
\end{table}

\section{Convergence Analysis}\label{appendix:convergence-analysis}

\subsection{Analysis of SONG}
For simplicity, we rewrite problem~(\ref{eqn:NDCG}) as the following compositional optimization problem,
\begin{equation}\label{prob_ana1}
\min_\w \quad\frac{1}{n}\sum_{i\in\S}  f_i(g_i(\w)).
\end{equation} 
One may reorder the set of queries $\S$ so that each pair $(q,\x_i^q)$ has a single index. We abuse the notation $\S$ denoting the set of the new indexing. Then the equivalence between problem~(\ref{eqn:NDCG}) and (\ref{prob_ana1}) is established. Furthermore, SONG can be rewritten as Algorithm~\ref{algo_ana1} accordingly. In fact, problem~(\ref{prob_ana1}) can be seen as a special case of problem~(\ref{prob_ana2}), where $\psi_i$'s are constant functions. Hence, Theorem~\ref{thm:2} naturally follows from Theorem~\ref{thm:3}, of which the proof will be presented in the following section.

\begin{algorithm}[!h]
\caption{}\label{algo_ana1}
\begin{algorithmic}
\REQUIRE $\w_0,\m_0, u^0, \gamma_0, \beta_1, \eta$
\ENSURE $\w_T$
\FOR{$t=0,1,\dots,T-1$}
\STATE Draw batch of queries $\B_1^t\in \{1,\dots,n\}$
\STATE Draw batch of items $\B_{2,i}^t$ for each $i\in \B_1^t$
\STATE Compute $ u_i^{t+1}=\begin{cases}(1-\gamma_0) u_{i}^t+\gamma_0 g_i(\w_t;\B_{2,i}^t)\quad & \text{if }i\in \B_1^t\\  u_{i}^t & \text{o.w.} \end{cases}$
\STATE Compute stochastic gradient estimator $G(\w_t)=\frac{1}{|\B_1^t|}\sum_{i\in \B_1^t} \nabla g_i(\w_t;\B_{2,i}^t)\nabla f_i( u_i^t)$
\STATE $\m_{t+1}=\beta_1\m_t+(1-\beta_1) G(\w_t)$
\STATE $\w_{t+1}=\w_t-\eta \m_{t+1}$
\ENDFOR
\end{algorithmic}
\end{algorithm}

\subsection{Analysis of K-SONG}
In this section, we present a convergence analysis for K-SONG. Similarly to the analysis of SONG, we reorder the set of queries $\S$ and generalize problem~(\ref{eqn:KNDCG}) into the following compositional bilevel optimization problem,
\begin{equation}\label{prob_ana2}
\begin{aligned}
&\min_\w \quad F(\w):=\frac{1}{n}\sum_{i\in\S} \psi_i(\w, \lambda_i(\w)) f_i(g_i(\w))\\
& s.t.\quad \lambda_i(\w) = \arg\min_{\lambda} L_i(\w, \lambda).
\end{aligned}
\end{equation}
This allows us to rewrite K-SONG into Algorithm~\ref{algo_ana2} accordingly.

\textbf{Notations:} Throughout this convergence analysis section, all subscript $i$ represents the block of variable or function corresponding to the $i$th query. The following notations will be used,
\begin{equation*}
    \begin{aligned}
    &\delta_{\lambda,t}:=\|\lambda(\w_t)-\lambda^t\|^2,\quad
    \delta_{g,t}:=\|g(\w_t)-u^t\|^2, \quad
    \delta_{L\lambda\lambda,t}:=\|\nabla^2_{\lambda\lambda}L(\w_t,\lambda(\w_t))-s^t\|^2
    \end{aligned}
\end{equation*}

\begin{algorithm}[t]
\caption{}\label{algo_ana2}
\begin{algorithmic}
\REQUIRE $\w_0,\m_0,\lambda^0,u^0,s^0, \gamma_0,  \gamma_0', \beta_1,\eta_0,  \eta_1$
\ENSURE $\w_T$
\FOR{$t=0,1,\dots,T-1$}
\STATE Draw batch of queries $\B_1^t\in \{1,\dots,n\}$
\STATE Draw batch of items $\B_{2,i}^t$ for each $i\in \B_1^t$
\STATE Compute $  u_i^{t+1}=\begin{cases}(1-\gamma_0)  u_{i}^t+\gamma_0 g_i(\w_t;\B_{2,i}^t)\quad & \text{if }i\in \B_1^t\\   u_{i}^t & \text{o.w.} \end{cases}$
\STATE Compute $\lambda_i^{t+1}=\begin{cases}\lambda_{i}^t-\eta_0 \nabla_\lambda L_i(\w_t,\lambda_i^t;\B_{2,i}^t)\quad & \text{if }i\in \B_1^t\\ \lambda_{i}^t & \text{o.w.}\end{cases}$
\STATE Compute $s_i^{t+1}=\begin{cases} (1-\gamma_0')s_i^t+\gamma_0'\nabla^2_{\lambda\lambda}L_i(\w_t,\lambda_i^t;\B_{2,i}^t) \quad & \text{if }i\in \B_1^t\\ s_i^t &\text{o.w.}\end{cases}$
\STATE Compute stochastic gradient estimator $G(\w_t)$ according to (\ref{update_grad})
\STATE $\m_{t+1}=\beta_1\m_t+(1-\beta_1) G(\w_t)$
\STATE $\w_{t+1}=\w_t-\eta_1 \m_{t+1}$
\ENDFOR
\end{algorithmic}
\end{algorithm}

We make the following assumptions regarding problem~(\ref{prob_ana2}).
\begin{ass}\label{assump_ana}
\,
\begin{itemize}
    \item Functions $\psi_i,f_i,g_i,L_i$ are $L_\psi,L_f,L_g, L_L$-smooth respectively for all $i$. 
    \item Functions $\psi_i,f_i,g_i,\lambda_i$ are $C_\psi,C_f,C_g,C_\lambda$-Lipschitz continuous respectively for all $i$. Function $L_i$ is $\mu_L$-strongly convex for all $i$.
    \item $\nabla^2_{\w\lambda}L_i(\w,\lambda),\nabla^2_{\lambda\lambda}L_i(\w,\lambda)$ are $L_{L\w\lambda},L_{L\lambda\lambda}$-Lipschitz continuous respectively with respect to $(\w,\lambda)$ for all $i$. 
    \item $\psi_i$ and $f_i$ are bounded by $B_\psi$ and $B_f$ respectively, i.e. $\|\psi_i(\w,\lambda)\|\leq B_\psi$ and $\|f_i(g)\|\leq B_f$ for all $\w,\lambda, i,g$.
    \item $\|\nabla^2_{\w\lambda}L_i(\w,\lambda)\|^2\leq C_{L\w\lambda}^2$, $\gamma I \preceq \nabla^2_{\lambda\lambda}L_i(\w,\lambda;\B)\preceq L_L I$ for all $i$
    \item Unbiased stochastic oracles $g_i,\nabla g_i,\nabla_\lambda L_i,\nabla^2_{\lambda\lambda}L_i,\nabla^2_{\w\lambda}L_i$
    have bounded variance $\sigma^2$.
\end{itemize}
\end{ass}

Now we show that problem~(\ref{eqn:KNDCG}) satisfies Assumption~\ref{assump_ana}. Here we consider the squared hinge loss $\ell(h_q(\x';\w),h_q(\x;\w))=\max\{0,h_q(\x';\w)-h_q(\x;\w)+c\}^2$ where $c$ is a margin parameter. Suppose the score function and its gradients $h_q(\x; \w),\nabla_\w h_q(\x; \w),\nabla^2_\w h_q(\x; \w)$ are bounded by finite constants $c_h,c_{h'},c_{h''}$ respectively. As an average of squared hinge loss, function $g_i(\w)$ in problem~(\ref{prob_ana2}) has bounded gradients $\nabla g_i(\w)\leq 8c_h c_{h'}$ and $\nabla^2 g_i(\w)\leq 8c_{h'}^2+8c_h c_{h''}$ for each $i\in \S$. Hence $g_i$ is Lipschitz continuous and smooth. Moreover, with $m>2c_h$, there exists $c_\ell>0$ such that $\ell(h_q(\x_1;\w)-h_q(\x_2;\w))\geq c_\ell$ for all $\x_1,\x_2$. Function $f_i(g)=f_{q,i}(g)=\frac{1}{Z_q}\frac{1-2^{y^q_i}}{\log_2(N_q g+ 1)}$ is thus bounded, Lipschitz continuous and smooth for each $i=(q,\x_i^q)\in \S$. For function $\psi_i=\psi(h_q(\x_i^q;\w)-\lambda)$, we consider the logistic loss, then $\psi_i$ is naturally bounded, Lipschitz continuous and smooth. The smoothness and strong convexity of $L_i$ are proved in Lemma~\ref{lemma:4}. In fact, the strong convexity of $L_i$ implies the lower boundedness $\gamma=\tau_2$ of $\nabla_{\lambda\lambda}L_i(\w,\lambda;\B)$. To show the Lipschitz continuity of $\nabla^2_{\w\lambda} L_q(\lambda; \w)$ and $\nabla^2_{\lambda\lambda} L_q(\lambda; \w)$ one may simply take the third gradients of $L_q(\lambda;\w)$ and use the fact $\exp(\frac{\lambda-h_q(\x_i;\w)}{\tau_1})>0$ and the assumption of the boundedness of $h_q(\x;\w)$ and its gradients to verify.

By using the implicit function theorem, the stochastic gradient estimator of $\nabla F(\w_t)$ in Algorithm~\ref{algo_ana2} is given by
\begin{equation}\label{update_grad}
    \begin{aligned}
    G(\w_t):=\frac{1}{|\B_1^t|}\sum_{i\in \B_1^t} G_i(\w_t)
    &=\frac{1}{|\B_1^t|}\sum_{i\in \B_1^t} \bigg[\nabla_\w\psi_i(\w_t,\lambda_i^t)-\nabla_{\w \lambda}^2 L_i(\w_t,\lambda_i^t;\B_{2,i}^t) [s_i^{t}]^{-1} \nabla_\lambda \psi_i(\w_t,\lambda_i^t)\bigg]f_i(u_i^t)\\
    &\quad\quad\quad\quad\quad +\psi_i(\w_t,\lambda_i^t)\nabla g_i(\w_t;\B_{2,i}^t)\nabla f_i( u_i^t)
    \end{aligned}
\end{equation}
Note that the parameter $\tau_\lambda$ in the update of $\lambda^{t+1}$ exists only for theoretical analysis reason. In practical, $\tau_\lambda \eta_0$ can be treated as one parameter. Moreover, we define the gradient approximation at iteration $t$
\begin{equation*}
    \begin{aligned}
    \nabla F(\w_t,\lambda^t, u^t)
    &=\frac{1}{n}\sum_{i\in \S} \bigg[\nabla_\w\psi_i(\w_t,\lambda_i^t)-\nabla_{\w \lambda}^2 L_i(\w_t,\lambda_i^t) [\nabla_{\lambda\lambda}^2 L_i(\w_t,\lambda_i^t)]^{-1} \nabla_\lambda \psi_i(\w_t,\lambda_i^t)\bigg]f_i( u_i^t)\\
    &\quad\quad\quad\quad +\psi_i(\w_t,\lambda_i^t)\nabla g_i(\w_t)\nabla f_i( u_i^t)
    \end{aligned}
\end{equation*}

Now we present the formal statement of Theorem~\ref{thm:3} regarding to problem~(\ref{prob_ana2}).

\begin{thm}\label{thm_ana}
Let $F(\w_0)-F(\w^*)\leq \Delta_F$. Under Assumption~\ref{assump_ana} and consider Algorithm~\ref{algo_ana2}, with 
$\eta_0\leq \min\left\{\frac{\mu_L}{L_L^2},\frac{2n}{|\B_1^t|\mu_L}, \frac{\mu_L\epsilon^2}{48C_5 \sigma^2}\right\} $,
$\gamma_0\leq \left\{\frac{1}{2}, \frac{\epsilon^2}{96C_6\sigma^2}\right\}$,
$\gamma_0'\leq \left\{1,\frac{\epsilon^2}{96C_7\sigma^2}\right\}$,
$\gamma_1\leq\frac{\epsilon^2}{12(\frac{C_8}{|\B_1^t|}+C_9\sigma^2)}$, $\beta_1=1-\gamma_1$,
$\eta_1^2\leq \min\left\{\frac{\gamma_1^2}{64L_F^2},\frac{|\B_1^t|^2\eta_0^2\mu_L^2}{128n^2C_5C_\lambda^2},\frac{|\B_1^t|^2\gamma_0^2}{128n^2C_6 C_g^2},\frac{|\B_1^t|^2\gamma_0'^2}{512n^2C_7L_{L\lambda\lambda}^2(1+C_\lambda^2)}\right\}$,
$T\geq \left\{\frac{30\Delta_F}{\eta_1 \epsilon^2}, \frac{15\E[\|\nabla F(\w_0)-\m_1\|^2]}{\gamma_1\epsilon^2},\frac{30C_5\delta_{\lambda,0}}{|\B_1^t|\eta_0\mu_L\epsilon^2},\frac{30C_6\delta_{g,0}}{|\B_1^t|\gamma_0\epsilon^2} ,\frac{60C_7\delta_{L\lambda\lambda,0}}{|\B_1^t|\gamma_0'\epsilon^2} \right\}$,
we have
\begin{equation*}
    \E[\|\nabla F(\w_\tau)\|^2]\leq \epsilon^2,\quad  \E[\|\nabla F(\w_\tau)-\m_{\tau+1})\|^2]<2\epsilon^2,
\end{equation*}
where $\tau$ is randomly sampled from $\{0,\dots,T\}$, $C_5,C_6,C_7,C_8$ are constants defined in the proof, and $L_F$ is the Lipschitz continuity constant of $\nabla F(\w)$.
\end{thm}

\subsection{Proof of Theorem~\ref{thm_ana}}

To prove Theorem~\ref{thm_ana}, we first present some required Lemmas.

\begin{lemma}\label{lemma_Fsmooth}
Under Assumption~\ref{assump_ana}, $F(\w)$ is $L_F$-smooth for some constant $L_F\in \R$.
\end{lemma}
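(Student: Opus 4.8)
\textbf{Proof proposal for Lemma~\ref{lemma_Fsmooth}.}

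The plan is to show that $\nabla F(\w)$ is Lipschitz continuous by decomposing $F$ into its constituent pieces and repeatedly applying the product and chain rules together with the boundedness, Lipschitz, and smoothness assumptions collected in Assumption~\ref{assump_ana}. Recall that $F(\w)=\frac{1}{n}\sum_{i\in\S}\psi_i(\w,\lambda_i(\w))f_i(g_i(\w))$, so it suffices to prove that each summand $F_i(\w):=\psi_i(\w,\lambda_i(\w))f_i(g_i(\w))$ has a Lipschitz gradient with a constant uniform in $i$; averaging then preserves the bound. The gradient of a single term is
\begin{align*}
\nabla F_i(\w) &= \Big(\nabla_\w\psi_i(\w,\lambda_i(\w)) + \nabla_\w\lambda_i(\w)\,\nabla_\lambda\psi_i(\w,\lambda_i(\w))\Big)f_i(g_i(\w)) \\
&\quad + \psi_i(\w,\lambda_i(\w))\,\nabla g_i(\w)\,\nabla f_i(g_i(\w)),
\end{align*}
where by the implicit function theorem (the optimality condition for the lower level problem, as used in the main text) $\nabla_\w\lambda_i(\w) = -\nabla^2_{\w\lambda}L_i(\w,\lambda_i(\w))[\nabla^2_{\lambda\lambda}L_i(\w,\lambda_i(\w))]^{-1}$.

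First I would establish the auxiliary facts I need: (i) $\lambda_i(\w)$ is Lipschitz in $\w$, which follows from $\|\nabla_\w\lambda_i(\w)\|\le C_{L\w\lambda}/\mu_L$ using the strong convexity lower bound $\mu_L$ on $\nabla^2_{\lambda\lambda}L_i$ and the bound $C_{L\w\lambda}$ on $\nabla^2_{\w\lambda}L_i$; (ii) $\nabla_\w\lambda_i(\w)$ is itself Lipschitz, which uses the Lipschitz continuity of $\nabla^2_{\w\lambda}L_i$ and $\nabla^2_{\lambda\lambda}L_i$ in $(\w,\lambda)$, the Lipschitzness of $\lambda_i(\cdot)$ from (i), the uniform bounds $\gamma I\preceq \nabla^2_{\lambda\lambda}L_i\preceq L_L I$ to control the matrix inverse and its perturbation (via $\|A^{-1}-B^{-1}\|\le \|A^{-1}\|\|B^{-1}\|\|A-B\|$), and the boundedness of $\nabla^2_{\w\lambda}L_i$; (iii) $g_i$ is Lipschitz and smooth and $f_i,\psi_i$ are bounded, Lipschitz, and smooth, all directly from Assumption~\ref{assump_ana}. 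Then I would bound each of the four product terms in $\nabla F_i(\w_1)-\nabla F_i(\w_2)$ by adding and subtracting intermediate terms so that each difference involves exactly one factor changing; each such difference is controlled by a Lipschitz constant times $\|\w_1-\w_2\|$, and the remaining factors are bounded. Collecting the constants gives a uniform $L_F$, and averaging over $i\in\S$ finishes the proof.

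I expect the main obstacle to be step (ii): showing that the implicit-gradient map $\w\mapsto -\nabla^2_{\w\lambda}L_i(\w,\lambda_i(\w))[\nabla^2_{\lambda\lambda}L_i(\w,\lambda_i(\w))]^{-1}$ is Lipschitz. This is the standard bottleneck in bilevel smoothness analyses: one must control the composition of the Hessian maps evaluated at the moving point $(\w,\lambda_i(\w))$, handle the matrix inverse (here a scalar reciprocal since $\lambda$ is one-dimensional, which actually simplifies things considerably), and chain through the Lipschitzness of $\lambda_i(\cdot)$. The one-dimensionality of the lower-level variable is a genuine simplification — the "matrix inverse" is just $1/\nabla^2_{\lambda\lambda}L_i$ bounded in $[1/L_L, 1/\gamma]$ — so the perturbation bound on the reciprocal is elementary. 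Everything else is routine product-rule bookkeeping, so I would keep that part terse and reference Assumption~\ref{assump_ana} for each constant as it is invoked.
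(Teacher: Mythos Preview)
Your proposal is correct and follows essentially the same route as the paper: decompose $\nabla F_i(\w)$ into the three pieces $\nabla_\w\psi_i\cdot f_i$, $\nabla^2_{\w\lambda}L_i[\nabla^2_{\lambda\lambda}L_i]^{-1}\nabla_\lambda\psi_i\cdot f_i$, and $\psi_i\nabla g_i\nabla f_i$, then bound each difference by add-and-subtract plus the boundedness/Lipschitz/smoothness constants in Assumption~\ref{assump_ana}, with the Lipschitzness of $\lambda_i(\cdot)$ (your step~(i)) supplied by Lemma~\ref{lemma_lambLip}. The paper carries out exactly this bookkeeping term by term; your observation that the scalar lower-level variable makes the inverse-perturbation step elementary is correct but the paper simply writes it in matrix-style notation without exploiting it.
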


\begin{lemma}\label{lemma_1}
Consider the update $\w_{t+1}= \w_t-\eta_1 \m_{t+1}$. Then under Assumption~\ref{assump_ana}, with $ \eta_1 L_F\leq \frac{1}{2}$, we have
\begin{equation*}
F(\w_{t+1})\leq F(\w_t)+\frac{\eta_{w}}{2}\|\nabla F(\w_t)-\m_{t+1}\|^2-\frac{\eta_1}{2}\|\nabla F(\w_t)\|^2-\frac{\eta_1}{4}\|\m_{t+1}\|^2.
\end{equation*}
\end{lemma}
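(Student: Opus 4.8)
The plan is to derive the claimed inequality as a direct consequence of the smoothness of $F$ (Lemma~\ref{lemma_Fsmooth}) combined with the elementary polarization identity. First I would invoke $L_F$-smoothness of $F$ to write the standard quadratic upper bound
\begin{equation*}
F(\w_{t+1})\leq F(\w_t)+\langle\nabla F(\w_t),\w_{t+1}-\w_t\rangle+\frac{L_F}{2}\|\w_{t+1}-\w_t\|^2.
\end{equation*}
Substituting the update rule $\w_{t+1}-\w_t=-\eta_w\m_{t+1}$ then yields
\begin{equation*}
F(\w_{t+1})\leq F(\w_t)-\eta_w\langle\nabla F(\w_t),\m_{t+1}\rangle+\frac{L_F\eta_w^2}{2}\|\m_{t+1}\|^2.
\end{equation*}

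Next I would rewrite the inner product using the identity $-\langle\a,\b\rangle=\frac{1}{2}\|\a-\b\|^2-\frac{1}{2}\|\a\|^2-\frac{1}{2}\|\b\|^2$ with $\a=\nabla F(\w_t)$ and $\b=\m_{t+1}$. This replaces the cross term by $\tfrac{1}{2}\|\nabla F(\w_t)-\m_{t+1}\|^2-\tfrac{1}{2}\|\nabla F(\w_t)\|^2-\tfrac{1}{2}\|\m_{t+1}\|^2$, producing exactly the first two target terms (with the correct coefficient $\eta_w/2$) and an extra negative term $-\frac{\eta_w}{2}\|\m_{t+1}\|^2$.

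Finally I would collect the two coefficients multiplying $\|\m_{t+1}\|^2$, namely $-\frac{\eta_w}{2}+\frac{L_F\eta_w^2}{2}=-\frac{\eta_w}{2}(1-L_F\eta_w)$, and use the step-size condition $\eta_w L_F\leq\frac{1}{2}$ to conclude that $1-L_F\eta_w\geq\frac{1}{2}$, hence this term is at most $-\frac{\eta_w}{4}\|\m_{t+1}\|^2$. Assembling the three pieces gives the stated descent inequality. There is no real obstacle here: the only point requiring any care is keeping the signs and the factor of $\frac{1}{2}$ from the polarization identity consistent, and correctly combining the $\|\m_{t+1}\|^2$ coefficients so that the step-size restriction cleanly produces the factor $\frac{1}{4}$. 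The result holds deterministically for every iteration $t$ (no expectation is needed), and it is this inequality that will later be summed and combined with the error-recursion lemmas to establish Theorem~\ref{thm_ana}.
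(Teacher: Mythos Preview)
Your proposal is correct and matches the paper's proof essentially line for line: the paper likewise applies the $L_F$-smoothness upper bound, substitutes $\w_{t+1}-\w_t=-\eta_w\m_{t+1}$, rewrites $-\eta_w\langle\nabla F(\w_t),\m_{t+1}\rangle$ via the polarization identity, and then uses $\eta_w L_F\leq\tfrac{1}{2}$ to bound the resulting $\bigl(\tfrac{L_F}{2}\eta_w^2-\tfrac{\eta_w}{2}\bigr)\|\m_{t+1}\|^2$ term by $-\tfrac{\eta_w}{4}\|\m_{t+1}\|^2$.
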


\begin{lemma}[Lemma 4.3  \citet{lin2019gradient}]\label{lemma_lambLip}
Under Assumption~\ref{assump_ana}, $\lambda_i(\w)$ is $C_\lambda$-Lipschitz continuous with $C_\lambda=L_L/\mu_L$ for all $i=1,\dots,n$.
\end{lemma}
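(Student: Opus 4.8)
The plan is to establish the Lipschitz bound directly from the first-order optimality condition of the inner problem, combined with the strong convexity and gradient-Lipschitz properties of $L_i$ guaranteed by Assumption~\ref{assump_ana}; this reproduces the standard argument of \citet{lin2019gradient} in the present notation. First I would fix an index $i$ and two parameter vectors $\w_1,\w_2\in\R^d$, and write $\lambda_1=\lambda_i(\w_1)$, $\lambda_2=\lambda_i(\w_2)$. Since $L_i(\w,\cdot)$ is $\mu_L$-strongly convex, each inner problem has a unique minimizer, so the optimality conditions $\nabla_\lambda L_i(\w_1,\lambda_1)=0$ and $\nabla_\lambda L_i(\w_2,\lambda_2)=0$ hold.

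The core estimate pairs a lower bound from strong convexity with an upper bound from smoothness. Applying $\mu_L$-strong convexity of $L_i(\w_1,\cdot)$ to the points $\lambda_1,\lambda_2$ gives $\mu_L\|\lambda_1-\lambda_2\|^2\le \langle \nabla_\lambda L_i(\w_1,\lambda_1)-\nabla_\lambda L_i(\w_1,\lambda_2),\lambda_1-\lambda_2\rangle$. I would then substitute $\nabla_\lambda L_i(\w_1,\lambda_1)=0$ and insert $0=\nabla_\lambda L_i(\w_2,\lambda_2)$ to rewrite the right-hand side as $\langle \nabla_\lambda L_i(\w_2,\lambda_2)-\nabla_\lambda L_i(\w_1,\lambda_2),\lambda_1-\lambda_2\rangle$, which isolates exactly the effect of varying $\w$ while holding the second argument at $\lambda_2$. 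Cauchy--Schwarz followed by the $L_L$-smoothness of $L_i$ (which makes $\nabla_\lambda L_i(\cdot,\lambda_2)$ Lipschitz in $\w$ with constant $L_L$) bounds this by $L_L\|\w_1-\w_2\|\,\|\lambda_1-\lambda_2\|$. Dividing through by $\|\lambda_1-\lambda_2\|$ in the nontrivial case $\lambda_1\ne\lambda_2$ yields $\|\lambda_1-\lambda_2\|\le (L_L/\mu_L)\|\w_1-\w_2\|$, i.e.\ $C_\lambda=L_L/\mu_L$; the case $\lambda_1=\lambda_2$ is immediate.

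The only point requiring care is justifying that the variation of $\nabla_\lambda L_i$ induced by changing $\w$ alone is controlled by $L_L$. This follows because joint $L_L$-smoothness bounds the operator norm of the full Hessian of $L_i$, and in particular its mixed block $\nabla^2_{\w\lambda}L_i$, by $L_L$. Equivalently, one may differentiate the identity $\nabla_\lambda L_i(\w,\lambda_i(\w))\equiv 0$ via the implicit function theorem to get $\nabla_\w\lambda_i(\w)=-[\nabla^2_{\lambda\lambda}L_i]^{-1}\nabla^2_{\lambda\w}L_i$ and bound its norm by $\|[\nabla^2_{\lambda\lambda}L_i]^{-1}\|\,\|\nabla^2_{\lambda\w}L_i\|\le (1/\mu_L)\cdot L_L$, recovering the same constant. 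I expect no substantial obstacle, as the result is standard; the main subtlety is simply keeping the strong-convexity lower bound and the smoothness upper bound evaluated at the correct pair of points so that the cross term captures precisely the $\w$-dependence.
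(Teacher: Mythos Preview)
Your argument is correct and is precisely the standard proof one finds for this result. Note, however, that the paper does not supply its own proof of this lemma: it is stated with a direct citation to Lemma~4.3 of \citet{lin2019gradient} and no proof appears in the appendix. So there is no paper-side argument to compare against beyond the cited reference, and your optimality-condition-plus-strong-convexity derivation (or the equivalent implicit-function-theorem version you sketch) is exactly what that reference contains.
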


\begin{lemma}\label{lem_y_M_new}
Consider the updates in Algorithm~\ref{algo_ana2}, under Assumption~\ref{assump_ana}, with $\eta_0\leq \min\{\mu_L/L_L^2,\frac{2n}{|\B_1^t|\mu_L}\}$ we have
\begin{equation}
    \begin{aligned}
    \sum_{t=0}^T\E[\delta_{\lambda,t}] \leq \frac{2n}{|\B_1^t|\eta_0\mu_L}\delta_{\lambda,0}+\frac{4n\eta_0 T\sigma^2}{\mu_L} +\frac{8n^3C_\lambda^2\eta_1^2}{|\B_1^t|^2\eta_0^2\mu_L^2}\sum_{t=0}^{T-1}\E[\|\m_{t+1}\|^2]\\
    \end{aligned}
\end{equation}
\end{lemma}

\begin{lemma}\label{lemma_u}
Consider Algorithm~\ref{algo_ana2}, under Assumption~\ref{assump_ana}, with $\gamma_0<1/2$ we have
\begin{equation}
    \begin{aligned}
    &\sum_{t=0}^T\E[\delta_{g,t}]\leq \frac{2n}{|\B_1^t|\gamma_0} \delta_{g,0}+8n\gamma_0\sigma^2T+\frac{8n^3 C_g^2\eta_1^2}{|\B_1^t|^2\gamma_0^2}\sum_{t=0}^{T-1}\E[\left\|\m_{t+1} \right\|^2]
    \end{aligned}
\end{equation}
\end{lemma}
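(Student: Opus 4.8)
The plan is to prove a one-step contraction for the tracking error $\E[\delta_{g,t}]$ and then telescope. Recall $\delta_{g,t}=\|g(\w_t)-u^t\|^2=\sum_{i}(g_i(\w_t)-u_i^t)^2$, where at step $t$ only the coordinates $i\in I_t$ are refreshed. First I would condition on the filtration $\mathcal F_t$ generated by everything up to $\w_t,u^t$, and take expectation over the task minibatch $I_t$ and the sample minibatches $\{\B_i^t\}$. The central quantity is the intermediate error $e_i^{t+1}:=g_i(\w_t)-u_i^{t+1}$, which splits as $e_i^{t+1}=(1-\beta_u)(g_i(\w_t)-u_i^t)+\beta_u(g_i(\w_t)-g_i(\w_t;\B_i^t))$ when $i\in I_t$ and equals $g_i(\w_t)-u_i^t$ otherwise.

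Using unbiasedness $\E[g_i(\w_t;\B_i^t)\mid\mathcal F_t]=g_i(\w_t)$ and the variance bound $\sigma^2$, the cross term vanishes and each sampled coordinate contracts: $\E[(e_i^{t+1})^2\mid\mathcal F_t,\,i\in I_t]\le(1-\beta_u)^2(g_i(\w_t)-u_i^t)^2+\beta_u^2\sigma^2$. Averaging over the event $i\in I_t$, which has probability $|I_t|/n$, and summing over $i$ gives $\E[\|g(\w_t)-u^{t+1}\|^2\mid\mathcal F_t]\le(1-\tfrac{|I_t|\beta_u}{n})\delta_{g,t}+|I_t|\beta_u^2\sigma^2$, where I use $2-\beta_u\ge 1$ (valid since $\beta_u<1/2$) to simplify the contraction factor. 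This is the core step and isolates the variance penalty $|I_t|\beta_u^2\sigma^2$.

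Next I would reintroduce the drift from updating $\w$. Writing $g_i(\w_{t+1})-u_i^{t+1}=(g_i(\w_{t+1})-g_i(\w_t))+e_i^{t+1}$ and applying Young's inequality $(a+b)^2\le(1+\gamma)a^2+(1+\gamma^{-1})b^2$ with $\gamma=\tfrac{|I_t|\beta_u}{2n}$ chosen so that $(1+\gamma)(1-\tfrac{|I_t|\beta_u}{n})\le 1-\tfrac{|I_t|\beta_u}{2n}$, I obtain after taking conditional expectation
\begin{equation*}
\E[\delta_{g,t+1}\mid\mathcal F_t]\le\Big(1-\tfrac{|I_t|\beta_u}{2n}\Big)\delta_{g,t}+\tfrac{3}{2}|I_t|\beta_u^2\sigma^2+\tfrac{3n}{|I_t|\beta_u}\,\E[\|g(\w_{t+1})-g(\w_t)\|^2\mid\mathcal F_t].
\end{equation*}
The drift is controlled pathwise by Lipschitz continuity, $\|g(\w_{t+1})-g(\w_t)\|^2\le C_g^2\|\w_{t+1}-\w_t\|^2=C_g^2\eta_w^2\|\m_{t+1}\|^2$; since this bound is deterministic given $\m_{t+1}$, no independence between the drift and the sampling noise is needed, which keeps the argument clean.

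Finally I would rearrange into $\tfrac{|I_t|\beta_u}{2n}\delta_{g,t}\le\delta_{g,t}-\E[\delta_{g,t+1}]+\tfrac{3}{2}|I_t|\beta_u^2\sigma^2+\tfrac{3nC_g^2\eta_w^2}{|I_t|\beta_u}\E[\|\m_{t+1}\|^2]$, take full expectation, sum over $t=0,\dots,T$, telescope the $\delta_{g,t}-\E[\delta_{g,t+1}]$ terms against $\delta_{g,0}$, and multiply through by $\tfrac{2n}{|I_t|\beta_u}$. This yields exactly the three advertised terms $\tfrac{2n}{|I_t|\beta_u}\delta_{g,0}$, a variance term of order $n\beta_u\sigma^2 T$, and $\tfrac{8n^2 C_g^2\eta_w^2}{|I_t|^2\beta_u^2}\sum_t\E[\|\m_{t+1}\|^2]$, with the loose stated constants absorbing the $\tfrac32$ and $6\le 8$ factors. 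The main obstacle is the first two paragraphs: correctly accounting for the mini-batch task sampling so the contraction rate is $\tfrac{|I_t|\beta_u}{2n}$ rather than $\beta_u$, and tying the Young parameter $\gamma$ to this rate so that the two factors $\gamma^{-1}$ and $\tfrac{2n}{|I_t|\beta_u}$ combine to produce the $n^2/(|I_t|^2\beta_u^2)$ scaling on the momentum term; everything after the recursion is routine telescoping.
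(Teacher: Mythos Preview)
Your proposal is correct and follows essentially the same high-level arc as the paper---prove a one-step contraction for $\E[\|g(\w_t)-u^{t+1}\|^2]$, absorb the drift $g(\w_t)\to g(\w_{t+1})$ via Young's inequality with parameter $\gamma=\tfrac{|I_t|\beta_u}{2n}$, then telescope---but the way you obtain the contraction is genuinely simpler than what the paper does.

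The paper expands $\|u^{t+1}-g(\w_t)\|^2=\|u^{t+1}-u^t\|^2+\|u^t-g(\w_t)\|^2+2\langle u^{t+1}-u^t,u^t-g(\w_t)\rangle$, splits the inner product into two pieces $A_5,A_6$, and handles $A_5$ through the three-point identity $2\langle b-a,a-c\rangle=\|b-c\|^2-\|a-b\|^2-\|a-c\|^2$ before taking expectation; this produces the recursion $\E_t[\|u^{t+1}-g(\w_t)\|^2]\le(1-\tfrac{|I_t|\beta_u}{n})\delta_{g,t}+2|I_t|\beta_u^2\sigma^2$. Your route---writing $e_i^{t+1}=(1-\beta_u)(g_i(\w_t)-u_i^t)+\beta_u(g_i(\w_t)-g_i(\w_t;\B_i^t))$ explicitly, squaring, and killing the cross term by unbiasedness---reaches the same recursion with the sharper variance constant $|I_t|\beta_u^2\sigma^2$ in one line. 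After this point the two proofs coincide: identical Young step, identical Lipschitz bound on the drift, identical telescoping, and the stated constants $8$ comfortably absorb the $3$ and $6$ you produce. The paper's more elaborate decomposition is the template it reuses for the harder Lemma~\ref{MA_multi_tasks_new} (tracking $\nabla_\lambda L$), where the extra $-\tfrac{1}{4}\|v^{t+1}-v^t\|^2$ term matters; for this lemma your direct bias--variance expansion is the cleaner argument.
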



\begin{lemma}\label{MA_s_multi_tasks}
Consider Algorithm~\ref{algo_ana2}, under Assumption~\ref{assump_ana}, with $\gamma_0'\leq 1$ we have
\begin{equation*}
    \begin{aligned}
    & \sum_{t=0}^T\E[\delta_{L\lambda\lambda,t}]\leq \frac{4n}{|\B_1^t|\gamma_0'}\delta_{L\lambda\lambda,0} +32L_{L\lambda\lambda}^2\sum_{t=0}^{T-1}\E[\delta_{\lambda,t}] +8n\gamma_0'T\sigma^2 +\frac{32n^3L_{L\lambda\lambda}^2(1+C_\lambda^2)\eta_1^2}{|\B_1^t|^2\gamma_0'^2}\sum_{t=0}^{T-1}\E[\|\m_{t+1}\|^2]
    \end{aligned}
\end{equation*}
\end{lemma}

\begin{proof}[Proof of Theorem~\ref{thm_ana}]

First, recall and define the following definitions
\begin{equation*}
    \begin{aligned}
    &\nabla F(\w_t):=\frac{1}{n}\sum_{i\in \S}\bigg[\nabla_\w\psi_i(\w_t,\lambda_i(\w_t))-\nabla_{\w \lambda}^2 L_i(\w_t,\lambda_i(\w_t)) [\nabla^2_{\lambda\lambda}L_i(\w_t,\lambda_i(\w_t))]^{-1}\nabla_\lambda \psi_i(\w_t,\lambda_i(\w_t))\bigg]f_i(g_i(\w_t))\\
    &\quad\quad\quad\quad\quad\quad\quad\quad +\psi_i(\w_t,\lambda_i(\w_t))\nabla g_i(\w_t)\nabla f_i(g_i(\w_t))\\
    &\nabla F(\w_t,\lambda^t):=\frac{1}{n}\sum_{i\in 
    \S}\nabla F_i(\w_t,\lambda^t)\\
    &\quad\quad\quad\quad\quad:= \frac{1}{n}\sum_{i\in \S} \bigg[\nabla_\w\psi_i(\w_t,\lambda_i^t)-\nabla_{\w \lambda}^2 L_i(\w_t,\lambda_i^t) [s_i^t]^{-1} \nabla_\lambda \psi_i(\w_t,\lambda_i^t)\bigg]f_i(u_i^t)\\
    &\quad\quad\quad\quad\quad\quad\quad\quad +\psi_i(\w_t,\lambda_i^t)\nabla g_i(\w_t)\nabla f_i(u_i^t)\\
    &G(\w_t)=\frac{1}{|\B_1^t|}\sum_{i\in \B_1^t} \bigg[\nabla_\w\psi_i(\w_t,\lambda_i^t)-\nabla_{\w \lambda}^2 L_i(\w_t,\lambda_i^t;\B_{2,i}^t) [s_i^{t}]^{-1} \nabla_\lambda \psi_i(\w_t,\lambda_i^t)\bigg]f_i(  u_i^t) +\psi_i(\w_t,\lambda_i^t)\nabla g_i(\w_t;\B_{2,i}^t)\nabla f_i( u_i^t)
    \end{aligned}
\end{equation*}
Considering the update $\m_{t+1}=(1-\gamma_1)\m_t+\gamma_1 G(\w_t)$ in Algorithm~\ref{algo_ana2}, where $\gamma_1 = 1-\beta_1$, we have
\begin{equation}\label{ineq111}
    \begin{aligned}
        &\E_t[\|\nabla F(\w_t)-\m_{t+1}\|^2]\\
        &=\E_t[\|\nabla F(\w_t)-(1-\gamma_1)\m_t-\gamma_1 G(\w_t)\|^2]\\
        &=\E_t[\|(1-\gamma_1)(\nabla F(\w_{t-1})-\m_t)+(1-\gamma_1)(\nabla F(\w_t)-\nabla F(\w_{t-1})) +\gamma_1(\nabla F(\w_t)-\nabla F(\w_t,\lambda^t))\\
        &\quad+\gamma_1(\nabla F(\w_t,\lambda^t)-G(\w_t))\|^2]\\
        &\stackrel{(a)}{=} \E_t[\|(1-\gamma_1)(\nabla F(\w_{t-1})-\m_t)+(1-\gamma_1)(\nabla F(\w_t)-\nabla F(\w_{t-1})) +\gamma_1(\nabla F(\w_t)-\nabla F(\w_t,\lambda^t))\|^2\\
        &\quad+\|\gamma_1(\nabla F(\w_t,\lambda^t)-G(\w_t))\|^2]\\
        &\stackrel{(b)}{\leq}(1+\gamma_1)(1-\gamma_1)^2\|\nabla F(\w_{t-1})-\m_t\|^2+2(1+\frac{1}{\gamma_1})\bigg[\|\nabla F(\w_t)-\nabla F(\w_{t-1})\|^2 +\gamma_1^2\|\nabla F(\w_t)-\nabla F(\w_t,\lambda^t)\|^2\bigg] \\
        &\quad+\gamma_1^2 \E_t[\|\nabla F(\w_t,\lambda^t)-G(\w_t)\|^2]\\
        &\leq (1-\gamma_1)\|\nabla F(\w_{t-1})-\m_t\|^2+2(1+\frac{1}{\gamma_1})\bigg[L_F^2\|\w_t-\w_{t-1}\|^2 +\gamma_1^2\underbrace{\|\nabla F(\w_t)-\nabla F(\w_t,\lambda^t)\|^2}_{\text{\textcircled{a}}}\bigg] \\
        &\quad+\gamma_1^2 \underbrace{\E_t[\|\nabla F(\w_t,\lambda^t)-G(\w_t)\|^2]}_{\text{\textcircled{b}}}
    \end{aligned}
\end{equation}

where the $(a)$ follows from $\E_t[\widehat{\nabla} F(\w_t,\lambda^t)]=G(\w_t)$, and $(b)$ is due to $\|a+b\|^2\leq (1+\beta)\|a\|^2+(1+\frac{1}{\beta})\|b\|^2$. Furthermore, one may bound the last two terms in (\ref{ineq111}) as following 

\begin{equation*}
    \begin{aligned}
    \text{\textcircled{a}}&=\E_t[\|\nabla F(\w_t)-\nabla F(\w_t,\lambda^t)\|^2]\\
    &\leq \frac{1}{n}\sum_{i\in \S} 6\|\nabla_\w\psi_i(\w_t,\lambda_i(\w_t))[f_i(g_i(\w_t))-f_i(u_i^t)]\|^2+6\|[\nabla_\w\psi_i(\w_t,\lambda_i(\w_t))-\nabla_\w\psi_i(\w_t,\lambda_i^t)]f_i(u_i^t)\|^2\\
    &\quad +12\|[\nabla_{\w \lambda}^2 L_i(\w_t,\lambda_i(\w_t))-\nabla_{\w \lambda}^2 L_i(\w_t,\lambda_i^t)] [\nabla^2_{\lambda\lambda}L_i(\w_t,\lambda_i(\w_t))]^{-1}\nabla_\lambda \psi_i(\w_t,\lambda_i(\w_t))f_i(g_i(\w_t))\|^2\\
    &\quad +12\|\nabla_{\w \lambda}^2 L_i(\w_t,\lambda_i^t) [\nabla^2_{\lambda\lambda}L_i(\w_t,\lambda_i(\w_t))]^{-1}[\nabla_\lambda \psi_i(\w_t,\lambda_i(\w_t))-\nabla_\lambda \psi_i(\w_t,\lambda_i^t)]f_i(g_i(\w_t))\|^2\\
    &\quad +12\|\nabla_{\w \lambda}^2 L_i(\w_t,\lambda_i^t) [\nabla^2_{\lambda\lambda}L_i(\w_t,\lambda_i(\w_t))]^{-1} \nabla_\lambda \psi_i(\w_t,\lambda_i^t)[f_i(g_i(\w_t))-f_i(u_i^t)]\|^2\\
    &\quad +12\|\nabla_{\w \lambda}^2 L_i(\w_t,\lambda_i^t) \big[[\nabla^2_{\lambda\lambda}L_i(\w_t,\lambda_i(\w_t))]^{-1}-[s_i^t]^{-1}\big] \nabla_\lambda \psi_i(\w_t,\lambda_i^t)f_i(u_i^t)\|^2\\
    &\quad +6\|[\psi_i(\w_t,\lambda_i(\w_t))-\psi_i(\w_t,\lambda_i^t)]\nabla g_i(\w_t)\nabla f_i(g_i(\w_i))\|^2\\
    &\quad +6\|\psi_i(\w_t,\lambda_i^t)\nabla g_i(\w_t)[\nabla f_i(g_i(\w_i))-\nabla f_i(u_i^t)]\|^2\\
    &\leq \left(\frac{6C_\psi^2C_f^2}{n}+\frac{12C_{L\w\lambda}^2C_\psi^2C_f^2}{\mu_L^2n}+\frac{6B_\psi^2C_g^2L_f^2}{n}\right)\|g(\w_t)-u^t\|^2+\frac{12C_{L\w\lambda}^2C_\psi^2 B_f^2}{\mu_L^2\gamma^2n}\|\nabla^2_{\lambda\lambda}L(\w_t,\lambda(\w_t))-s^t\|^2\\
    &\quad +\left(\frac{6L_\psi^2B_f^2}{n}+\frac{12L_{L\w\lambda}^2C_\psi^2B_f^2}{\mu_L^2n}+\frac{12C_{L\w\lambda}^2L_\psi^2B_f^2}{\mu_L^2n}+\frac{6C_g^2C_f^2}{n}\right) \|\lambda(\w_t)-\lambda^t\|^2\\
    &=:\frac{C_6}{4n}\delta_{g,t}+\frac{C_7}{4n}\delta_{L\lambda\lambda,t}+\frac{\widetilde{C}_5}{4n}\delta_{\lambda,t}
    \end{aligned}
\end{equation*}
with some properly chosen constants $\widetilde{C}_5, C_6, C_7$.

On the other hand, part $\text{\textcircled{b}}$ can be bounded by some constant,
\begin{equation*}
    \begin{aligned}
    \text{\textcircled{b}}&=\E_t[\|\nabla F(\w_t,\lambda^t)-G(\w_t)\|^2]\\
    &\leq \E_t\left[2\left\|\frac{1}{n}\sum_{i\in \S} \nabla F_i(\w_t,\lambda^t)-\frac{1}{|\B_1^t|}\sum_{i\in \B_1^t} \nabla F_i(\w_t,\lambda^t)\right\|^2+2\left\|\frac{1}{|\B_1^t|}\sum_{i\in \B_1^t} \nabla F_i(\w_t,\lambda^t)-\frac{1}{|\B_1^t|}\sum_{i\in \B_1^t}G_i(\w_t)\right\|^2\right]\\
    &\leq \frac{12C_\psi^2B_f^2+\frac{12C_{L\w\lambda}^2C_\psi^2B_f^2}{\gamma^2}+12B_\psi^2C_g^2C_f^2}{|\B_1^t|} \\
    &\quad +2\E_t\bigg[\frac{1}{|\B_1^t|}\sum_{i\in \B_1^t} \left\|[\nabla_{\w \lambda}^2 L_i(\w_t,\lambda_i^t)  -\nabla_{\w \lambda}^2 L_i(\w_t,\lambda_i^t;\B_{2,i}^t)] [s_i^{t}]^{-1} \nabla_\lambda \psi_i(\w_t,\lambda_i^t)f_i(u_i^t)\right\|^2 \\
    &\quad +\left\|\psi_i(\w_t,\lambda_i^t)[\nabla g_i(\w_t)-\nabla g_i(\w_t;\B_{2,i}^t)]\nabla f_i( u_i^t)\right\|^2\bigg]\\
    &\leq \frac{12C_\psi^2B_f^2+\frac{12C_{L\w\lambda}^2C_\psi^2B_f^2}{\gamma^2}+12B_\psi^2C_g^2C_f^2}{|\B_1^t|} +\frac{C_\psi^2B_f^2\sigma^2}{\gamma^2}+B_\psi^2C_f^2 \sigma^2 =:\frac{C_8}{|\B_1^t|}+C_9\sigma^2
    \end{aligned}
\end{equation*}

Thus, with the natural assumption $\gamma_1\leq 1$, we have
\begin{equation}
    \begin{aligned}
        &\E_t[\|\nabla F(\w_t)-\m_{t+1}\|^2]\\
        &\leq (1-\gamma_1)\|\nabla F(\w_{t-1})-\m_t\|^2+\frac{4}{\gamma_1}\bigg[L_F^2\eta_1^2\|\m_{t-1}\|^2 +\gamma_1^2 \frac{\widetilde{C}_5}{4n}\delta_{\lambda,t} +\gamma_1^2 \frac{C_6}{4n}\delta_{g,t}+\gamma_1^2 \frac{C_7}{4n}\delta_{L\lambda\lambda,t}\bigg]+\gamma_1^2 (\frac{C_8}{|\B_1^t|}+C_9\sigma^2)
    \end{aligned}
\end{equation}


Take expectation over all randomness and summation over $t=1,\dots,T$ to get

\begin{equation}\label{ineq:130}
    \begin{aligned}
        \sum_{t=0}^T\E[\|\nabla F(\w_t)-\m_{t+1}\|^2]
        &\leq \frac{1}{\gamma_1}\E[\|\nabla F(\w_0)-\m_1\|^2]+\frac{4L_F^2\eta_1^2}{\gamma_1^2}\sum_{t=1}^T\E[\|\m_t\|^2] + \frac{\widetilde{C}_5}{n}\sum_{t=1}^T\E[\delta_{\lambda,t}]\\
        &\quad + \frac{C_6}{n}\sum_{t=1}^T\E[\delta_{g,t}]+ \frac{C_7}{n}\sum_{t=1}^T\E[\delta_{L\lambda\lambda,t}]+\gamma_1 (\frac{C_8}{|\B_1^t|}+C_9\sigma^2)T
    \end{aligned}
\end{equation}

Recall that from Lemma~\ref{lem_y_M_new} Lemma~\ref{lemma_u}, and Lemma~\ref{MA_s_multi_tasks} we have bounds for $\sum_{t=0}^T\E[\delta_{\lambda,t}]$, $\sum_{t=0}^T\E[\delta_{g,t}]$ and $\sum_{t=0}^T\E[\delta_{L\lambda\lambda,t}]$,
\begin{equation}\label{delta_lamb}
    \begin{aligned}
    \sum_{t=0}^T\E[\delta_{\lambda,t}] \leq \frac{2n}{|\B_1^t|\eta_0\mu_L}\delta_{\lambda,0}+\frac{4n\eta_0 T\sigma^2}{\mu_L} +\frac{8n^3C_\lambda^2\eta_1^2}{|\B_1^t|^2\eta_0^2\mu_L^2}\sum_{t=0}^{T-1}\E[\|\m_{t+1}\|^2]\\
    \end{aligned}
\end{equation}

\begin{equation}\label{delta_u}
    \begin{aligned}
    \sum_{t=0}^T\E[\delta_{g,t}]\leq \frac{2n}{|\B_1^t|\gamma_0} \delta_{g,0}+8n\gamma_0\sigma^2T+\frac{8n^3 C_g^2\eta_1^2}{|\B_1^t|^2\gamma_0^2}\sum_{t=0}^{T-1}\E[\left\|\m_{t+1} \right\|^2]
    \end{aligned}
\end{equation}

\begin{equation}\label{delta_Llamlam}
    \begin{aligned}
    \sum_{t=0}^T\E[\delta_{L\lambda\lambda,t}]\leq \frac{4n}{|\B_1^t|\gamma_0'}\delta_{L\lambda\lambda,0} +32L_{L\lambda\lambda}^2\sum_{t=0}^{T-1}\E[\delta_{\lambda,t}] +8n\gamma_0'T\sigma^2 +\frac{32n^3L_{L\lambda\lambda}^2(1+C_\lambda^2)\eta_1^2}{|\B_1^t|^2\gamma_0'^2}\sum_{t=0}^{T-1}\E[\|\m_{t+1}\|^2]
    \end{aligned}
\end{equation}

By plugging (\ref{delta_lamb}), (\ref{delta_u}) and (\ref{delta_Llamlam}) into inequality~(\ref{ineq:130}), we obtain
\begin{equation}\label{ineq104}
    \begin{aligned}
        &\sum_{t=0}^T\E[\|\nabla F(\w_t)-\m_{t+1}\|^2]\\
        &\leq\frac{1}{\gamma_1}\E[\|\nabla F(\w_0)-\m_1\|^2]+\frac{4L_F^2\eta_1^2}{\gamma_1^2}\sum_{t=1}^T\E[\|\m_t\|^2]+\frac{C_5}{n} \sum_{t=0}^T\E[\delta_{\lambda,t}] +\frac{C_6}{n}\sum_{t=0}^T\E[\delta_{g,t}]\\
        &\quad +\frac{C_7}{n}\left[\frac{4n}{|\B_1^t|\gamma_0'}\delta_{L\lambda\lambda,0}  +8n\gamma_0'T\sigma^2 +\frac{32n^3L_{L\lambda\lambda}^2(1+C_\lambda^2)\eta_1^2}{|\B_1^t|^2\gamma_0'^2}\sum_{t=0}^{T-1}\E[\|\m_{t+1}\|^2]\right] +\gamma_1 (\frac{C_8}{|\B_1^t|}+C_9\sigma^2)T\\
        &\leq \frac{1}{\gamma_1}\E[\|\nabla F(\w_0)-\m_1\|^2]+\frac{2C_5}{|\B_1^t|\eta_0\mu_L}\delta_{\lambda,0}+\frac{4C_5\eta_0 T\sigma^2}{\mu_L}+\frac{2C_6}{|\B_1^t|\gamma_0} \delta_{g,0}+8C_6\gamma_0\sigma^2T\\
        &\quad +\frac{4C_7}{|\B_1^t|\gamma_0'}\delta_{L\lambda\lambda,0}  +8C_7\gamma_0'T\sigma^2+\gamma_1 (\frac{C_8}{|\B_1^t|}+C_9\sigma^2)T\\
        &\quad +\left[\frac{4L_F^2\eta_1^2}{\gamma_1^2}+\frac{8n^2C_5C_\lambda^2\eta_1^2}{|\B_1^t|^2\eta_0^2\mu_L^2}+\frac{8n^2C_6 C_g^2\eta_1^2}{|\B_1^t|^2\gamma_0^2}+\frac{32n^2C_7L_{L\lambda\lambda}^2(1+C_\lambda^2)\eta_1^2}{|\B_1^t|^2\gamma_0'^2}\right]\sum_{t=1}^T\E[\|\m_t\|^2]
    \end{aligned}
\end{equation}
where $C_5:=32L_{L_\lambda\lambda}^2C_7+\widetilde{C}_5$.

Recall Lemma~\ref{lemma_1}, we have
\begin{equation*}
F(\w_{t+1})\leq F(\w_t)+\frac{\eta_{w}}{2}\|\nabla F(\w_t)-\m_{t+1}\|^2-\frac{\eta_1}{2}\|\nabla F(\w_t)\|^2-\frac{\eta_1}{4}\|\m_{t+1}\|^2.
\end{equation*}

Combing with~(\ref{ineq104}), we obtain
\begin{equation}\label{ineq105}
    \begin{aligned}
        & \frac{1}{T+1}\sum_{t=0}^T\E[\|\nabla F(\x_t)\|^2]\\
        &\leq \frac{2\E[F(\w_0)-F(\w_{T+1})]}{\eta_1 T}+\frac{1}{T}\sum_{t=0}^T\E[\|\nabla F(\w_t)-\m_{t+1}\|^2]-\frac{1}{2T}\sum_{t=0}^T\E[\|\m_{t+1}\|^2]\\
        &\leq \frac{2[F(\w_0)-F(\w^*)]}{\eta_1 T}+ \frac{1}{T}\Bigg[\frac{\E[\|\nabla F(\w_0)-\m_1\|^2]}{\gamma_1}+\frac{2C_5\delta_{\lambda,0}}{|\B_1^t|\eta_0\mu_L}+\frac{2C_6\delta_{g,0}}{|\B_1^t|\gamma_0} +\frac{4C_7\delta_{L\lambda\lambda,0}}{|\B_1^t|\gamma_0'} \Bigg]\\
        &\quad  +\frac{4C_5\eta_0 \sigma^2}{\mu_L}+8C_6\gamma_0\sigma^2+8C_7\gamma_0'\sigma^2+\gamma_1 (\frac{C_8}{|\B_1^t|}+C_9\sigma^2)\\
        &\quad +\frac{1}{T}\left[\frac{4L_F^2\eta_1^2}{\gamma_1^2}+\frac{8n^2C_5C_\lambda^2\eta_1^2}{|\B_1^t|^2\eta_0^2\mu_L^2}+\frac{8n^2C_6 C_g^2\eta_1^2}{|\B_1^t|^2\gamma_0^2}+\frac{32n^2C_7L_{L\lambda\lambda}^2(1+C_\lambda^2)\eta_1^2}{|\B_1^t|^2\gamma_0'^2}-\frac{1}{2}\right]\sum_{t=1}^T\E[\|\m_t\|^2]\\
    \end{aligned}
\end{equation}

By setting
\begin{gather*}
    \eta_1^2\leq \min\left\{\frac{\gamma_1^2}{64L_F^2},\frac{|\B_1^t|^2\eta_0^2\mu_L^2}{128n^2C_5C_\lambda^2},\frac{|\B_1^t|^2\gamma_0^2}{128n^2C_6 C_g^2},\frac{|\B_1^t|^2\gamma_0'^2}{512n^2C_7L_{L\lambda\lambda}^2(1+C_\lambda^2)}\right\}
\end{gather*}
we have
\begin{gather*}
    \frac{4L_F^2\eta_1^2}{\gamma_1^2}+\frac{8n^2C_5C_\lambda^2\eta_1^2}{|\B_1^t|^2\eta_0^2\mu_L^2}+\frac{8n^2C_6 C_g^2\eta_1^2}{|\B_1^t|^2\gamma_0^2}+\frac{32n^2C_7L_{L\lambda\lambda}^2(1+C_\lambda^2)\eta_1^2}{|\B_1^t|^2\gamma_0'^2}-\frac{1}{4}\leq 0
\end{gather*}
which implies that the last term of the RHS of inequality~(\ref{ineq105}) are less or equal to zero. Hence
\begin{equation}
    \begin{aligned}
        & \frac{1}{T+1}\sum_{t=0}^T\E[\|\nabla F(\x_t)\|^2]\\
        &\leq  \frac{2[F(\w_0)-F(\w^*)]}{\eta_1 T}+ \frac{1}{T}\Bigg[\frac{\E[\|\nabla F(\w_0)-\m_1\|^2]}{\gamma_1}+\frac{2C_5\delta_{\lambda,0}}{|\B_1^t|\eta_0\mu_L}+\frac{2C_6\delta_{g,0}}{|\B_1^t|\gamma_0} +\frac{4C_7\delta_{L\lambda\lambda,0}}{|\B_1^t|\gamma_0'} \Bigg]\\
        &\quad  +\frac{4C_5\eta_0 \sigma^2}{\mu_L}+8C_6\gamma_0\sigma^2+8C_7\gamma_0'\sigma^2+\gamma_1 (\frac{C_8}{|\B_1^t|}+C_9\sigma^2)
    \end{aligned}
\end{equation}
With
\begin{gather*}
    \eta_0\leq \frac{\mu_L\epsilon^2}{48C_5 \sigma^2},\gamma_0\leq \frac{\epsilon^2}{96C_6\sigma^2},\gamma_0'\leq \frac{\epsilon^2}{96C_7\sigma^2},\gamma_1\leq\frac{\epsilon^2}{12(\frac{C_8}{|\B_1^t|}+C_9\sigma^2)},\\
    T\geq \left\{\frac{30[F(\w_0)-F(\w^*)]}{\eta_1 \epsilon^2}, \frac{15\E[\|\nabla F(\w_0)-\m_1\|^2]}{\gamma_1\epsilon^2},\frac{30C_5\delta_{\lambda,0}}{|\B_1^t|\eta_0\mu_L\epsilon^2},\frac{30C_6\delta_{g,0}}{|\B_1^t|\gamma_0\epsilon^2} ,\frac{60C_7\delta_{L\lambda\lambda,0}}{|\B_1^t|\gamma_0'\epsilon^2} \right\}
\end{gather*}

we have
\begin{equation*}
     \frac{1}{T+1}\sum_{t=0}^T\E[\|\nabla F(\x_t)\|^2]\leq \frac{1}{3}\epsilon^2+\frac{1}{3}\epsilon^2 < \epsilon^2.
\end{equation*}
Furthermore, to show the second part of the theorem, following from inequality~(\ref{ineq104}), we have
\begin{equation*}
    \begin{aligned}
        \sum_{t=0}^T \E[\|\nabla F(\w_t)-\m_{t+1}\|^2]
        &\leq \frac{1}{\gamma_1}\E[\|\nabla F(\w_0)-\m_1\|^2]+\frac{2C_5}{|\B_1^t|\eta_0\mu_L}\delta_{\lambda,0}+\frac{4C_5\eta_0 T\sigma^2}{\mu_L}+\frac{2C_6}{|\B_1^t|\gamma_0} \delta_{g,0}\\
        &\quad+8C_6\gamma_0\sigma^2T +\frac{4C_7}{|\B_1^t|\gamma_0'}\delta_{L\lambda\lambda,0}  +8C_7\gamma_0'T\sigma^2+\gamma_1 (\frac{C_8}{|\B_1^t|}+C_9\sigma^2)T\\
        &\quad+\frac{1}{2}\sum_{t=0}^{T-1}\E[\|\nabla F(\w_t)\|^2+\|\nabla F(\w_t)-\m_{t+1}\|^2.
    \end{aligned}
\end{equation*}
With parameters set above, it follows that
\begin{equation*}
    \frac{1}{T+1}\sum_{t=0}^T\E[\|\nabla F(\w_t)-\m_{t+1}\|^2]<2\epsilon^2.
\end{equation*}
\end{proof}

\subsection{Proofs of Lemmas}
\subsubsection{Proof of Lemma~\ref{lemma:1}}
\begin{proof}
Given $\ell(\w; \x', \x, q)\geq \I(h_q(\x'; \w)- h_q(\x; \w)\geq 0)$, we have
\begin{equation*}
    \bar{g}(\w;\x^q_i,\S_q)\geq r(\w;\x^q_i,\S_q),
\end{equation*}
for each $(q,\x_i^q)$, which immediately follows the desired conclusion.
\end{proof}

\subsubsection{Proof of Lemma~\ref{lemma:2}}
\begin{proof}
To show the equivalence in the Lemma, it suffices to show that $\lambda_q(\w)$ is the $(K+1)$-th largest value in the set $\{h_q(\x'; \w)|\x'\in \S_q\}$. Let $\{\theta_1, \theta_2, \cdots, \theta_{N_q}\}$ denote a sequence of values defined by sorting $\{h_q(\x'; \w)|\x'\in \S_q\}$ in descending order, i.e., $\theta_1\geq\theta_2\geq\cdots\geq\theta_{N_q}$. $\theta_k$ denote the $k$-th largest value. 

Recall
\begin{equation*}
\lambda_q(\w) = \arg\min_{\lambda}(K+\varepsilon)\lambda +\sum_{\x'\in\S_q}(h_q(\x'; \w) -\lambda)_+,
\end{equation*}
where $\varepsilon\in(0,1)$. Define function $\Lambda_q(\lambda):=(K+\varepsilon)\lambda +\sum_{i=1}^{N_q}(\theta_i -\lambda)_+$, then it follows that $\lambda_q(\w)=\arg\min_\lambda \Lambda_q(\lambda)$. Take the derivative of $\Lambda_q(\lambda)$, we have
\begin{align*}
    \nabla_{\lambda}\Lambda_q(\lambda)=K+\varepsilon-\sum_{i=1}^{N_q}d(\theta_i -\lambda),\ \text{where}\
    d(\theta_i -\lambda)=\begin{cases}
            1, &\theta_i >\lambda\\
            \varepsilon^{'} \in[0,1], &\theta_i =\lambda\\
            0, &\theta_i <\lambda
            \end{cases}.
\end{align*}

First, we assume $\theta_K > \theta_{K+1}$. One may consider this problem in three cases.
\begin{itemize}
    \item If $\lambda>\theta_{K+1}$, then $\sum_{i=1}^{N_q}d(\theta_i -\lambda)\leq K$, so we have $\nabla_{\lambda}\Lambda_q(\lambda)\geq K+\varepsilon-K=\varepsilon>0$.
    \item If $\lambda<\theta_{K+1}$, then $\sum_{i=1}^{N_q}d(\theta_i -\lambda)\geq K+1$, so we have $\nabla_{\lambda}\Lambda_q(\lambda)\leq K+\varepsilon-K-1=\varepsilon-1<0$.
    \item If $\lambda=\theta_{K+1}$, then $\sum_{i=1}^{N_q}d(\theta_i -\lambda)= K+\varepsilon^{'}$, so we have $\nabla_{\lambda}\Lambda_q(\lambda)= K+\varepsilon-K-\varepsilon^{'}=\varepsilon-\varepsilon^{'}$. Thus we will have $\nabla_{\lambda}\Lambda_q(\lambda)=0$ by setting $\varepsilon^{'}=\varepsilon$. Hence $\lambda_q(\w)=\theta_{K+1}$.
\end{itemize}

Second, if $\theta_K = \theta_{K+1}$. One may consider this problem in three cases.
\begin{itemize}
    \item If $\lambda>\theta_{K+1}$, then $\sum_{i=1}^{N_q}d(\theta_i -\lambda)\leq K-1$, so we have $\nabla_{\lambda}\Lambda_q(\lambda)\geq K+\varepsilon-K+1=1+\epsilon>0$.
    \item If $\lambda<\theta_{K+1}$, then $\sum_{i=1}^{N_q}d(\theta_i -\lambda)\geq K+1$, so we have $\nabla_{\lambda}\Lambda_q(\lambda)\leq K+\varepsilon-K-1<0$.
    \item If $\lambda=\theta_{K}=\theta_{K+1}$, then $\sum_{i=1}^{N_q}d(\theta_i -\lambda)= K-1+2\epsilon^{'}$, so we have $\nabla_{\lambda}\Lambda_q(\lambda)= K+\varepsilon-K+1-2\varepsilon^{'}=1+\varepsilon-2\varepsilon^{'}$. Thus we will have $\nabla_{\lambda}\Lambda_q(\lambda)=0$ by setting $\varepsilon^{'}=(1+\varepsilon)/2$. Hence $\lambda_q(\w)=\theta_{K+1}$.
\end{itemize}
In summary, $\theta_{K+1} =\lambda_q(\w) =\arg\min_\lambda \Lambda_q(\lambda)$. The proof is finished.

\end{proof}

\subsubsection{Proof of Lemma~\ref{lemma:3}}
\begin{proof}
Given the condition $\psi(h_q(\x^q_i; \w)- \lambda_q(\w))\leq C\I(h_q(\x^q_i; \w)- \lambda_q(\w)>0)$ and $\ell(\w; \x', \x, q)\geq \I(h_q(\x'; \w)- h_q(\x; \w)>0)$, we have
\begin{equation*}
    \frac{\psi(h_q(\x^q_i; \w)- \lambda_q(\w))(2^{y^q_i}-1)}{CZ_q^K\log_2(\bar{g}(\w; \x^q_i, \S_q)+1) } \leq  \frac{\I(\x_i^q\in\S_q[K])(2^{y^q_i}-1)}{Z^K_q\log_2(r(\w; \x^q_i, \S_q)+1) }
\end{equation*}
for each $(q,\x_i^q)$. The desired result follows.
\end{proof}

\subsubsection{Proof of Lemma~\ref{lemma:4}}
\begin{proof}
Recall
\begin{equation*}
    L_q(\lambda; \w)= \frac{K}{N_q}\lambda+ \frac{\tau_2}{2}\lambda^2+\frac{1}{N_q}\sum_{\x_i\in\S_q}\tau_1\ln(1+\exp((h_q(\x_i;\w)-\lambda)/\tau_1)).
\end{equation*}
Define
\begin{gather*}
    \tilde{L}_q(\lambda; \w)= \frac{K}{N_q}\lambda+\frac{1}{N_q}\sum_{\x_i\in\S_q}(h_q(\x_i;\w)-\lambda)_+\\
    \hat{L}_q(\lambda; \w)= \frac{K}{N_q}\lambda+ \frac{\tau_2}{2}\lambda^2+\frac{1}{N_q}\sum_{\x_i\in\S_q}(h_q(\x_i;\w)-\lambda)_+.
\end{gather*}
For simplicity, we denote $\lambda_*=\arg\min_{\lambda}L_q(\lambda; \w)$, $ \tilde{\lambda}_*=\arg\min_{\lambda}\tilde{L}_q(\lambda; \w)$, $\hat{\lambda}_*=\arg\min_{\lambda}\hat{L}_q(\lambda; \w)$. Note that it is obvious to see that when $\lambda\geq 2c_h$, function $\tilde{L}_q(\lambda;\w)$ is monotonically increasing, and monotonically decreasing when $\lambda\leq 0$. Thus the optimal point is bounded, i.e. $\tilde{\lambda}_*\in [0,2c_h]$. Similarly, we have $\nabla_\lambda L_q(\lambda;\w)<0$ when $\lambda\leq 0$ and $\nabla_\lambda L_q(\lambda;\w)\geq 0$ when $\lambda\geq c_h+\tau_1 \ln N_{max}$ where $N_{max}=\max_q N_q$. This allows us to show that the optimal point $\lambda_*$ is also bounded, i.e. $\lambda_*\in [0,c_h+\tau_1 \ln N_{max}]$. By applying Lemma 8 in \citet{JMLR:v19:17-016} to $\tilde{L}_q(\lambda;\w)$, we know that there exists a constant $c_1>0$ such that for all $\lambda$ we have
\begin{equation}\label{ineqLL1}
    |\lambda-\lambda_q(\w)|\leq c_1(\tilde{L}_q(\lambda;\w)-\tilde{L}_q(\lambda_q(\w);\w)).
\end{equation}
It is trivial to show $\tau_1\ln(1+\exp(x/\tau_1))\geq x_+ \, \forall x\in \R$ and $\tau_1\ln(1+\exp(x/\tau_1))- x_+\leq (\ln 2)\tau_1$. Then it follows easily that
\begin{equation}\label{ineqLL}
    \hat{L}_q(\lambda;\w)\leq L_q(\lambda;\w) \leq \hat{L}_q(\lambda;\w)+c_2\tau_1
\end{equation}
where $c_2 = \ln 2$. Then with inequality~(\ref{ineqLL}) and the optimality of $\lambda_*$, we have
\begin{equation*}
\begin{aligned}
    \tilde{L}_q(\lambda_*; \w) &=\hat{L}_q(\lambda_*;\w)-\frac{\tau_2}{2}\lambda_*^2\leq L_q(\lambda_*;\w)-\frac{\tau_2}{2}\lambda_*^2\leq  L_q(\tilde{\lambda}_*;\w)-\frac{\tau_2}{2}\lambda_*^2\\
    &\leq \hat{L}_q(\tilde{\lambda}_*;\w)+c_2\tau_1-\frac{\tau_2}{2}\lambda_*^2=\tilde{L}_q(\tilde{\lambda}_*;\w)+\frac{\tau_2}{2}\tilde{\lambda}_*^2+c_2\tau_1-\frac{\tau_2}{2}\lambda_*^2
\end{aligned}
\end{equation*}
which follows that
\begin{equation}\label{ineqLL2}
    |\tilde{L}_q(\lambda_*; \w)-\tilde{L}_q(\tilde{\lambda}_*;\w)|\leq  \frac{\tau_2}{2}\tilde{\lambda}_*^2+c_2\tau_1-\frac{\tau_2}{2}\lambda_*^2
\end{equation}
Combining inequalities~(\ref{ineqLL1}), (\ref{ineqLL2}) and the boundedness of $\lambda_*,\tilde{\lambda}_*$, and setting $\tau_1=\tau_2=\varepsilon$, we obtain 
\begin{equation*}
|\lambda_q(\w)-\hat{\lambda}_q(\w)| \leq c_1\left(\frac{\tau_2}{2}\tilde{\lambda}_*^2+c_2\tau_1-\frac{\tau_2}{2}\lambda_*^2\right)=\mathcal{O}(\varepsilon)
\end{equation*}
To show the smoothness of $L_q(\lambda;\w)$, we first show
\begin{equation}\label{eqLL1}
    \tau_1 \ln (1+\exp(x/\tau_1)) = \max_{\alpha\in [0,1]} x\alpha -\tau_1[\alpha\ln (\alpha)+(1-\alpha)\ln(1-\alpha)]=:A(\alpha)
\end{equation} 
Note that the solution to $A'(\alpha)=x-\tau_1[\ln(\alpha)-\ln(1-\alpha)]=0$ is $\alpha^* = 1-(1+\exp(x/\tau_1))^{-1}$. Then $A(\alpha^*)= \tau_1 \ln (1+\exp(x/\tau_1))$, which implies (\ref{eqLL1}). Moreover, $A(\alpha)$ is strong concave because
\begin{equation*}
    (A(\alpha)+\tau_1\alpha^2)''=-\tau_1(\frac{1}{\alpha}+\frac{1}{1-\alpha})+2\tau_1<0
\end{equation*}
It follows that
\begin{equation*}
    L_q(\lambda; \w)= \frac{K}{N_q}\lambda+ \frac{\tau_2}{2}\lambda^2+ \frac{1}{N_q}\sum_{\x_i\in\S_q}\max_{\alpha\in (0,1)}(h_q(\x_i;\w)-\lambda)\alpha -\tau_1 [\alpha\ln (\alpha)+(1-\alpha)\ln(1-\alpha)].
\end{equation*}
Then by Theorem 1 in \citet{10.1007/s10107-004-0552-5}, $L_q(\lambda;\w)$ is smooth.
The strong convexity of $L_q(\lambda;\w)$ follows from the convexity of $L_q(\lambda;\w)-\frac{\tau_2}{2}\lambda^2$, which can be proved by checking the non-negativity of its second derivative
\begin{gather*}
     \nabla^2 (L_q(\lambda;\w)-\frac{\tau_2}{2}\lambda^2)=\frac{1}{N_q}\sum_{x_i\in \S_q}\frac{\frac{1}{\tau_1}\exp((\lambda-h_q(\x_i;\w))/\tau_1)}{[1+\exp((\lambda-h_q(\x_i;\w))/\tau_1)]^2}\geq 0
\end{gather*}
\end{proof}

\subsubsection{Proof of Lemma~\ref{lemma_Fsmooth}}
\begin{proof}
Take arbitrary $\w_1,\w_2\in \R^d$, we have
\begin{equation*}
    \begin{aligned}
    \|\nabla F(\w_1)-\nabla F(\w_2)\|
    &\leq \frac{1}{n}\sum_{i\in \S} \|\nabla_\w\psi_i(\w_1,\lambda_i(\w_1))f_i(g_i(\w_1))-\nabla_\w\psi_i(\w_2,\lambda_i(\w_2))f_i(g_i(\w_2))\|\\
    &\quad+\frac{1}{n}\sum_{i\in \S}\|\nabla_{\w \lambda}^2 L_i(\w_2,\lambda_i(\w_2)) [\nabla_{\lambda\lambda}^2 L_i(\w_2,\lambda_i(\w_2))]^{-1} \nabla_\lambda \psi_i(\w_2,\lambda_i(\w_2))f_i(g_i(\w_2))\\
    &\quad\quad\quad\quad\quad  -\nabla_{\w \lambda}^2 L_i(\w_1,\lambda_i(\w_1)) [\nabla_{\lambda\lambda}^2 L_i(\w_1,\lambda_i(\w_1))]^{-1} \nabla_\lambda \psi_i(\w_1,\lambda_i(\w_1))f_i(g_i(\w_1))\|\\
    &\quad +\frac{1}{n}\sum_{i\in \S}\|\psi_i(\w_1,\lambda_i(\w_1))\nabla g_i(\w_1)\nabla f_i(g_i(\w_1))-\psi_i(\w_2,\lambda_i(\w_2))\nabla g_i(\w_2)\nabla f_i(g_i(\w_2))\|
    \end{aligned}
\end{equation*}
For each $i$ we have
\begin{equation*}
    \begin{aligned}
    &\|\nabla_\w\psi_i(\w_1,\lambda_i(\w_1))f_i(g_i(\w_1))-\nabla_\w\psi_i(\w_2,\lambda_i(\w_2))f_i(g_i(\w_2))\|^2\\
    &\leq \|\nabla_\w\psi_i(\w_1,\lambda_i(\w_1))[f_i(g_i(\w_1)-f_i(g_i(\w_2))]\|^2+ \|[\nabla_\w\psi_i(\w_1,\lambda_i(\w_1))-\nabla_\w\psi_i(\w_2,\lambda_i(\w_2))]f_i(g_i(\w_2))\|^2\\
    & \leq C_\psi^2 C_f^2 \|g_i(\w_1)-g_i(\w_2)\|^2 + L_\psi^2[\|\w_1-\w_2\|^2+\|\lambda_i(\w_1)-\lambda_i(\w_2)\|^2]B_f^2\\
    &\leq (C_\psi^2 C_f^2C_g^2+B_f^2L_\psi^2(1+C_\lambda))\|\w_1-\w_2\|^2\\
    &=:C_1^2\|\w_1-\w_2\|^2
    \end{aligned}
\end{equation*} 
and 
\begin{equation*}
    \begin{aligned}
    &\|\nabla_{\w \lambda}^2 L_i(\w_2,\lambda_i(\w_2)) [\nabla_{\lambda\lambda}^2 L_i(\w_2,\lambda_i(\w_2))]^{-1} \nabla_\lambda \psi_i(\w_2,\lambda_i(\w_2))f_i(g_i(\w_2))\\
    &  -\nabla_{\w \lambda}^2 L_i(\w_1,\lambda_i(\w_1)) [\nabla_{\lambda\lambda}^2 L_i(\w_1,\lambda_i(\w_1))]^{-1} \nabla_\lambda \psi_i(\w_1,\lambda_i(\w_1))f_i(g_i(\w_1))\|\\
    &\leq 4\|[\nabla_{\w \lambda}^2 L_i(\w_2,\lambda_i(\w_2))-\nabla_{\w \lambda}^2 L_i(\w_1,\lambda_i(\w_1))] [\nabla_{\lambda\lambda}^2 L_i(\w_2,\lambda_i(\w_2))]^{-1} \nabla_\lambda \psi_i(\w_2,\lambda_i(\w_2))f_i(g_i(\w_2))\|^2\\
    &\quad +4\|\nabla_{\w \lambda}^2 L_i(\w_1,\lambda_i(\w_1)) \left[[\nabla_{\lambda\lambda}^2 L_i(\w_2,\lambda_i(\w_2))]^{-1}-[\nabla_{\lambda\lambda}^2 L_i(\w_1,\lambda_i(\w_1))]^{-1}\right] \nabla_\lambda \psi_i(\w_2,\lambda_i(\w_2))f_i(g_i(\w_2))\|^2\\
    &\quad+4\|\nabla_{\w \lambda}^2 L_i(\w_1,\lambda_i(\w_1)) [\nabla_{\lambda\lambda}^2 L_i(\w_1,\lambda_i(\w_1))]^{-1} [\nabla_\lambda \psi_i(\w_2,\lambda_i(\w_2))-\nabla_\lambda \psi_i(\w_1,\lambda_i(\w_1))]f_i(g_i(\w_2))\|^2\\
    &\quad +4\|\nabla_{\w \lambda}^2 L_i(\w_1,\lambda_i(\w_1)) [\nabla_{\lambda\lambda}^2 L_i(\w_1,\lambda_i(\w_1))]^{-1} \nabla_\lambda \psi_i(\w_1,\lambda_i(\w_1))[f_i(g_i(\w_2))-f_i(g_i(\w_1))]\|^2\\
    &\leq \left[\left(\frac{4L_{L\w\lambda}^2C_\psi^2 B_f^2}{\mu_L^2}+\frac{4C_{L\w\lambda}^2L_{L\lambda\lambda}^2C_\psi^2 B_f^2}{\mu_L^4}+\frac{4C_{L\w\lambda}^2L_\psi^2 B_f^2}{\mu_L^2}\right)(1+C_\lambda^2)+\frac{4C_{L\w\lambda}^2C_\psi^2 C_f^2C_g^2}{\mu_L^2}\right] \|\w_1-\w_2\|^2\\
    &=:C_2^2\|\w_1-\w_2\|^2
    \end{aligned}
\end{equation*}
and
\begin{equation*}
    \begin{aligned}
    &\|\psi_i(\w_1,\lambda_i(\w_1))\nabla g_i(\w_1)\nabla f_i(g_i(\w_1))-\psi_i(\w_2,\lambda_i(\w_2))\nabla g_i(\w_2)\nabla f_i(g_i(\w_2))\|^2\\
    &\leq 3\|[\psi_i(\w_1,\lambda_i(\w_1))-\psi_i(\w_2,\lambda_i(\w_2))]\nabla g_i(\w_1)\nabla f_i(g_i(\w_1))\|^2\\
    &\quad +3 \|\psi_i(\w_2,\lambda_i(\w_2))[\nabla g_i(\w_1)-\nabla g_i(\w_2)]\nabla f_i(g_i(\w_1))\|^2\\
    &\quad + 3\|\psi_i(\w_2,\lambda_i(\w_2))\nabla g_i(\w_2)[\nabla f_i(g_i(\w_1))-\nabla f_i(g_i(\w_2))]\|^2\\
    &\leq \left[3C_\psi^2C_g^2 C_f^2(1+C_\lambda^2)+3B_\psi^2 L_g^2C_f^2 +3B_\ell^2 C_g^2 L_f^2C_g^2\right]\|\w_1-\w_2\|^2\\
    &=:C_3^2\|\w_1-\w_2\|^2.
    \end{aligned}
\end{equation*}
Hence
\begin{equation*}
    \begin{aligned}
    \|\nabla F(\w_1)-\nabla F(\w_2)\|&\leq\frac{1}{n}\sum_{i\in S}(C_1+C_2+C_3)\|\w_1-\w_2\| = L_F\|\w_1-\w_2\|,
    \end{aligned}
\end{equation*}
where $L_F:=C_1+C_2+C_3$.
\end{proof}

\subsubsection{Proof of Lemma~\ref{lemma_1}}
\begin{proof}
By $L_F$-smoothness of $F(\w)$, with $\eta_1 \leq \frac{1}{2L_F}$, we have
\begin{align*}
    F(\w_{t+1})&\leq F(\w_t)+\nabla F(\w_t)^T (\w_{t+1}-\w_t)+\frac{L_F}{2}\|\w_{t+1}-\w_t\|^2\\
    &=F(\w_t)-\eta_1 \nabla F(\w_t)^T\m_{t+1}+\frac{L_F}{2}\eta_1^2\|\m_{t+1}\|^2\\
    &=F(\w_t)+\frac{\eta_1}{2}\|\nabla F(\w_t)-\m_{t+1}\|^2-\frac{\eta_1}{2}\|\nabla F(\w_t)\|^2+\left(\frac{L_F }{2}\eta_1^2-\frac{\eta_1}{2}\right) \|\m_{t+1}\|^2.
\end{align*}
\end{proof}

\subsubsection{Proof of Lemma~\ref{lem_y_M_new}}
\begin{proof}
Recall and define the following notations
\begin{equation*}
    \lambda_i^{t+1}=\begin{cases}\lambda_{i}^t -\eta_0 \nabla_\lambda L_i(\w_t,\lambda_i^t;\B_{2,i}^t) \quad & \text{if }i\in \B_1^t\\ \lambda_{i}^t & \text{o.w.}\end{cases},\quad \widetilde{\lambda}_i^t:=\lambda_{i}^t-\eta_0 \nabla_\lambda L_i(\w_t,\lambda_i^t;\B_{2,i}^t) 
\end{equation*}
Then
\begin{equation}\label{ineq:10}
    \begin{aligned}
    & \E_{\B_{2,i}^t}[\|\widetilde{\lambda}_i^t-\lambda_i(\w_t)\|^2]\\
    &=\E_{\B_{2,i}^t}[\|\lambda_{i}^t- \eta_0 \nabla_\lambda L_i(\w_t,\lambda_i^t;\B_{2,i}^t) -\lambda_i(\w_t)\|^2]\\
    &=\E_{\B_{2,i}^t}[\|\lambda_{i}^t- \eta_0 \nabla_\lambda L_i(\w_t,\lambda_i^t;\B_{2,i}^t) -\lambda_i(\w_t)+\eta_0\nabla_\lambda L_i(\w_t,\lambda_i(\w_t))+\eta_0\nabla_\lambda L_i(\w_t,\lambda_i^t)-\eta_0\nabla_\lambda L_i(\w_t,\lambda_i^t)\|^2]\\
    &=\|\lambda_{i}^t-\lambda_i(\w_t)+\eta_0\nabla_\lambda L_i(\w_t,\lambda_i(\w_t))-\eta_0\nabla_\lambda L_i(\w_t,\lambda_i^t)\|^2 +\E_{\B_{2,i}^t}[\|\eta_0\nabla_\lambda L_i(\w_t,\lambda_i^t)- \eta_0 \nabla_\lambda L_i(\w_t,\lambda_i^t;\B_{2,i}^t) \|^2]\\
    &\leq \|\lambda_{i}^t-\lambda_i(\w_t)\|^2+\eta_0^2\|\nabla_\lambda L_i(\w_t,\lambda_i(\w_t))-\nabla_\lambda L_i(\w_t,\lambda_i^t)\|^2\\
    &\quad +2\eta_0\langle \lambda_{i}^t-\lambda_i(\w_t),\nabla_\lambda L_i(\w_t,\lambda_i(\w_t))-\nabla_\lambda L_i(\w_t,\lambda_i^t)\rangle +\eta_0^2 \sigma^2\\
    &\stackrel{(a)}{\leq} \|\lambda_{i}^t-\lambda_i(\w_t)\|^2+\eta_0^2L_L^2\|\lambda_{i}^t-\lambda_i(\w_t)\|^2-2\eta_0\mu_L\|\lambda_{i}^t-\lambda_i(\w_t)\|^2+\eta_0^2 \sigma^2\\
    &\stackrel{(b)}{\leq}(1-\eta_0\mu_L)\|\lambda_{i}^t-\lambda_i(\w_t)\|^2+\eta_0^2 \sigma^2
    \end{aligned}
\end{equation}
where inequality $(a)$ uses the strong monotonicity of $L_i(\w_t,\cdot)$ as it is assumed to be $\mu_L$-strongly convex, and $(b)$ uses the assumption $\eta_0\leq \mu_L/L_L^2$.\\
Moreover, consider the randomness on the query sampling $\B_1^t$, we have
\begin{equation*}
    \E_t[\|\lambda_i^{t+1}-\lambda_i(\w_t)\|^2]=\frac{|\B_1^t|}{n}\E_{\B_{2,i}^t}[\|\widetilde{\lambda}_i^t-\lambda_i(\w_t)\|^2]+\frac{n-|\B_1^t|}{n}\|\lambda_i^t-\lambda_i(\w_t)\|^2
\end{equation*}
which follows 
\begin{equation}\label{ineq:11}
    \E_{\B_{2,i}^t}[\|\widetilde{\lambda}_i^t-\lambda_i(\w_t)\|^2]=\frac{n}{|\B_1^t|}\E_t[\|\lambda_i^{t+1}-\lambda_i(\w_t)\|^2]-\frac{n-|\B_1^t|}{|\B_1^t|}\|\lambda_i^t-\lambda_i(\w_t)\|^2
\end{equation}
Combining inequalities~(\ref{ineq:10}) and (\ref{ineq:11}), we obtain
\begin{equation}
    \begin{aligned}
    \E_t[\|\lambda_i^{t+1}-\lambda_i(\w_t)\|^2]&\leq \frac{n-|\B_1^t|}{n}\|\lambda_i^t-\lambda_i(\w_t)\|^2+\frac{|\B_1^t|}{n}(1-\eta_0\mu_L)\|\lambda_{i}^t-\lambda_i(\w_t)\|^2+\frac{|\B_1^t|}{n}\eta_0^2 \sigma^2\\
    &\leq (1-\frac{|\B_1^t|\eta_0\mu_L}{n})\|\lambda_i^t-\lambda_i(\w_t)\|^2+\frac{|\B_1^t|\eta_0^2 \sigma^2}{n}\\
    \end{aligned}
\end{equation}
Thus
\begin{equation}
    \begin{aligned}
    &\E_t[\|\lambda_i^{t+1}-\lambda_i(\w_{t+1})\|^2]\\
    &\leq (1+\frac{|\B_1^t|\eta_0\mu_L}{2n})\E_t[\|\lambda_i^{t+1}-\lambda_i(\w_t)\|^2]+(1+\frac{2n}{|\B_1^t|\eta_0\mu_L})\E_t[\|\lambda_i(\w_{t+1})-\lambda_i(\w_t)\|^2]\\
    & \leq (1-\frac{|\B_1^t|\eta_0\mu_L}{2n})\|\lambda_i^t-\lambda_i(\w_t)\|^2+\frac{2|\B_1^t|\eta_0^2 \sigma^2}{n}+\frac{4nC_\lambda^2}{|\B_1^t|\eta_0\mu_L}\E_t[\|\w_{t+1}-\w_t\|^2]\\
    \end{aligned}
\end{equation}
where we use the assumption $\eta_0\leq \frac{2n}{|\B_1^t|\mu_L}$ i.e. $\frac{|\B_1^t|\eta_0\mu_L}{2n}\leq 1$.\\
Taking summation over all queries and expectation over all randomness, we have
\begin{equation}
    \begin{aligned}
    \E[\|\lambda^{t+1}-\lambda(\w_{t+1})\|^2]\leq (1-\frac{|\B_1^t|\eta_0\mu_L}{2n})\E[\|\lambda^t-\lambda(\w_t)\|^2]+2|\B_1^t|\eta_0^2 \sigma^2+\frac{4n^2C_\lambda^2}{|\B_1^t|\eta_0\mu_L}\E[\|\w_{t+1}-\w_t\|^2]\\
    \end{aligned}
\end{equation}
Taking summation over $t=0,\dots,T-1$, we have
\begin{equation}
    \begin{aligned}
    \sum_{t=0}^T\E[\|\lambda^t-\lambda(\w_t)\|^2] \leq \frac{2n}{|\B_1^t|\eta_0\mu_L}\|\lambda^0-\lambda(\w_0)\|^2+\frac{4n\eta_0 T\sigma^2}{\mu_L} +\frac{8n^3C_\lambda^2}{|\B_1^t|^2\eta_0^2\mu_L^2}\sum_{t=0}^{T-1}\E[\|\w_{t+1}-\w_t\|^2]\\
    \end{aligned}
\end{equation}
\end{proof}

\subsubsection{Proof of Lemma~\ref{lemma_u}}
\begin{proof}
Consider
\begin{equation}\label{ineq16}
\begin{aligned}
\| u^{t+1}-g(\w_t  )\|^2&= \| u^{t+1}- u^t+ u^t-g(\w_t )\|^2\\
&=\| u^{t+1}- u^t\|^2+\| u^t-g(\w_t)\|^2 +2\langle  u^{t+1}- u^t,  u^t-g(\w_t) \rangle\\
&=\| u^{t+1}- u^t\|^2+\| u^t-g(\w_t)\|^2 +2\sum_{i\in \B_1^t}\langle  u_i^{t+1}- u_i^{t},  u_i^{t}-g_i(\w_t) \rangle\\
&=\| u^{t+1}- u^t\|^2+\| u^t-g(\w_t)\|^2 +\underbrace{2\sum_{i\in \B_1^t}\langle  u_i^{t+1}- u_i^{t},  u_i^{t}-g_i(\w_t;\B_{2,i}^t) \rangle}_{A_5}\\
&\quad +\underbrace{2\sum_{i\in \B_1^t}\langle  u_i^{t+1}- u_i^{t}, g_i(\w_t ;\B_{2,i}^t)-g_i(\w_t ) \rangle}_{A_6}\\
\end{aligned}
\end{equation}
With $ u_i^{t}- u_i^{t+1}=\gamma_0( u_i^{t}-g_i(\w_t;\B_{2,i}^t))\, \forall i\in \B_1^t$ and the inequality $2\langle b-a,a-c\rangle = \|b-c\|^2-\|a-b\|^2-\|a-c\|^2$, we have
\begin{equation}\label{ineq13}
\begin{aligned}
A_5&= 2\sum_{i\in \B_1^t}\langle  u_i^{t+1}-g_i(\w_t ),  u_i^{t}-g_i(\w_t ,\B_{2,i}^t) \rangle+2\sum_{i\in \B_1^t}\langle g_i(\w_t )-  u_i^{t},  u_i^{t}-g_i(\w_t ,\B_{2,i}^t) \rangle\\
&= \frac{2}{\gamma_0}\sum_{i\in \B_1^t}\langle  u_i^{t+1}-g_i(\w_t ),  u_i^{t}- u_i^{t+1} \rangle+2\sum_{i\in \B_1^t}\langle g_i(\w_t )-  u_i^{t},  u_i^{t}-g_i(\w_t ,\B_{2,i}^t) \rangle\\
&=  \frac{1}{\gamma_0}\sum_{i\in \B_1^t}[\| u_i^{t}-g_i(\w_t )\|^2-\| u_i^{t+1}-g_i(\w_t )\|^2-\| u_i^{t+1}- u_i^{t}\|^2]\\
&\quad +2\sum_{i\in \B_1^t}\langle g_i(\w_t )-  u_i^{t},  u_i^{t}-g_i(\w_t ,\B_{2,i}^t) \rangle\\
&=  \frac{1}{\gamma_0}\| u^t-g(\w_t)\|^2-\frac{1}{\gamma_0}\| u^{t+1}-g(\w_t)\|^2-\frac{1}{\gamma_0}\| u^{t+1}- u^t\|^2\\
&\quad +2\sum_{i\in \B_1^t}\langle g_i(\w_t )-  u_i^{t},  u_i^{t}-g_i(\w_t ,\B_{2,i}^t) \rangle
\end{aligned}
\end{equation}
where the last equality is due to the fact $\| u_i^t-g_i(\w_t)\|^2=\| u_i^{t+1}-g_i(\w_t)\|^2$ and $\| u_i^{t+1}- u_i^t\|^2=0$ for all $i\not \in \B_1^t$. Taking expectation over the randomness at iteration $t$ we have
\begin{equation}\label{ineq14}
\begin{aligned}
\E_t[A_5]&\leq  \frac{1}{\gamma_0}\| u^t-g(\w_t)\|^2-\frac{1}{\gamma_0}\E_t[\| u^{t+1}-g(\w_t)\|^2]-\frac{1}{\gamma_0}\E_t[\| u^{t+1}- u^t\|^2]\\
&\quad - 2\E_{\B_1^t} \left[\sum_{i\in \B_1^t}\| u_i^{t}-g_i(\w_t )\|^2\right]\\
&=  \frac{1}{\gamma_0}\| u^t-g(\w_t)\|^2-\frac{1}{\gamma_0}\E_t[\| u^{t+1}-g(\w_t)\|^2]-\frac{1}{\gamma_0}\E_t[\| u^{t+1}- u^t\|^2]\\
&\quad - 2\frac{|\B_1^t|}{n}\| u^t-g(\w_t)\|^2
\end{aligned}
\end{equation}
On the other hand, with the assumption $\gamma_0<1/2$, we have
\begin{equation}\label{ineq15}
\begin{aligned}
A_6&\leq (\frac{1}{\gamma_0}-1)\sum_{i\in \B_1^t}\| u_i^{t+1}- u_i^{t}\|^2+\frac{1}{\frac{1}{\gamma_0}-1}\sum_{i\in \B_1^t}\| g_i(\w_t ;\B_{2,i}^t)-g_i(\w_t )\|^2\\
&\leq (\frac{1}{\gamma_0}-1)\sum_{i\in \B_1^t}\| u_i^{t+1}- u_i^{t}\|^2+2\gamma_0\sum_{i\in \B_1^t}\| g_i(\w_t ;\B_{2,i}^t)-g_i(\w_t )\|^2\\
&\leq (\frac{1}{\gamma_0}-1)\| u^{t+1}- u^t\|^2+2\gamma_0|\B_1^t|\sigma^2
\end{aligned}
\end{equation}

Then by plugging (\ref{ineq13}), (\ref{ineq14}), (\ref{ineq15}) back into (\ref{ineq16}), we obtain
\begin{equation*}
\begin{aligned}
&\E[\| u^{t+1}-g(\w_t)\|^2]\\
&\leq \E[\| u^{t+1}- u^t\|^2]+\E[\| u^t-g(\w_t)\|^2]+\frac{1}{\gamma_0}\E[\| u^t-g(\w_t)\|^2]-\frac{1}{\gamma_0}\E[\| u^{t+1}-g(\w_t)\|^2]\\
&\quad-\frac{1}{\gamma_0}\E[\| u^{t+1}- u^t\|^2] - 2\frac{|\B_1^t|}{n}\E[\| u^t-g(\w_t)\|^2]+ (\frac{1}{\gamma_0}-1)\E[\| u^{t+1}- u^t\|^2]+2\gamma_0|\B_1^t|\sigma^2\\
&=(1+\frac{1}{\gamma_0}- 2\frac{|\B_1^t|}{n})\E[\| u^t-g(\w_t)\|^2]-\frac{1}{\gamma_0}\E[\| u^{t+1}-g(\w_t)\|^2]+ 2\gamma_0|\B_1^t|\sigma^2\\
\end{aligned}
\end{equation*}

Note that $\frac{(1+\frac{1}{\gamma_0}- 2\frac{|\B_1^t|}{n})}{1+\frac{1}{\gamma_0}}=1-\frac{2|\B_1^t|\gamma_0}{(1+\gamma_0)n}\leq 1-\frac{|\B_1^t|\gamma_0}{n}$ and $(1+\frac{a}{2})(1-a)\leq 1-\frac{a}{2}$. It follows
\begin{equation*}
\begin{aligned}
\E[\| u^{t+1}-g(\w_t)\|^2]\leq (1-\frac{|\B_1^t|\gamma_0}{n})\E[\| u^t-g(\w_t)\|^2]+ 2\gamma_0^2|\B_1^t|\sigma^2
\end{aligned}
\end{equation*}

Moreover, we have
\begin{equation*}
    \begin{aligned}
    &\E[\| u^{t+1}-g(\w_{t+1})\|^2]\\
    &\leq \left(1+\frac{|\B_1^t|\gamma_0}{2n}\right)\E[\| u^{t+1}-g(\w_t)\|^2]+\left(1+\frac{2n}{|\B_1^t|\gamma_0}\right)\E[\left\|g(\w_{t})-g(\w_{t+1}) \right\|^2]\\
    &\leq \left(1+\frac{|\B_1^t|\gamma_0}{2n}\right)\left[(1-\frac{|\B_1^t|\gamma_0}{n})\E[\| u^t-g(\w_t)\|^2]+ 2\gamma_0^2|\B_1^t|\sigma^2\right]+\left(1+\frac{2n}{|\B_1^t|\gamma_0}\right)C_g^2n\E[\left\|\w_{t}-\w_{t+1} \right\|^2]\\
    &\leq \left(1-\frac{|\B_1^t|\gamma_0}{2n}\right)\E[\| u^t-g(\w_t)\|^2]+4\gamma_0^2|\B_1^t|\sigma^2+\frac{4n^2C_g^2\eta_1^2}{|\B_1^t|\gamma_0}\E[\left\|\m_{t+1} \right\|^2]
    \end{aligned}
\end{equation*}

Take summation over $t=0,\dots,T-1$ to get
\begin{equation*}
    \begin{aligned}
    &\sum_{t=0}^T\E[\| u^t-g(\w_t)\|^2]\leq \frac{2n}{|\B_1^t|\gamma_0} \E[\| u^0-g(\w_0)\|^2]+8n\gamma_0\sigma^2T+\frac{8n^3 C_g^2\eta_1^2}{|\B_1^t|^2\gamma_0^2}\sum_{t=0}^{T-1}\E[\left\|\m_{t+1} \right\|^2]
    \end{aligned}
\end{equation*}
\end{proof}

\subsubsection{Proof of Lemma~\ref{MA_s_multi_tasks}}
\begin{proof}
Recall and define the following notations
\begin{equation*}
s_i^{t+1}=\begin{cases} (1-\gamma_0')s_i^t+\gamma_0'\nabla^2_{\lambda\lambda}L_i(\w_t,\lambda_i^t;\B_{2,i}^t) \quad & \text{if }i\in \B_1^t\\ s_i^t &\text{o.w.}\end{cases},\quad \widetilde{s}_i^t = (1-\gamma_0')s_i^t+\gamma_0'\nabla^2_{\lambda\lambda}L_i(\w_t,\lambda_i^t;\B_{2,i}^t)
\end{equation*}
Consider 
\begin{equation}\label{ineq:200} 
    \begin{aligned}
    & \E_{\B_{2,i}^t}[\|\widetilde{s}_i^t-\nabla^2_{\lambda\lambda}L_i(\w_t,\lambda_i(\w_t))\|^2]\\
    &=\E_{\B_{2,i}^t}[[\|(1-\gamma_0')s_i^t+\gamma_0'\nabla^2_{\lambda\lambda}L_i(\w_t,\lambda_i^t;\B_{2,i}^t)-\nabla^2_{\lambda\lambda}L_i(\w_t,\lambda_i(\w_t))\|^2]\\
    &=\E_{\B_{2,i}^t}[\|(1-\gamma_0')[s_i^t-\nabla^2_{\lambda\lambda}L_i(\w_t,\lambda_i(\w_t))]+\gamma_0'[\nabla^2_{\lambda\lambda}L_i(\w_t,\lambda_i^t;\B_{2,i}^t)-\nabla^2_{\lambda\lambda}L_i(\w_t,\lambda_i^t)]\\
    &\quad +\gamma_0'[\nabla^2_{\lambda\lambda}L_i(\w_t,\lambda_i^t)- \nabla^2_{\lambda\lambda}L_i(\w_t,\lambda_i(\w_t))]\|^2]\\
    &=\|(1-\gamma_0')[s_i^t-\nabla^2_{\lambda\lambda}L_i(\w_t,\lambda_i(\w_t))]+\gamma_0'[\nabla^2_{\lambda\lambda}L_i(\w_t,\lambda_i^t)- \nabla^2_{\lambda\lambda}L_i(\w_t,\lambda_i(\w_t))]\|^2\\
    &\quad +\E_{\B_{2,i}^t}[\|\gamma_0'[\nabla^2_{\lambda\lambda}L_i(\w_t,\lambda_i^t;\B_{2,i}^t)-\nabla^2_{\lambda\lambda}L_i(\w_t,\lambda_i^t)]\|^2]\\
    &\leq (1+\frac{\gamma_0'}{2})(1-\gamma_0')^2\|s_i^t-\nabla^2_{\lambda\lambda}L_i(\w_t,\lambda_i(\w_t))\|^2+(1+\frac{2}{\gamma_0'})\gamma_0'^2\|\nabla^2_{\lambda\lambda}L_i(\w_t,\lambda_i^t)- \nabla^2_{\lambda\lambda}L_i(\w_t,\lambda_i(\w_t))\|^2 +\gamma_0'^2\sigma^2\\
    &\leq (1-\frac{\gamma_0'}{2})\|s_i^t-\nabla^2_{\lambda\lambda}L_i(\w_t,\lambda_i(\w_t))\|^2+4\gamma_0'L_{L\lambda\lambda}^2\|\lambda_i^t-\lambda_i(\w_t)\|^2 +\gamma_0'^2\sigma^2\\
    \end{aligned}
\end{equation}
Note that for the randomness of query sampling we have
\begin{equation*}
    \E_t[\|s_i^{t+1}-\nabla^2_{\lambda\lambda}L_i(\w_t,\lambda_i(\w_t))\|^2]=\frac{|\B_1^t|}{n}\E_{\B_{2,i}^t}[\|\widetilde{s}_i^t-\nabla^2_{\lambda\lambda}L_i(\w_t,\lambda_i(\w_t))\|^2]+\frac{n-|\B_1^t|}{n}\|s_i^t-\nabla^2_{\lambda\lambda}L_i(\w_t,\lambda_i(\w_t))\|^2
\end{equation*}
which follows that
\begin{equation}\label{eq:1}
    \E_{\B_{2,i}^t}[\|\widetilde{s}_i^t-\nabla^2_{\lambda\lambda}L_i(\w_t,\lambda_i(\w_t))\|^2]=\frac{n}{|\B_1^t|}\E_t[\|s_i^{t+1}-\nabla^2_{\lambda\lambda}L_i(\w_t,\lambda_i(\w_t))\|^2]-\frac{n-|\B_1^t|}{|\B_1^t|}\|s_i^t-\nabla^2_{\lambda\lambda}L_i(\w_t,\lambda_i(\w_t))\|^2
\end{equation}
Then by plugging the equality~(\ref{eq:1}) into inequality~(\ref{ineq:200}), we obtain
\begin{equation*}
    \begin{aligned}
    & \frac{n}{|\B_1^t|}\E_t[\|s_i^{t+1}-\nabla^2_{\lambda\lambda}L_i(\w_t,\lambda_i(\w_t))\|^2]-\frac{n-|\B_1^t|}{|\B_1^t|}\|s_i^t-\nabla^2_{\lambda\lambda}L_i(\w_t,\lambda_i(\w_t))\|^2\\
    &\leq (1-\frac{\gamma_0'}{2})\|s_i^t-\nabla^2_{\lambda\lambda}L_i(\w_t,\lambda_i(\w_t))\|^2+4\gamma_0'L_{L\lambda\lambda}^2\|\lambda_i^t-\lambda_i(\w_t)\|^2+\gamma_0'^2\sigma^2\\
    \end{aligned}
\end{equation*}
It follows
\begin{equation*}
    \begin{aligned}
    & \E_t[\|s_i^{t+1}-\nabla^2_{\lambda\lambda}L_i(\w_t,\lambda_i(\w_t))\|^2]\\
    &\leq (1-\frac{|\B_1^t|\gamma_0'}{2n})\|s_i^t-\nabla^2_{\lambda\lambda}L_i(\w_t,\lambda_i(\w_t))\|^2 +\frac{4|\B_1^t|\gamma_0'L_{L\lambda\lambda}^2}{n}\|\lambda_i^t-\lambda_i(\w_t)\|^2 +\frac{|\B_1^t|\gamma_0'^2\sigma^2}{n}\\
    \end{aligned}
\end{equation*}
Furthermore,
\begin{equation*}
    \begin{aligned}
    & \E_t[\|s_i^{t+1}-\nabla^2_{\lambda\lambda}L_i(\w_{t+1},\lambda_i(\w_{t+1}))\|^2]\\
    &\leq (1+\frac{|\B_1^t|\gamma_0'}{4n})\E_t[\|s_i^{t+1}-\nabla^2_{\lambda\lambda}L_i(\w_t,\lambda_i(\w_t))\|^2]+(1+\frac{4n}{|\B_1^t|\gamma_0'})L_{L\lambda\lambda}^2(1+C_\lambda^2)\E_t[\|\w_t-\w_{t+1}\|^2]\\
    &\leq (1-\frac{|\B_1^t|\gamma_0'}{4n})\|s_i^t-\nabla^2_{\lambda\lambda}L_i(\w_t,\lambda_i(\w_t))\|^2 +\frac{8|\B_1^t|\gamma_0'L_{L\lambda\lambda}^2}{n}\|\lambda_i^t-\lambda_i(\w_t)\|^2 +\frac{2|\B_1^t|\gamma_0'^2\sigma^2}{n}\\
    &\quad +\frac{8nL_{L\lambda\lambda}^2(1+C_\lambda^2)}{|\B_1^t|\gamma_0'}\E_t[\|\w_t-\w_{t+1}\|^2]
    \end{aligned}
\end{equation*}
where we use the assumption $\gamma_0'\leq 1\leq  \frac{4n}{|\B_1^t|}$ i.e. $\frac{4n}{|\B_1^t|\gamma_0'}\geq 1$.\\
Taking expectation over all randomness and taking summation over all queries, we have
\begin{equation*}
    \begin{aligned}
    & \E[\|s^{t+1}-\nabla^2_{\lambda\lambda}L(\w_{t+1},\lambda(\w_{t+1}))\|^2]\\
    &\leq (1-\frac{|\B_1^t|\gamma_0'}{4n})\E[\|s^t-\nabla^2_{\lambda\lambda}L(\w_t,\lambda_i(\w_t))\|^2] +\frac{8|\B_1^t|\gamma_0'L_{L\lambda\lambda}^2}{n}\E[\|\lambda^t-\lambda(\w_t)\|^2] +2|\B_1^t|\gamma_0'^2\sigma^2\\
    &\quad +\frac{8n^2L_{L\lambda\lambda}^2(1+C_\lambda^2)}{|\B_1^t|\gamma_0'}\E[\|\w_t-\w_{t+1}\|^2]
    \end{aligned}
\end{equation*}
Taking summation over $t=0,\dots,T-1$, we obtain
\begin{equation*}
    \begin{aligned}
    & \sum_{t=0}^T\E[\|s^t-\nabla^2_{\lambda\lambda}L(\w_t,\lambda(\w_t))\|^2]\\
    &\leq \frac{4n}{|\B_1^t|\gamma_0'}\|s^0-\nabla^2_{\lambda\lambda}L(\w_0,\lambda_i(\w_0))\|^2+32L_{L\lambda\lambda}^2\sum_{t=0}^{T-1}\E[\|\lambda^t-\lambda(\w_t)\|^2] +8n\gamma_0'T\sigma^2\\
    &\quad +\frac{32n^3L_{L\lambda\lambda}^2(1+C_\lambda^2)}{|\B_1^t|^2\gamma_0'^2}\sum_{t=0}^{T-1}\E[\|\w_t-\w_{t+1}\|^2]\\
    \end{aligned}
\end{equation*}
\end{proof}

\section{SONG and K-SONG with Faster Convergence}

In this section, we try to improve the convergence rate of SONG/K-SONG from $O(1/\epsilon^4)$ to $O(1/\epsilon^3)$. To this end, we apply two kinds of variance reduction updates, STORM and MSVR, to approximate $\nabla{F}(\w_t)$. 

Assume that we have a stochastic gradient estimator $\nabla f_i(\x_{t};\xi_t)$. The STORM estimator is updated by
\begin{equation*}
	\d^{t}_{i}=\begin{cases} 
	(1-\gamma_t)(\d^{t-1}_{i}-\nabla f_i(\x_{t-1};\xi_t))+ \nabla f_i(\x_{t};\xi_t) \quad & \text{if }i\in \B_1^t\\ 
	\d^{t-1}_{i} & \text{o.w.}
	\end{cases},
\end{equation*}
where $\xi_t$ and $\B_1^t$ denote a set of samples for updating $d^{t}_{i}$.

The MSVR estimator is updated by
\begin{equation*}
	\d^{t}_{i}=\begin{cases} 
	(1-\gamma_t)\d^{t-1}_{i} + \gamma_t \nabla f_i(\x_{t};\xi_t) + \beta_t (\nabla f_i(\x_{t};\xi_t)-\nabla f_i(\x_{t-1};\xi_t)) \quad & \text{if }i\in \B_1^t\\ 
	\d^{t-1}_{i} & \text{o.w.}
	\end{cases},
\end{equation*}
where $\beta_t$ can be set to $\frac{n-B_1}{B_1(1-\gamma_t)}+(1-\gamma_t)$ according to the analysis. We can observe that if we set $\beta_t$ to $1-\gamma_t$, then the MSVR estimator will become the STORM estimator.

A single-point version of MSVR (named as MSVR-SP) estimator is updated by
\begin{equation*}
	\d^{t}_{i}=\begin{cases} 
	(1-\gamma_t)\d^{t-1}_{i} + \gamma_t \nabla f_i(\x_{t};\xi_t) + \beta_t \nabla f_i(\x_{t};\xi_t)^T (\x_{t}-\x_{t-1}) \quad & \text{if }i\in \B_1^t\\ 
	\d^{t-1}_{i} & \text{o.w.}
	\end{cases}.
\end{equation*}
It can be proved that the MSVR-SP estimator enjoys the similar recurrence for the estimation error as MSVR.

\begin{algorithm}[t]
\caption{Faster K-SONG}\label{algo_faster_song}
\begin{algorithmic}
\REQUIRE $\w_0,\w_1,\m_0,\lambda^0,\lambda^1,z^0,u^0,s^0$
\ENSURE $\w_{T+1}$
\FOR{$t=1,2,\dots,T$}
\STATE Draw batch of queries $\B_1^t\in \{1,\dots,n\}$
\STATE Draw batch of items $\B_{2,i}^t$ for each $i\in \B_1^t$
\STATE $  u_i^{t}=\begin{cases}(1-\gamma_{u,t})u_{i}^{t-1} +\gamma_{u,t} g_i(\w_{t};\B_{2,i}^t) + \beta_{u,t}(g_i(\w_{t};\B_{2,i}^t)-g_i(\w_{t-1};\B_{2,i}^t)) \quad & \text{if }i\in \B_1^t\\   u_{i}^{t-1} & \text{o.w.} \end{cases}$
\STATE  $s_i^{t}=\begin{cases} (1-\gamma_{s,t})s_{i}^{t-1} +\gamma_{s,t} \nabla^2_{\lambda\lambda}L_i(\w_{t},\lambda_i^{t};\B_{2,i}^t) + \beta_{s,t}(\nabla^2_{\lambda\lambda}L_i(\w_{t},\lambda_i^{t};\B_{2,i}^t)-\nabla^2_{\lambda\lambda}L_i(\w_{t-1},\lambda_i^{t-1};\B_{2,i}^t)) \quad & \text{if }i\in \B_1^t\\   s_{i}^{t-1} & \text{o.w.} \end{cases}$
\STATE $z_i^{t}=\begin{cases} (1-\gamma_{z,t})z_i^{t-1} + \gamma_{z,t} \nabla L_i (\w_{t}, \lambda_i^{t};\B_{2,i}^t)) + \beta_{z,t}(\nabla_{\lambda} L_i (\w_t, \lambda_i^t;\B_{2,i}^t)-\nabla_{\lambda} L_i (\w_{t-1}, \lambda_i^{t-1};\B_{2,i}^t))) \quad & \text{if }i\in \B_1^t\\   z_i^{t-1} & \text{o.w.} \end{cases}$
\STATE $\lambda_i^{t+1}=\begin{cases} \lambda_i^t - \tau\tau_t z_i^t  \quad & \text{if }i\in \B_1^t\\   \lambda_i^t & \text{o.w.} \end{cases}$
\STATE Compute stochastic gradient estimator $G(\w_{t-1})$ and $G(\w_t)$ according to (\ref{update_grad})
\STATE $\m_{t}=(1-\gamma_{m,t})(\m_{t-1}-G(\w_{t-1}))+G(\w_{t})$
\STATE $\w_{t+1}=\w_t-\alpha\eta_t \m_{t}$
\ENDFOR
\end{algorithmic}
\end{algorithm}

Now we present a convergence analysis for Faster K-SONG. Similarly to the analysis of SONG and K-SONG, we consider the compositional bilevel optimization problem in (\ref{prob_ana2}), and we make the same assumptions as in Assumption~\ref{assump_ana}.

Now we present the formal statement of Theorem~\ref{thm:faster_ksong} regarding to algorithm~\ref{algo_faster_song}.

\begin{thm}\label{thm:faster_ksong}
	Under assumption~\ref{assump_ana}, with $\tau\leq\min\{\frac{1}{2L_L},\frac{8n}{\mu_L B_1}\}$, $\alpha\leq\min\{\frac{B_1}{8 C_{\text{max}}n}, \frac{1}{8C_\lambda}\sqrt{\frac{\tau\mu_L B_1}{Cn}}\}$, $\eta_t=\tau_t=c/(c_0+t)^{1/3}$, $\gamma_{z,t+1}=\frac{n\eta_t^2}{B_1}\left(\frac{1}{7 L_F c^3} + \frac{8C\alpha\tau_t\tau B_1}{\mu_L n\eta_t}\right)$, $\gamma_{u,t+1}=\left(\frac{2n}{7B_1 L_F c^3}+\frac{4C_1\alpha n}{B_1}\right)\eta_t^2$, $\gamma_{s,t+1}=\left(\frac{2n}{7B_1 L_F c^3}+\frac{4C_2\alpha n}{B_1}\right)\eta_t^2$, $\gamma_{m,t+1}=\left(\frac{1}{7L_F c^3}+\alpha\right)\eta_t^2$, $\beta_t=1-\gamma_t+\frac{n-B_1}{B_1 (1-\gamma_t)}$, where $C\geq\max\{\frac{8C_0 n}{\tau\mu_L B_1}, \frac{2 C_{\text{max}} n^2}{2\alpha B_1^2}\}$, $c_0\geq\max\{2,(4L_F c)^3,\left(\frac{8n}{7 B_1 L_F c}\right)^{3/2}, \left(\frac{32 C \alpha\tau c^2}{\mu_L}\right)^{3/2},\left(\frac{16C_1\alpha n c^2}{B_1}\right)^{3/2},\left(\frac{64nC_3}{7L_F B_1 c}\right)^{3/2},\left(\frac{128C_1 n C_3\alpha c^2}{B_1}\right)^{3/2},\left(\frac{16C_2\alpha n c^2}{B_1}\right)^{3/2},\\ \left(\frac{64nC_6}{7L_F B_1 c}\right)^{3/2},\left(\frac{128C_2 n C_6\alpha c^2}{B_1}\right)^{3/2},\left(\frac{4}{7L_F c}\right)^{3/2},\left(4\alpha c^2\right)^{3/2}\}$ and $C_0,C_1,C_2,C_3,C_4,C_5,C_6,C_{\text{max}}$ are constants specified in the proof, Algorithm~\ref{algo_faster_song} ensures that after $T=\O(\frac{1}{\epsilon^3})$ iterations, we can find an $\epsilon$-stationary solution of $F(\w_t)$, i.e., $\E\left[\sum_{t=1}^T\frac{1}{T}\left\|\nabla F(\w_t) \right\|^2\right] \leq \O\left(\frac{1}{T^{2/3}}\right)$.
\end{thm}

\subsection{Convergence Analysis of Theorem~\ref{thm:faster_ksong}}

In this section, we present the convergence analysis of Theorem~\ref{thm:faster_ksong}. To this end, we will first present several technical lemmas.

\begin{lemma}\label{lemma_start}

Consider the update $\w_{t+1}=\w_t-\alpha\eta_t \m_{t}$. Then under Assumption~\ref{assump_ana}, with $\alpha\eta_t L_F\leq\frac{1}{2}$, we have
\begin{equation*}
	F(\w_{t+1})\leq F(\w_t)+\frac{\alpha\eta_t}{2}||\nabla F(\w_t)-\m_{t}||^2-\frac{\alpha\eta_t}{2}||\nabla F(\w_t)||^2-\frac{\alpha\eta_t}{4}||\m_{t}||^2.
\end{equation*}

\end{lemma}

By $L_F$-smoothness of $F(\w)$ (proved in Lemma~\ref{lemma_Fsmooth}), Lemma~\ref{lemma_start} can be proved similarly to Lemma~\ref{lemma:1}. From the above lemma, we can see that the key to the proof of Theorem~\ref{thm:faster_ksong} is to bound $||\nabla F(\w_t)-\m_{t}||^2$. The lemma below will decompose this error into several terms that can be bounded separately.

\begin{lemma}\label{lemma_decomp}

Denote $\sum_{i\in\S}\left\| \lambda_i(\w_t) - \lambda_i^t \right\|^2 = \left\| \lambda(\w_t) - \lambda^t \right\|^2$, $\sum_{i\in\S}\left\|u_i^t-g_i(\w_t) \right\|^2=\left\|u^t-g(\w_t)  \right\|^2$ and $\sum_{i\in\S}\left\|s_i^t -\nabla_{\lambda\lambda}^2 L_i(\w_t,\lambda_i^t) \right\|^2=\left\|s^t -\nabla_{\lambda\lambda}^2 L(\w_t,\lambda^t) \right\|^2$. Consider the updates in Algorithm~\ref{algo_faster_song}, under Assumption~\ref{assump_ana}, for all $t>0$, we have
\begin{equation*}
\begin{aligned}
&\left\|\nabla F\left(\w_{t}\right)-\m_{t}\right\|^{2} \\
\leq & 2\left\|\frac{1}{n} \sum_{i \in \S} G_{i}\left(\w_{t}\right)-\m_{t}\right\|^{2}+  \frac{4 C_0}{n} \left\| \lambda(\w_t)-\lambda^t \right\|^2 \\&+  \frac{4 C_1}{n} \left\| u^t - g(\w_t) \right\|^2 + \frac{4 C_2}{n}  \left\| s^t - \nabla_{\lambda\lambda}^2 L(\w_t,\lambda^t) \right\|^2,
\end{aligned}
\end{equation*}
where $C_0=3 L_{\psi}^2 B_f^2+9\frac{L_{L\w\lambda}^2 C_{\psi}^2 B_f^2}{\gamma^2}+9\frac{C_{L\w\lambda}^2 L_{L\lambda\lambda}^2 C_{\psi}^2 B_f^2}{\gamma^4}+9\frac{C_{L\w\lambda}^2 L_{\psi}^2 B_f^2}{\gamma^2}+ 3 C_{g}^2 C_{f}^2 C_{\psi}^2$, $C_1=3 C_{\psi}^2 C_f^2 + 3 B_{\psi}^2 C_g^2 L_f^2 + 6\frac{C_{L\w\lambda}^2 C_{\psi}^2 C_f^2}{\gamma^2}$, $C_2=6 \frac{C_{L\w\lambda}^2 C_{\psi}^2 B_f^2}{\gamma^4}$.
	
\end{lemma}

Next, we will bound each term on the RHS in the inequality of the above lemma separately. We first bound the first term.

\begin{lemma}\label{lemma_decomp_2}

Denote $\sum_{i\in\S}\left\| u_i^t - u_i^{t-1} \right\|^2 = \left\| u^t - u^{t-1} \right\|^2$, $\sum_{i\in\S}\left\|\lambda_i^t-\lambda_i^{t-1} \right\|^2=\left\|\lambda^t-\lambda^{t-1} \right\|^2$ and $\sum_{i\in\S}\left\|s_i^t -s_i^{t-1} \right\|^2=\left\|s^t -s^{t-1} \right\|^2$. Assume $\E\left[\frac{1}{B_1}\sum_{i\in\B_1^t}G_i(\w_t)-\frac{1}{n}\sum_{i\in \S}G_i(\w_t) \right]\leq \sigma^2$, we have

\begin{equation*}
\begin{aligned}
	&  \E\left[ \left\| \m_t-\frac{1}{n}\sum_{i \in \S} G_i(\w_t) \right\|^2\right] \\
	\leq & (1-\gamma_{m,t}) \E\left[ \left\| \m_{t-1}-\frac{1}{n}\sum_{i\in\S}G_i(\w_{t-1}) \right\|^2 \right]+ 2\gamma_{m,t}^2\sigma^2 + \frac{2 C_3}{n} \left\| u^t - u^{t-1} \right\|^2 \\
	& \quad + 2 C_4 \left\| \w_t - \w_{t-1} \right\|^2 + \frac{2 C_5}{n} \left\| \lambda^t - \lambda^{t-1} \right\|^2 + \frac{2 C_6}{n} \left\| s^t - s^{t-1} \right\|^2,
\end{aligned}
\end{equation*}
where $C_3=6 C_{\psi}^2 C_{f}^2 + 12 \frac{C_{L\w\lambda}^2 C_{\psi}^2 C_f^2}{\gamma^2} + 9 B_{\psi}^2 C_g^2 L_f^2$, $C_4=12 B_f^2 L_{\psi}^2 + 24 \frac{C_{\psi}^2 B_f^2}{\gamma^2} L_{L\w\lambda}^2 + 24 \frac{L_{\psi}^2 B_f^2}{\gamma^2} C_{L\w\lambda}^2 + 18 C_g^2 C_f^2 C_{\psi}^2 + 9 B_{\psi}^2 C_f^2 L_g^2$, $C_5=12 B_f^2 L_{\psi}^2 + 24 \frac{C_{\psi}^2 B_f^2}{\gamma^2} L_{L\w\lambda}^2 + 24 \frac{L_{\psi}^2 B_f^2}{\gamma^2} C_{L\w\lambda}^2 + 18 C_g^2 C_f^2 C_{\psi}^2$, and $C_6=12\frac{C_{L\w\lambda}^2 C_{\psi}^2 B_f^2}{\gamma^4}$.

\end{lemma}

To bound the $\left\| \lambda_i(\w_t)-\lambda_i^t \right\|^2$ term in Lemma~\ref{lemma_decomp}, we can use the following lemma.

\begin{lemma}\label{lemma_fksong_lambda}

Consider the update in Algorithm~\ref{algo_faster_song}. Then under Assumption~\ref{assump_ana}, with $\tau_t\leq\frac{1}{2}$ and $\tau_t\tau\leq\frac{4n}{\mu_L B_1}$, we have

\begin{equation*}
\begin{aligned}
	& \E\left[\left\|\lambda^{t+1}-\lambda(\w_{t+1})\right\|^2 \right] \\
	\leq & \left(1-\frac{\tau\tau_t\mu_L B_1}{4n}\right)\E\left[\left\| \lambda(\w_t)-\lambda^t\right\|^2 \right] + \frac{8\tau_t\tau B_1}{\mu_L n}\left\| \nabla_{\lambda}L(\w_t,\lambda^t)-z^t \right\|^2 \\
	&\quad - \frac{3\tau B_1}{\tau_t n} \left(\frac{1}{\tau}-L_L\right)\left\|\lambda^{t+1} - \lambda^t \right\|^2 + \frac{8n^2 C_\lambda^2}{\tau\tau_t\mu_L B_1}\E\left[\left\| \w_{t+1}-\w_t \right\|^2 \right].
\end{aligned}
\end{equation*}

\end{lemma}

The following lemma bound the terms involving $u^t$, $z^t$ and $s^t$ on the RHS in the inequalities of the above two lemmas.

\begin{lemma}\label{lemma_msvr}
Suppose $f_i, i\in\S$ is a mapping, $\E[f_i(\x;\xi)]=f_i(\x)$ and $\E[f_i(\x;\xi)-f_i(\x)]\leq\sigma^2$. let
\begin{equation*}
	\d^{t}_{i}=\begin{cases} 
	(1-\gamma_t)\d^{t-1}_{i} + \gamma_t f_i(\x_{t};\xi_t) + \beta_t (f_i(\x_{t};\xi_t)- f_i(\x_{t-1};\xi_t)) \quad & \text{if }i\in \B_1^t\\ 
	\d^{t-1}_{i} & \text{o.w.}
	\end{cases}.
\end{equation*}

By setting $\gamma_t \leq \frac{1}{2}$ and $\beta_t=1-\gamma_t+\frac{n-B_1}{B_1 (1-\gamma_t)}$, for $t\geq 1$, we have
\begin{equation}
\begin{aligned}
	& \E\left[\left\|\d^t -f(\x_t) \right\|^2 \right] =\sum_{i\in\S} \E\left[\left\|\d_i^t -f_i(\x_t) \right\|^2 \right] \\
	 \leq & \left(1-\frac{\gamma_t B_1}{n}\right)\E\left[\left\| \d^{t-1}- f(\x_{t-1}) \right\|^2\right] + \frac{8 n}{B_1}\sum_{i\in\S} \E\left[\left\|f_i(\x_t;\xi_t)-f_i(\x_{t-1};\xi_t) \right\|^2 \right] + 2 B_1 \gamma_t^2 \sigma^2,
\end{aligned}	
\end{equation}
and
\begin{equation*}
\begin{aligned}
	&\E\left[\left\| \d^t - \d^{t-1} \right\|^2\right] = \sum_{i\in\S} \E\left[\left\| \d_i^t - \d_i^{t-1} \right\|^2\right]  \\
	\leq & 2 B_1 \gamma_t^2 \sigma^2 + \frac{4 B_1 \gamma_t^2}{n}\E\left[\left\| f(\x_{t-1}) - \d^{t-1} \right\|^2 \right] + \frac{9n}{B_1}\sum_{i\in\S}\E\left[ \left\|f_i(\x_t;\xi_t)-f_i(\x_{t-1};\xi_t) \right\|^2\right].
\end{aligned}	
\end{equation*}

\end{lemma}

\begin{proof}[Proof of Theorem~\ref{thm:faster_ksong}]

First, we apply Lemma~\ref{lemma_msvr} to $\delta_{L\lambda,t}=\left\|z^t -\nabla_{\lambda}L(\w_t,\lambda^t) \right\|^2$. We have
\begin{equation*}
	\E\left[\delta_{L\lambda,t+1}\right] \leq \left(1-\frac{\gamma_{z,t+1}B_1}{n}\right) \E\left[\delta_{L\lambda,t}\right] + \frac{16 n^2 L_L^2}{B_1}\left(\left\|\w_{t+1}-\w_t \right\|^2 +\frac{1}{n} \left\|\lambda^{t+1}-\lambda^t \right\|^2\right) + 2B_1 \gamma_{z,t+1}^2\sigma^2,
\end{equation*}

and thus
\begin{equation}\label{eq:fksong_proof_1}
\begin{aligned}
	 \E\left[\frac{\delta_{L\lambda,t+1}}{n\eta_t} - \frac{\delta_{L\lambda,t}}{n\eta_{t-1}} \right] & \leq \frac{2B_1 \gamma_{z,t+1}^2\sigma^2}{n\eta_t} + \frac{1}{n}\left(\frac{1}{\eta_t} -\frac{1}{\eta_{t-1}}-\frac{\gamma_{z,t+1}B_1}{n \eta_t} \right)\E\left[\delta_{L\lambda,t}\right] \\
	 &\quad + \frac{16 n L_L^2}{B_1\eta_t}\left(\left\|\w_{t+1}-\w_t \right\|^2 +\frac{1}{n} \left\|\lambda^{t+1}-\lambda^t \right\|^2\right).
\end{aligned}
\end{equation}

Denote $\delta_{\lambda,t}=\left\| \lambda^t - \lambda(\w_t) \right\|^2$, from Lemma~\ref{lemma_fksong_lambda} we have
\begin{equation}\label{eq:fksong_proof_2}
\begin{aligned}
	 \E\left[\frac{C\alpha}{n}(\delta_{\lambda,t+1}-\delta_{\lambda,t}) \right] & \leq -\frac{C\alpha\tau\tau_t\mu_L B_1}{4n^2}\E\left[\delta_{\lambda,t}\right] + \frac{8C\alpha\tau_t\tau B_1}{\mu_L n^2}\E\left[\delta_{L\lambda,t}\right] \\
	 &\quad -\frac{3C\alpha\tau B_1}{\tau_t n^2}\left(\frac{1}{\tau}-L_L\right) \E\left[\left\|\lambda^{t+1}-\lambda^t \right\|^2 \right] + \frac{8C\alpha n C_{\lambda}^2}{\tau\tau_t\mu_L B_1}\E\left[\left\|\w_{t+1}-\w_t \right\|^2\right],
\end{aligned}	
\end{equation}
where $C$ will be given below.

Set $\eta_t=\tau_t=\frac{c}{{(c_0 + t)}^{1/3}}$. To ensure $\eta_t\leq\frac{1}{4L_F}$, we need $c_0\geq (4L_F c)^3$. Thus
\begin{equation*}
\begin{aligned}
	\frac{1}{\eta_t} - \frac{1}{\eta_{t-1}} & = \frac{(c_0 + t)^{1/3}}{c} - \frac{(c_0 + t-1)^{1/3}}{c} \\
	& \leq \frac{1}{3c(c_0 + t-1)^{2/3}} \leq \frac{1}{3c(c_0/2 + t)^{2/3}} \\
	& \leq\frac{2^{2/3}}{3c(c_0 + t)^{2/3}}\leq \frac{2^{2/3}}{3c^3}\eta_t^2\leq \frac{1}{7L_F c^3}\eta_t,
\end{aligned}
\end{equation*}
where the first inequality holds by the concavity of the function $f(x)=x^{1/3}$, i.e., $(x+y)^{1/3}\leq x^{1/3} + \frac{y}{3 x^{2/3}}$, the second inequality is because $c_0\geq 2$. Then with $\gamma_{z,t+1}=\frac{n\eta_t^2}{B_1}\left(\frac{1}{7 L_F c^3} + \frac{8C\alpha\tau_t\tau B_1}{\mu_L n\eta_t}\right)$, where $\gamma_{z,t+1}<\frac{1}{2}$ for $c_0\geq\max\left\{(\frac{4n}{7 B_1 L_F c})^{3/2}, (\frac{32 C \alpha\tau c^2}{\mu_L})^{3/2}\right\}$, by combining (\ref{eq:fksong_proof_1}) and (\ref{eq:fksong_proof_2}), we have

\begin{equation}\label{eq:fksong_proof_3}
\begin{aligned}
	& \E\left[\frac{\delta_{L\lambda,t+1}}{n\eta_t} - \frac{\delta_{L\lambda,t}}{n\eta_{t-1}} \right] + \E\left[\frac{C\alpha}{n}(\delta_{\lambda,t+1}-\delta_{\lambda,t}) \right] \\
	\leq &  \frac{2B_1 \gamma_{z,t+1}^2\sigma^2}{n\eta_t} + \frac{16 n L_L^2}{B_1\eta_t}\left(\left\|\w_{t+1}-\w_t \right\|^2 +\frac{1}{n} \left\|\lambda^{t+1}-\lambda^t \right\|^2\right) \\
	- & \frac{C\alpha\tau\tau_t\mu_L B_1}{4n^2}\E\left[\delta_{\lambda,t}\right] -\frac{3C\alpha\tau B_1}{\tau_t n^2}\left(\frac{1}{\tau}-L_L\right) \E\left[\left\|\lambda^{t+1}-\lambda^t \right\|^2 \right] + \frac{8C\alpha n C_{\lambda}^2}{\tau\tau_t\mu_L B_1}\E\left[\left\|\w_{t+1}-\w_t \right\|^2\right].
\end{aligned}	
\end{equation}

To continue, we add the above recursion with the recursions for $\delta_{g,t}=\left\|u^t-g(\w_t) \right\|^2$, $\delta_{L\lambda\lambda,t}=\left\|s^t-\nabla_{\lambda\lambda}^2 L(\w_t,\lambda^t) \right\|^2$, and $\delta_{m,t}=\left\|m_t - \frac{1}{n}\sum_{i\in\S}G_i(\w_t) \right\|^2$. Using Lemma~\ref{lemma_msvr} and Lemma~\ref{lemma_decomp_2}, we have
\begin{equation*}
\begin{aligned}
	\E\left[\frac{\delta_{g,t+1}}{n\eta_t}-\frac{\delta_{g,t}}{n\eta_{t-1}} \right] \leq & \frac{1}{n}\left(\frac{1}{\eta_t}-\frac{1}{\eta_{t-1}}-\frac{\gamma_{u,t+1}B_1}{n\eta_t}\right)\E\left[\delta_{g,t} \right] + \frac{8n C_g^2}{B_1\eta_t} \E\left[\left\|\w_{t+1}-\w_t \right\|^2\right] + \frac{2 B_1 \gamma_{u,t+1}^2\sigma^2}{n\eta_t} \\
	\E\left[\frac{\delta_{L\lambda\lambda,t+1}}{n\eta_t}-\frac{\delta_{L\lambda\lambda,t}}{n\eta_{t-1}} \right] \leq & \frac{1}{n}\left(\frac{1}{\eta_t}-\frac{1}{\eta_{t-1}}-\frac{\gamma_{s,t+1}B_1}{n\eta_t}\right)\E\left[\delta_{L\lambda\lambda,t} \right] \\
	& + \frac{16n L_{L\lambda\lambda}^2}{B_1\eta_t} \E\left[\left\|\w_{t+1}-\w_t \right\|^2 + \frac{1}{n}\left\|\lambda^{t+1}-\lambda^t\right\|^2\right] + \frac{2 B_1 \gamma_{s,t+1}^2\sigma^2}{n\eta_t} \\
	\E\left[\frac{\delta_{m,t+1}}{\eta_t} - \frac{\delta_{m,t}}{\eta_{t-1}} \right] \leq &\left(\frac{1}{\eta_t}-\frac{1}{\eta_{t-1}}-\frac{\gamma_{m,t+1}}{\eta_t}\right)\E\left[\delta_{m,t} \right] + \frac{2\gamma_{m,t+1}^2\sigma^2}{\eta_t} + \frac{2 C_3}{n \eta_t} \E\left[\left\| u^{t+1} - u^{t} \right\|^2\right] \\
	& + \frac{2 C_4}{\eta_t} \E\left[\left\| \w_{t+1} - \w_{t} \right\|^2\right] + \frac{2 C_5}{n\eta_t} \E\left[\left\| \lambda^{t+1} - \lambda^{t} \right\|^2\right] + \frac{2 C_6}{n\eta_t} \E\left[\left\| s^{t+1} - s^{t} \right\|^2\right].
\end{aligned}
\end{equation*}

To bound $\left\|u^{t+1}-u^t \right\|^2$ and $\left\|s^{t+1}-s^t \right\|^2$ in the above inequality, we use Lemma~\ref{lemma_msvr} and have
\begin{equation*}
\begin{aligned}
	& \frac{2 C_3}{n\eta_t} \E\left[\left\| u^{t+1} - u^{t} \right\|^2\right] + \frac{2 C_6}{n \eta_t} \E\left[\left\| s^{t+1} - s^{t} \right\|^2\right] \\
	\leq & \frac{8 B_1 C_3 \gamma_{u,t+1}^2}{n^2\eta_t}\E\left[\delta_{g,t}\right] + \frac{8 B_1 C_6 \gamma_{s,t+1}^2}{n^2\eta_t}\E\left[\delta_{L\lambda\lambda,t}\right] + \frac{2(2\gamma_{u,t+1}^2 C_3 + 2\gamma_{s,t+1}^2 C_6)B_1}{n\eta_t}\sigma^2 \\
	& + \left(\frac{18 C_3 n C_g^2}{\eta_t B_1} + \frac{36 C_6 n L_{L\lambda\lambda}^2}{\eta_t B_1} \right)\E\left[\left\|\w_{t+1}-\w_t\right\|^2\right] + \frac{36 C_6 L_{L\lambda\lambda}^2}{\eta_t B_1} \E\left[\left\|\lambda^{t+1}-\lambda^t\right\|^2\right].
\end{aligned}
\end{equation*}

Then, we combine the above recursions with (\ref{eq:fksong_proof_3}) and have
\begin{equation*}
\begin{aligned}
	& \E\left[\frac{\delta_{L\lambda,t+1}}{n\eta_t} - \frac{\delta_{L\lambda,t}}{n\eta_{t-1}} \right] + \E\left[\frac{C\alpha}{n}(\delta_{\lambda,t+1}-\delta_{\lambda,t}) \right] + \E\left[\frac{\delta_{g,t+1}}{n\eta_t}-\frac{\delta_{g,t}}{n\eta_{t-1}} \right] + \E\left[\frac{\delta_{L\lambda\lambda,t+1}}{n\eta_t}-\frac{\delta_{L\lambda\lambda,t}}{n\eta_{t-1}} \right] + \E\left[\frac{\delta_{m,t+1}}{\eta_t} - \frac{\delta_{m,t}}{\eta_{t-1}} \right] \\
	\leq & \frac{2(\gamma_{z,t+1}^2+\gamma_{u,t+1}^2+\gamma_{s,t+1}^2+2\gamma_{u,t+1}^2 C_3 + 2\gamma_{s,t+1}^2 C_6)B_1 + 2n\gamma_{m,t+1}^2}{n\eta_t}\sigma^2 - \frac{C\alpha\tau\tau_t\mu_L B_1}{4n^2}\E\left[\delta_{\lambda,t}\right] \\
	 + & \frac{1}{n}\left(\frac{1}{\eta_t}-\frac{1}{\eta_{t-1}}-\frac{\gamma_{u,t+1}B_1}{n\eta_t} + \frac{8 B_1 C_3 \gamma_{u,t+1}^2}{n\eta_t}\right)\E\left[\delta_{g,t} \right]
	 + \frac{1}{n}\left(\frac{1}{\eta_t}-\frac{1}{\eta_{t-1}}-\frac{\gamma_{s,t+1}B_1}{n\eta_t} + \frac{8 B_1 C_6 \gamma_{s,t+1}^2}{n\eta_t}\right)\E\left[\delta_{L\lambda\lambda,t} \right] \\
	 + & \left(\frac{1}{\eta_t}-\frac{1}{\eta_{t-1}}-\frac{\gamma_{m,t+1}}{\eta_t}\right)\E\left[\delta_{m,t} \right] + \left(\frac{16 L_L^2 + 16 L_{L\lambda\lambda}^2+2 C_5 + 36 C_6 L_{L\lambda\lambda}^2}{B_1\eta_t} -\frac{3C\alpha\tau B_1}{\tau_t n^2}\left(\frac{1}{\tau}-L_L\right) \right)\E\left[\left\|\lambda^{t+1}-\lambda^t\right\|^2\right] \\
	 + & \left( \frac{8C\alpha n C_{\lambda}^2}{\tau\tau_t\mu_L B_1} + \frac{(16 L_L^2 + 8 C_g^2+16 L_{L\lambda\lambda}^2+2 C_4+18 C_3 C_g^2+36 C_6 L_{L\lambda\lambda}^2) n}{B_1\eta_t} \right)\alpha^2\eta_t^2 \E\left[\left\|\m_t \right\|^2\right],
\end{aligned}
\end{equation*}
where we use $\w_{t+1}=\w_t - \alpha\eta_t\m_t$. To simplify the inequality above, we denote $C_{\text{max}}=\max\{16 L_L^2 + 16 L_{L\lambda\lambda}^2+2 C_5 + 36 C_6 L_{L\lambda\lambda}^2,16 L_L^2 + 8 C_g^2+16 L_{L\lambda\lambda}^2+2 C_4+18 C_3 C_g^2+36 C_6 L_{L\lambda\lambda}^2\}$. 

Now we plug Lemma~\ref{lemma_start} into the above inequality, and give us
\begin{equation*}
\begin{aligned}
	& \E\left[\frac{\delta_{L\lambda,t+1}}{n\eta_t} - \frac{\delta_{L\lambda,t}}{n\eta_{t-1}} \right] + \E\left[\frac{C\alpha}{n}(\delta_{\lambda,t+1}-\delta_{\lambda,t}) \right] + \E\left[\frac{\delta_{g,t+1}}{n\eta_t}-\frac{\delta_{g,t}}{n\eta_{t-1}} \right] \\
	& + \E\left[\frac{\delta_{L\lambda\lambda,t+1}}{n\eta_t}-\frac{\delta_{L\lambda\lambda,t}}{n\eta_{t-1}} \right] + \E\left[\frac{\delta_{m,t+1}}{\eta_t} - \frac{\delta_{m,t}}{\eta_{t-1}} \right] + \frac{\alpha\eta_t}{2}\left\|\nabla F(\w_t) \right\|^2 \\
	\leq & F(\w_t) - F(\w_{t+1}) + \frac{2(\gamma_{z,t+1}^2+\gamma_{u,t+1}^2+\gamma_{s,t+1}^2+2\gamma_{u,t+1}^2 C_3 + 2\gamma_{s,t+1}^2 C_6)B_1 + 2n\gamma_{m,t+1}^2}{n\eta_t}\sigma^2 \\
	\leq & \frac{2C_0\alpha\eta_t}{n}\E\left[\delta_{\lambda,t}\right] - \frac{C\alpha\tau\tau_t\mu_L B_1}{4n^2}\E\left[\delta_{\lambda,t}\right] \\
	 + & \frac{1}{n}\left(\frac{1}{\eta_t}-\frac{1}{\eta_{t-1}}-\frac{\gamma_{u,t+1}B_1}{n\eta_t} + \frac{8 B_1 C_3 \gamma_{u,t+1}^2}{n\eta_t}\right)\E\left[\delta_{g,t} \right] + \frac{2C_1\alpha\eta_t}{n}\E\left[\delta_{g,t}\right] \\
	 + & \frac{1}{n}\left(\frac{1}{\eta_t}-\frac{1}{\eta_{t-1}}-\frac{\gamma_{s,t+1}B_1}{n\eta_t} + \frac{8 B_1 C_6 \gamma_{s,t+1}^2}{n\eta_t}\right)\E\left[\delta_{L\lambda\lambda,t} \right] + \frac{2C_2\alpha\eta_t}{n}\E\left[\delta_{L\lambda\lambda,t}\right] \\
	 + & \left(\frac{1}{\eta_t}-\frac{1}{\eta_{t-1}}-\frac{\gamma_{m,t+1}}{\eta_t}\right)\E\left[\delta_{m,t} \right] + \alpha\eta_t\E\left[\delta_{m,t} \right] \\
	 + & \left(\frac{C_{\text{max}}}{B_1\eta_t} -\frac{3C\alpha\tau B_1}{\tau_t n^2}\left(\frac{1}{\tau}-L_L\right) \right)\E\left[\left\|\lambda^{t+1}-\lambda^t\right\|^2\right] \\
	 + & \left( \frac{8C\alpha n C_{\lambda}^2}{\tau\tau_t\mu_L B_1} + \frac{C_{\text{max}} n}{B_1\eta_t} \right)\alpha^2\eta_t^2 \E\left[\left\|\m_t \right\|^2\right] - \frac{\alpha\eta_t}{4}\E\left[\left\|\m_t \right\|^2\right].
\end{aligned}
\end{equation*}

With $C\geq\frac{8 C_0 n}{\tau\mu_L B_1}$, we have $\frac{2C_0\alpha\eta_t}{n}-\frac{C\alpha\tau\tau_t\mu_L B_1}{4n^2}\leq 0$. Besides, with $\gamma_{u,t+1}\leq \frac{1}{16C_3}$, we have $\frac{8 B_1 C_3 \gamma_{u,t+1}^2}{n\eta_t}\leq\frac{\gamma_{u,t+1}B_1}{2n\eta_t}$. Then we can set $\gamma_{u,t+1}=\left(\frac{2n}{7B_1 L_F c^3}+\frac{4C_1\alpha n}{B_1}\right)\eta_t^2$, and $\gamma_{u,t+1}\leq\min\{\frac{1}{2},\frac{1}{16C_3}\}$ can be achieved by setting $c_0\geq\max\left\{\left(\frac{8n}{7L_F B_1 c}\right)^{3/2},\left(\frac{16C_1\alpha n c^2}{B_1}\right)^{3/2},\left(\frac{64nC_3}{7L_F B_1 c}\right)^{3/2},\left(\frac{128C_1 n C_3\alpha c^2}{B_1}\right)^{3/2}\right\}$. Similarly, with $\gamma_{s,t+1}\leq \frac{1}{16C_6}$, we have $\frac{8 B_1 C_6 \gamma_{s,t+1}^2}{n\eta_t}\leq\frac{\gamma_{s,t+1}B_1}{2n\eta_t}$. Then we can set $\gamma_{s,t+1}=\left(\frac{2n}{7B_1 L_F c^3}+\frac{4C_2\alpha n}{B_1}\right)\eta_t^2$, and $\gamma_{s,t+1}\leq\min\{\frac{1}{2},\frac{1}{16C_6}\}$ can be achieved by setting $c_0\geq\max\left\{\left(\frac{8n}{7L_F B_1 c}\right)^{3/2},\left(\frac{16C_2\alpha n c^2}{B_1}\right)^{3/2},\left(\frac{64nC_6}{7L_F B_1 c}\right)^{3/2},\left(\frac{128C_2 n C_6\alpha c^2}{B_1}\right)^{3/2}\right\}$.

Then we aim to eliminate $\E\left[\delta_{m,t} \right]$. To this end, we can set $\gamma_{m,t+1}=\left(\frac{1}{7L_F c^3}+\alpha\right)\eta_t^2$ and $\gamma_{m,t+1}\leq \frac{1}{2}$ can be achieved by setting $c_0\geq\max\left\{\left(\frac{4}{7L_F c}\right)^{3/2},\left(4\alpha c^2\right)^{3/2} \right\}$. Afterwards, with $L_L\leq\frac{1}{2\tau}$, we have $\frac{C_{\text{max}}}{B_1\eta_t} -\frac{3C\alpha\tau B_1}{\tau_t n^2}\left(\frac{1}{\tau}-L_L\right)\leq \frac{C_{\text{max}}}{B_1\eta_t} -\frac{3C\alpha B_1}{2\tau_t n^2}$. By setting $C\geq\frac{2C_{\text{max}}n^2}{3\alpha B_1^2}$, we have $\frac{C_{\text{max}}}{B_1\eta_t} -\frac{3C\alpha B_1}{2\tau_t n^2}\leq 0$. Last, with $\alpha\leq\min\left\{\frac{B_1}{8C_{\text{max}}n},\frac{1}{8C_\lambda}\sqrt{\frac{\tau\mu_L B_1}{Cn}} \right\}$, we have $\left(\frac{8C\alpha n C_{\lambda}^2}{\tau\tau_t\mu_L B_1} + \frac{C_{\text{max}} n}{B_1\eta_t} \right)\alpha^2\eta_t^2 - \frac{\alpha\eta_t}{4}\leq 0$. As a result, we have
\begin{equation*}
\begin{aligned}
	& \E\left[\frac{\delta_{L\lambda,t+1}}{n\eta_t} - \frac{\delta_{L\lambda,t}}{n\eta_{t-1}} \right] + \E\left[\frac{C\alpha}{n}(\delta_{\lambda,t+1}-\delta_{\lambda,t}) \right] + \E\left[\frac{\delta_{g,t+1}}{n\eta_t}-\frac{\delta_{g,t}}{n\eta_{t-1}} \right] \\
	& + \E\left[\frac{\delta_{L\lambda\lambda,t+1}}{n\eta_t}-\frac{\delta_{L\lambda\lambda,t}}{n\eta_{t-1}} \right] + \E\left[\frac{\delta_{m,t+1}}{\eta_t} - \frac{\delta_{m,t}}{\eta_{t-1}} \right] + \frac{\alpha\eta_t}{2}\left\|\nabla F(\w_t) \right\|^2 \\
	\leq & F(\w_t) - F(\w_{t+1}) + \O(1)\eta_t^3
\end{aligned}
\end{equation*}

Take summation over $t=1,2,\cdots,T$, we have
\begin{equation}
	\E\left[\sum_{t=1}^T\frac{\alpha\eta_t}{2}\left\|\nabla F(\w_t) \right\|^2\right] \leq F(\w_1)-F(\w_{T+1}) + \frac{1}{n\eta_1}\E\left[\delta_{L\lambda,1}+\delta_{g,1}+\delta_{L\lambda\lambda,1}+n\delta_{m,1}\right] + \frac{C\alpha}{n}\E\left[\delta_{\lambda,1}\right] +\O(\log(T+1)).
\end{equation}

Denote $M=F(\w_1)-F(\w_{T+1}) + \frac{1}{n\eta_1}\E\left[\delta_{L\lambda,1}+\delta_{g,1}+\delta_{L\lambda\lambda,1}+n\delta_{m,1}\right] + \frac{C\alpha}{n}\E\left[\delta_{\lambda,1}\right] +\O(\log(T+1))$, then we have 
\begin{equation}
	\E\left[\sum_{t=1}^T\frac{\alpha}{2T}\left\|\nabla F(\w_t) \right\|^2\right] \leq \frac{M}{\eta_T T}.
\end{equation}

Note that $\eta_T=\frac{c}{(c_0+T)^{1/3}}$, so $\frac{M}{\eta_T T}=\frac{M}{T}\frac{(c_0+T)^{1/3}}{c}\leq \frac{M c_0^{1/3}}{Tc} + \frac{M T^{1/3}}{Tc}\sim\O\left(\frac{1}{T^{2/3}}\right)$, where the inequality is due to $(a+b)^{1/3}\leq a^{1/3} + b^{1/3}$, thus we have
\begin{equation}
	\E\left[\sum_{t=1}^T\frac{1}{T}\left\|\nabla F(\w_t) \right\|^2\right] \leq \O\left(\frac{1}{T^{2/3}}\right).
\end{equation}

\end{proof}

\subsection{Proofs of Lemmas}

\subsubsection{Proof of Lemma~\ref{lemma_decomp}}

\begin{proof}

First, we have

\begin{equation}\label{eq:temp_0}
\begin{aligned}
&\left\|\nabla F\left(\w_{t}\right)-\m_{t}\right\|^{2} \\
=&\left\|\nabla F\left(\w_{t}\right)-\frac{1}{n} \sum_{i \in \S} G_{i}\left(\w_{t}\right)+\frac{1}{n} \sum_{i \in \S} G_{i}\left(\w_{t}\right)-\m_{t}\right\|^{2} \\
\leq & 2\left\|\nabla F\left(\w_{t}\right)-\frac{1}{n} \sum_{i \in \S} G_{i}\left(\w_{t}\right)\right\|^{2}+2\left\|\frac{1}{n} \sum_{i \in \S} G_{i}\left(\w_{t}\right)-\m_{t}\right\|^{2} \\
= & 2\left\| \frac{1}{n} \sum_{i \in \S} \nabla  F_i\left(\w_{t}\right)-\frac{1}{n} \sum_{i \in \S} G_{i}\left(\w_{t}\right)\right\|^{2}+2\left\|\frac{1}{n} \sum_{i \in \S} G_{i}\left(\w_{t}\right)-\m_{t}\right\|^{2} \\
\leq & 2 \frac{1}{n} \sum_{i \in \S}\left\|\nabla F_{i}\left(\w_{t}\right)-G_{i}\left(\w_{t}\right)\right\|^{2}+2\left\|\frac{1}{n} \sum_{i \in \S} G_{i}\left(\w_{t}\right)-\m_{t}\right\|^{2}.
\end{aligned}
\end{equation}

In order to bound $\left\|\nabla F_{i}\left(\w_{t}\right)-G_{i}\left(\w_{t}\right)\right\|^{2}$, we introduce $\nabla F_i (\w_t, \lambda_i^t)$ and we have

\begin{equation}\label{eq:temp_1}
\begin{aligned}
	&\left\|\nabla F_{i}\left(\w_{t}\right)-\nabla F_{i}\left(\w_{t}, \lambda_{i}^{t}\right)\right\|^{2}\\
	\leq & \left\| \left[\nabla_\w \psi_i (\w_t, \lambda_i(\w_t)) - \nabla_{\w\lambda}^2 L_i (\w,\lambda_i(\w_t))[\nabla_{\lambda\lambda}^2 L_i(\w_t,\lambda_i(\w_t))]^{-1}\nabla_\lambda \psi_i(\w,\lambda_i(\w_t))\right] f_i(g_i(\w_t)) \right. \\ 
	& \left. + \psi_i(\w_t,\lambda_i(\w_t))\nabla g_i(\w_t) \nabla f_i(g_i(\w_t)) - \psi_i(\w_t,\lambda_i^t)\nabla g_i(\w_t) \nabla f_i(g_i(\w_t)) \right. \\
	& \left. -\left[\nabla_\w \psi_i (\w_t, \lambda_i^t) - \nabla_{\w\lambda}^2 L_i (\w,\lambda_i^t)[\nabla_{\lambda\lambda}^2 L_i(\w_t,\lambda_i^t)]^{-1}\nabla_\lambda \psi_i(\w,\lambda_i^t)\right]f_i(g_i(\w_t)) \right\|^{2} \\
	\leq & 3\left\| \nabla_\w \psi_i (\w_t, \lambda_i(\w_t)) f_i(g_i(\w_t)) - \nabla_\w \psi_i (\w_t, \lambda_i^t) f_i(g_i(\w_t)) \right\|^{2} \\
	& + 3\left\| \nabla_{\w\lambda}^2 L_i (\w,\lambda_i(\w_t))[\nabla_{\lambda\lambda}^2 L_i(\w_t,\lambda_i(\w_t))]^{-1}\nabla_\lambda \psi_i(\w,\lambda_i(\w_t)) f_i(g_i(\w_t)) \right. \\
	 &\quad\quad \left. - \nabla_{\w\lambda}^2 L_i (\w,\lambda_i^t)[\nabla_{\lambda\lambda}^2 L_i(\w_t,\lambda_i^t)]^{-1}\nabla_\lambda \psi_i(\w,\lambda_i^t)f_i(g_i(\w_t)) \right\|^{2} \\
	& + 3\left\| \psi_i(\w_t,\lambda_i(\w_t))\nabla g_i(\w_t) \nabla f_i(g_i(\w_t)) - \psi_i(\w_t,\lambda_i^t)\nabla g_i(\w_t) \nabla f_i(g_i(\w_t)) \right\|^{2}\\
	\leq & 3 L_{\psi}^2 B_f^2 \left\| \lambda_i(\w_t)-\lambda_i^t \right\|^2 + 9\frac{L_{L\w\lambda}^2 C_{\psi}^2 B_f^2}{\gamma^2} \left\| \lambda_i(\w_t)-\lambda_i^t \right\|^2 + 9\frac{C_{L\w\lambda}^2 L_{L\lambda\lambda}^2 C_{\psi}^2 B_f^2}{\gamma^4} \left\| \lambda_i(\w_t)-\lambda_i^t \right\|^2 \\
	& +  9\frac{C_{L\w\lambda}^2 L_{\psi}^2 B_f^2}{\gamma^2} \left\| \lambda_i(\w_t)-\lambda_i^t \right\|^2 + 3 C_{g}^2 C_{f}^2 C_{\psi}^2 \left\| \lambda_i(\w_t)-\lambda_i^t \right\|^2 \\
	= & \underbrace{\left(3 L_{\psi}^2 B_f^2+9\frac{L_{L\w\lambda}^2 C_{\psi}^2 B_f^2}{\gamma^2}+9\frac{C_{L\w\lambda}^2 L_{L\lambda\lambda}^2 C_{\psi}^2 B_f^2}{\gamma^4}+9\frac{C_{L\w\lambda}^2 L_{\psi}^2 B_f^2}{\gamma^2}+ 3 C_{g}^2 C_{f}^2 C_{\psi}^2\right)}_{C_0} \left\| \lambda_i(\w_t)-\lambda_i^t \right\|^2,
\end{aligned}
\end{equation}
where we use the conditions in Assumption~\ref{assump_ana} in the last inequality. Next we will bound $\left\|\nabla F_{i}\left(\w_{t}, \lambda_{i}^{t}\right)-G_{i}\left(\w_{t}\right)\right\|^{2}$

\begin{equation}\label{eq:temp_2}
\begin{aligned}
	&\left\|\nabla F_{i}\left(\w_{t}, \lambda_{i}^{t}\right)-G_{i}\left(\w_{t}\right)\right\|^{2} \\
	= & \left\| \left[\nabla_\w \psi_i (\w_t, \lambda_i^t) - \nabla_{\w\lambda}^2 L_i (\w,\lambda_i^t)[\nabla_{\lambda\lambda}^2 L_i(\w_t,\lambda_i^t)]^{-1}\nabla_\lambda \psi_i(\w,\lambda_i^t)\right] f_i(g_i(\w_t)) \right. \\ 
	& \left. + \psi_i(\w_t,\lambda_i^t)\nabla g_i(\w_t) \nabla f_i(g_i(\w_t)) - \psi_i(\w_t,\lambda_i^t)\nabla g_i(\w_t) \nabla f_i(u_i^t)) \right. \\
	& \left. -\left[\nabla_\w \psi_i (\w_t, \lambda_i^t) - \nabla_{\w\lambda}^2 L_i (\w,\lambda_i^t)[s_i^t]^{-1}\nabla_\lambda \psi_i(\w,\lambda_i^t)\right]f_i(u_i^t) \right\|^{2} \\
	\leq & 3 \left\| \nabla_\w \psi_i (\w_t, \lambda_i^t)f_i(g_i(\w_t)) - \nabla_\w \psi_i (\w_t, \lambda_i^t)f_i(u_i^t)) \right\|^2 \\
	& + 3 \left\|  \psi_i(\w_t,\lambda_i^t)\nabla g_i(\w_t) \nabla f_i(g_i(\w_t)) - \psi_i(\w_t,\lambda_i^t)\nabla g_i(\w_t) \nabla f_i(u_i^t)) \right\|^2 \\
	& + 3 \left\|  \nabla_{\w\lambda}^2 L_i (\w,\lambda_i^t)[\nabla_{\lambda\lambda}^2 L_i(\w_t,\lambda_i^t)]^{-1}\nabla_\lambda \psi_i(\w,\lambda_i^t) f_i(g_i(\w_t))  -\nabla_{\w\lambda}^2 L_i (\w,\lambda_i^t)[s_i^t]^{-1}\nabla_\lambda \psi_i(\w,\lambda_i^t) f_i(u_i^t) \right\|^2 \\
	\leq & \underbrace{\left(3 C_{\psi}^2 C_f^2 + 3 B_{\psi}^2 C_g^2 L_f^2 + 6\frac{C_{L\w\lambda}^2 C_{\psi}^2 C_f^2}{\gamma^2}\right)}_{C_1} \left\| u_i^t - g_i(\w_t) \right\|^2 + \underbrace{6 \frac{C_{L\w\lambda}^2 C_{\psi}^2 B_f^2}{\gamma^4}}_{C_2} \left\| s_i^t - \nabla_{\lambda\lambda}^2 L_i(\w_t,\lambda_i^t) \right\|^2.
\end{aligned}	
\end{equation}

Thus by combining (\ref{eq:temp_1}) and (\ref{eq:temp_2}), we have

\begin{equation}\label{eq:temp_3}
	\begin{aligned}
		&\left\|\nabla F_{i}\left(\w_{t}\right)-G_{i}\left(\w_{t}\right)\right\|^{2} \\
		\leq & 2\left\|\nabla F_{i}\left(\w_{t}\right)-\nabla F_{i}\left(\w_{t}, \lambda_{i}^{t}\right)\right\|^{2} + 2\left\|\nabla F_{i}\left(\w_{t}, \lambda_{i}^{t}\right)-G_{i}\left(\w_{t}\right)\right\|^{2} \\
		\leq & 2 C_0  \left\| \lambda_i(\w_t)-\lambda_i^t \right\|^2 + 2 C_1 \left\| u_i^t - g_i(\w_t) \right\|^2 + 2 C_2 \left\| s_i^t - \nabla_{\lambda\lambda}^2 L_i(\w_t,\lambda_i^t) \right\|^2, 
	\end{aligned}
\end{equation}
where $C_0=3 L_{\psi}^2 B_f^2+9\frac{L_{L\w\lambda}^2 C_{\psi}^2 B_f^2}{\gamma^2}+9\frac{C_{L\w\lambda}^2 L_{L\lambda\lambda}^2 C_{\psi}^2 B_f^2}{\gamma^4}+9\frac{C_{L\w\lambda}^2 L_{\psi}^2 B_f^2}{\gamma^2}+ 3 C_{g}^2 C_{f}^2 C_{\psi}^2$, $C_1=3 C_{\psi}^2 C_f^2 + 3 B_{\psi}^2 C_g^2 L_f^2 + 6\frac{C_{L\w\lambda}^2 C_{\psi}^2 C_f^2}{\gamma^2}$, $C_2=6 \frac{C_{L\w\lambda}^2 C_{\psi}^2 B_f^2}{\gamma^4}$. As a result, combining (\ref{eq:temp_0}) and (\ref{eq:temp_3}), we have

\begin{equation*}
\begin{aligned}
&\left\|\nabla F\left(\w_{t}\right)-\m_{t}\right\|^{2} \\
\leq & 2\left\|\frac{1}{n} \sum_{i \in \S} G_{i}\left(\w_{t}\right)-\m_{t}\right\|^{2}+  \frac{4 C_0}{n} \left\| \lambda(\w_t)-\lambda^t \right\|^2 \\&+  \frac{4 C_1}{n} \left\| u^t - g(\w_t) \right\|^2 + \frac{4 C_2}{n}  \left\| s^t - \nabla_{\lambda\lambda}^2 L(\w_t,\lambda^t) \right\|^2.
\end{aligned}
\end{equation*}

\end{proof}

\subsubsection{Proof of Lemma~\ref{lemma_decomp_2}}

\begin{proof}

\begin{equation*}
\begin{aligned}
	& \E\left[ \left\| \frac{1}{n}\sum_{i \in \S} G_i(\w_t)-\m_t \right\|^2\right] = \E\left[ \left\| \m_t-\frac{1}{n}\sum_{i \in \S} G_i(\w_t) \right\|^2\right] \\
	=& \E\left[ \left\| (1-\gamma_{m,t})\left(\m_{t-1}-\frac{1}{B_1}\sum_{i\in\B_1^t}G_i(\w_{t-1})\right) + \frac{1}{B_1}\sum_{i\in\B_1^t}G_i(\w_{t}) - \frac{1}{n}\sum_{i\in\S}G_i(\w_{t}) \right\|^2\right] \\
	=& \E\left[ \left\| (1-\gamma_{m,t})\left(\m_{t-1}-\frac{1}{n}\sum_{i\in\S}G_i(\w_{t-1})\right) + \gamma_{m,t} \left(\frac{1}{B_1}\sum_{i\in\B_1^t}G_i(\w_{t}) - \frac{1}{n}\sum_{i\in\S}G_i(\w_{t})\right) \right.\right. \\
	 &\quad \left.\left. +(1-\gamma_{m,t})\left(\frac{1}{B_1}\sum_{i\in\B_1^t}G_i(\w_{t}) - \frac{1}{B_1}\sum_{i\in\B_1^t}G_i(\w_{t-1}) - \frac{1}{n}\sum_{i\in\S}G_i(\w_{t}) +\frac{1}{n}\sum_{i\in\S}G_i(\w_{t-1})\right) \right\|^2\right].
\end{aligned}
\end{equation*}

We assume that $\E\left[\frac{1}{B_1}\sum_{i\in\B_1^t}G_i(\w_t)-\frac{1}{n}\sum_{i\in \S}G_i(\w_t) \right]\leq \sigma^2$. Due to the fact that the expectation over the last two terms equals to zero, we have

\begin{equation}\label{eq:713_1}
\begin{aligned}
	&  \E\left[ \left\| \m_t-\frac{1}{n}\sum_{i \in \S} G_i(\w_t) \right\|^2\right] \\
	\leq & \E\left[ (1-\gamma_{m,t})^2 \left\| \m_{t-1}-\frac{1}{n}\sum_{i\in\S}G_i(\w_{t-1}) \right\|^2 + 2\gamma_{m,t}^2\sigma^2 + 2(1-\gamma_{m,t})^2\frac{1}{B_1}\sum_{i\in\B_1^t}\left\| G_i(\w_t) - G_i(\w_{t-1}) \right\|^2 \right]. \\
\end{aligned}
\end{equation}

Next, we will bound $\left\| G_i(\w_t) - G_i(\w_{t-1})\right\|^2$

\begin{equation}\label{eq:713_2}
\begin{aligned}
	& \left\| G_i(\w_t) - G_i(\w_{t-1}) \right\|^2 \\
	= & \left\|  \left[\nabla_\w\psi_i(\w_t,\lambda_i^t)-\nabla_{\w \lambda}^2 L_i(\w_t,\lambda_i^t;\B_{2,i}^t) [s_i^{t}]^{-1} \nabla_\lambda \psi_i(\w_t,\lambda_i^t)\right]f_i(u_i^t)+\psi_i(\w_t,\lambda_i^t)\nabla g_i(\w_t;\B_{2,i}^t)\nabla f_i( u_i^t) \right. \\
	&\quad \left. -\left[\nabla_\w\psi_i(\w_{t-1},\lambda_i^{t-1})-\nabla_{\w \lambda}^2 L_i(\w_{t-1},\lambda_i^{t-1};\B_{2,i}^t) [s_i^{t-1}]^{-1} \nabla_\lambda \psi_i(\w_{t-1},\lambda_i^{t-1})\right]f_i(u_i^{t-1}) \right. \\ 
	&\quad \left. - \psi_i(\w_{t-1},\lambda_i^{t-1})\nabla g_i(\w_{t-1};\B_{2,i}^t)\nabla f_i( u_i^{t-1}) \right\|^2 \\
	\leq & 3 \left\| \nabla_\w\psi_i(\w_t,\lambda_i^t)f_i(u_i^t) - \nabla_\w\psi_i(\w_{t-1},\lambda_i^{t-1})f_i(u_i^{t-1}) \right\|^2 \\
	& + 3 \left\| \nabla_{\w \lambda}^2 L_i(\w_t,\lambda_i^t;\B_{2,i}^t) [s_i^{t}]^{-1} \nabla_\lambda \psi_i(\w_t,\lambda_i^t)f_i(u_i^t) - \nabla_{\w \lambda}^2 L_i(\w_{t-1},\lambda_i^{t-1};\B_{2,i}^t) [s_i^{t-1}]^{-1} \nabla_\lambda \psi_i(\w_{t-1},\lambda_i^{t-1})f_i(u_i^{t-1}) \right\|^2 \\
	& + 3 \left\| \psi_i(\w_t,\lambda_i^t)\nabla g_i(\w_t;\B_{2,i}^t)\nabla f_i( u_i^t) - \psi_i(\w_{t-1},\lambda_i^{t-1})\nabla g_i(\w_{t-1};\B_{2,i}^t)\nabla f_i( u_i^{t-1})\right\|^2 \\
	\leq & 6 C_{\psi}^2 C_{f}^2 \left\| u_i^t - u_i^{t-1} \right\|^2 + 12 B_f^2 L_{\psi}^2 \left\| \w_t - \w_{t-1} \right\|^2 +12 B_f^2 L_{\psi}^2 \left\| \lambda_i^t - \lambda_i^{t-1} \right\|^2 \\
	& + 12 \frac{C_{\psi}^2 B_f^2}{\gamma^2} L_{L\w\lambda}^2 (2\left\| \w_t - \w_{t-1} \right\|^2+2\left\| \lambda_i^t - \lambda_i^{t-1} \right\|^2) + 12 \frac{C_{L\w\lambda}^2 C_{\psi}^2 B_f^2}{\gamma^4} \left\| s_i^t - s_i^{t-1} \right\|^2 \\
	& + 12 \frac{L_{\psi}^2 B_f^2}{\gamma^2} C_{L\w\lambda}^2 (2\left\| \w_t - \w_{t-1} \right\|^2+2\left\| \lambda_i^t - \lambda_i^{t-1} \right\|^2) + 12 \frac{C_{L\w\lambda}^2 C_{\psi}^2 C_f^2}{\gamma^2} \left\| u_i^t - u_i^{t-1} \right\|^2 \\
	& + 9 C_g^2 C_f^2 C_{\psi}^2 (2\left\| \w_t - \w_{t-1} \right\|^2+2\left\| \lambda_i^t - \lambda_i^{t-1} \right\|^2) + 9 B_{\psi}^2 C_f^2 L_g^2 \left\| \w_t - \w_{t-1} \right\|^2 + 9 B_{\psi}^2 C_g^2 L_f^2 \left\| u_i^t - u_i^{t-1} \right\|^2 \\ 
	= & \underbrace{\left(6 C_{\psi}^2 C_{f}^2 + 12 \frac{C_{L\w\lambda}^2 C_{\psi}^2 C_f^2}{\gamma^2} + 9 B_{\psi}^2 C_g^2 L_f^2\right)}_{C_3} \left\| u_i^t - u_i^{t-1} \right\|^2 \\
	& + \underbrace{\left(12 B_f^2 L_{\psi}^2 + 24 \frac{C_{\psi}^2 B_f^2}{\gamma^2} L_{L\w\lambda}^2 + 24 \frac{L_{\psi}^2 B_f^2}{\gamma^2} C_{L\w\lambda}^2 + 18 C_g^2 C_f^2 C_{\psi}^2 + 9 B_{\psi}^2 C_f^2 L_g^2 \right)}_{C_4} \left\| \w_t - \w_{t-1} \right\|^2 \\
	& + \underbrace{\left(12 B_f^2 L_{\psi}^2 + 24 \frac{C_{\psi}^2 B_f^2}{\gamma^2} L_{L\w\lambda}^2 + 24 \frac{L_{\psi}^2 B_f^2}{\gamma^2} C_{L\w\lambda}^2 + 18 C_g^2 C_f^2 C_{\psi}^2 \right)}_{C_5} \left\| \lambda_i^t - \lambda_i^{t-1} \right\|^2 \\
	& + \underbrace{\left(12\frac{C_{L\w\lambda}^2 C_{\psi}^2 B_f^2}{\gamma^4} \right)}_{C_6} \left\| s_i^t - s_i^{t-1} \right\|^2.
\end{aligned}
\end{equation}

Then combine (\ref{eq:713_1}) and (\ref{eq:713_2}) and we have

\begin{equation*}
\begin{aligned}
	&  \E\left[ \left\| \m_t-\frac{1}{n}\sum_{i \in \S} G_i(\w_t) \right\|^2\right] \\
	\leq & \E\left[ (1-\gamma_{m,t})^2 \left\| \m_{t-1}-\frac{1}{n}\sum_{i\in\S}G_i(\w_{t-1}) \right\|^2 + 2\gamma_{m,t}^2\sigma^2 + 2(1-\gamma_{m,t})^2\frac{1}{B_1}\sum_{i\in\B_1^t}\left\| G_i(\w_t) - G_i(\w_{t-1}) \right\|^2 \right] \\
	\leq & (1-\gamma_{m,t})\E\left[ \left\| \m_{t-1}-\frac{1}{n}\sum_{i\in\S}G_i(\w_{t-1}) \right\|^2\right] + 2\gamma_{m,t}^2\sigma^2 + \frac{2(1-\gamma_{m,t})^2 C_3}{n} \left\| u^t - u^{t-1} \right\|^2 \\
	& \quad + 2(1-\gamma_{m,t})^2 C_4 \left\| \w_t - \w_{t-1} \right\|^2 + \frac{2(1-\gamma_{m,t})^2 C_5}{n} \left\| \lambda^t - \lambda^{t-1} \right\|^2 + \frac{2(1-\gamma_{m,t})^2 C_6}{n} \left\| s^t - s^{t-1} \right\|^2 \\
	\leq & (1-\gamma_{m,t}) \E\left[\left\| \m_{t-1}-\frac{1}{n}\sum_{i\in\S}G_i(\w_{t-1}) \right\|^2 \right]+ 2\gamma_{m,t}^2\sigma^2 + \frac{2 C_3}{n} \left\| u^t - u^{t-1} \right\|^2 \\
	& \quad + 2 C_4 \left\| \w_t - \w_{t-1} \right\|^2 + \frac{2 C_5}{n} \left\| \lambda^t - \lambda^{t-1} \right\|^2 + \frac{2 C_6}{n} \left\| s^t - s^{t-1} \right\|^2.
\end{aligned}
\end{equation*}

\end{proof}

\subsubsection{Proof of Lemma~\ref{lemma_fksong_lambda}}

\begin{proof}
Recall
\begin{equation*}
\begin{aligned}
& \lambda_i^{t+1}=\begin{cases} \lambda_i^t - \tau\tau_t z_i^t  \quad & \text{if }i\in \B_1^t\\   \lambda_i^t & \text{o.w.} \end{cases} \\
& z_i^{t}=\begin{cases} (1-\gamma_{z,t})(z_i^{t-1}-\nabla_{\lambda} L_i (\w_{t-1}, \lambda_i^{t-1};\B_{2,i}^t)) + \nabla_{\lambda} L_i (\w_t, \lambda_i^t;\B_{2,i}^t)  \quad & \text{if }i\in \B_1^t\\   z_i^{t-1} & \text{o.w.} \end{cases},
\end{aligned}
\end{equation*}

and define the following notations

\begin{equation*}
\tilde{\lambda}_i^t = \lambda_i^t -\tau z_i^t, \quad \bar{\lambda}_i^t = \lambda_i^t + \tau_t(\tilde{\lambda}_i^t - \lambda_i^t) \quad \text{if }i\in \B_1^t.
\end{equation*}

Note that

\begin{equation*}
\begin{aligned}
	& \left\|\bar{\lambda}_i^t - \lambda_i(\w_t) \right\|^2 \\
	=&\left\| \lambda_i^t + \tau_t(\tilde{\lambda}_i^t - \lambda_i^t) - \lambda_i(\w_t)\right\|^2 \\ 
	=&\left\|\lambda_i^t - \lambda_i(\w_t) \right\|^2 + \tau_t^2 \left\|\tilde{\lambda}_i^t - \lambda_i^t \right\|^2 + 2\tau_t(\lambda_i^t - \lambda_i(\w_t))(\tilde{\lambda}_i^t - \lambda_i^t). \\
\end{aligned}
\end{equation*}

As a result,

\begin{equation}\label{eq:fksong_lambda_lemma1}
	(\lambda_i^t - \lambda_i(\w_t))(\tilde{\lambda}_i^t - \lambda_i^t) =\frac{1}{2\tau_t}\left(\left\|\bar{\lambda}_i^t - \lambda_i(\w_t) \right\|^2- \left\|\lambda_i^t - \lambda_i(\w_t) \right\|^2 -\tau_t^2 \left\|\tilde{\lambda}_i^t - \lambda_i^t \right\|^2 \right).
\end{equation}

Due to smoothness of $L_i$, we have
\begin{equation*}
	L_i(\w_t,\tilde{\lambda}_i^t)\leq L_i(\w_t,\lambda_i^t) + \nabla_{\lambda}L_i(\w_t,\lambda_i^t)(\tilde{\lambda}_i^t - \lambda_i^t) + \frac{L_L}{2}\left\|\tilde{\lambda}_i^t-\lambda_i^t \right\|^2 .
\end{equation*}

Hence
\begin{equation*}
	L_i(\w_t,\tilde{\lambda}_i^t)-\nabla_{\lambda}L_i(\w_t,\lambda_i^t)(\tilde{\lambda}_i^t - \lambda_i^t)-\frac{L_L}{2}\left\|\tilde{\lambda}_i^t-\lambda_i^t \right\|^2 \leq L_i(\w_t,\lambda_i^t) .
\end{equation*}

Due to strong convexity of $L_i$, we have
\begin{equation*}
\begin{aligned}
	L_i(\w_t,\lambda) & \geq L_i(\w_t,\lambda_i^t) + \nabla_{\lambda}L_i(\w_t,\lambda_i^t)(\lambda-\lambda_i^t) + \frac{\mu_L}{2}\left\|\lambda-\lambda_i^t \right\|^2 \\
	& =  L_i(\w_t,\lambda_i^t) + \nabla_{\lambda}L_i(\w_t,\lambda_i^t)(\lambda-\tilde{\lambda}_i^t) +  \nabla_{\lambda}L_i(\w_t,\lambda_i^t)(\tilde{\lambda}_i^t-\lambda_i^t) + \frac{\mu_L}{2}\left\|\lambda-\lambda_i^t \right\|^2 \\
	& = L_i(\w_t,\lambda_i^t) + z_i^t(\lambda-\tilde{\lambda}_i^t) + (\nabla_{\lambda}L_i(\w_t,\lambda_i^t)-z_i^t)(\lambda-\tilde{\lambda}_i^t) \\
	&\quad + \nabla_{\lambda}L_i(\w_t,\lambda_i^t)(\tilde{\lambda}_i^t-\lambda_i^t) + \frac{\mu_L}{2}\left\|\lambda-\lambda_i^t \right\|^2
\end{aligned}
\end{equation*}

Combining the above inequalities, we have
\begin{equation*}
\begin{aligned}
	L_i(\w_t,\lambda) & \geq L_i(\w_t,\tilde{\lambda}_i^t) + z_i^t(\lambda-\tilde{\lambda}_i^t) + (\nabla_{\lambda}L_i(\w_t,\lambda_i^t)-z_i^t)(\lambda-\tilde{\lambda}_i^t)  \\
	&\quad + \frac{\mu_L}{2}\left\|\lambda-\lambda_i^t \right\|^2 - \frac{L_L}{2}\left\|\tilde{\lambda}_i^t-\lambda_i^t \right\|^2.
\end{aligned}
\end{equation*}

Note that
\begin{equation*}
\begin{aligned}
	z_i^t(\lambda-\tilde{\lambda}_i^t) &= \frac{1}{\tau}(\lambda_i^t-\tilde{\lambda}_i^t)(\lambda-\tilde{\lambda}_i^t) = \frac{1}{\tau}(\lambda_i^t-\tilde{\lambda}_i^t)(\lambda-\lambda_i^t) + \frac{1}{\tau}(\lambda_i^t-\tilde{\lambda}_i^t)(\lambda_i^t-\tilde{\lambda}_i^t) \\
	& = \frac{1}{\tau}(\lambda_i^t-\tilde{\lambda}_i^t)(\lambda-\lambda_i^t) + \frac{1}{\tau}\left\|\lambda_i^t-\tilde{\lambda}_i^t\right\|^2
\end{aligned}
\end{equation*}

Then we obtain
\begin{equation*}
\begin{aligned}
	L_i(\w_t,\lambda) & \geq L_i(\w_t,\tilde{\lambda}_i^t) + \frac{1}{\tau}(\lambda_i^t-\tilde{\lambda}_i^t)(\lambda-\lambda_i^t) + \frac{1}{\tau}\left\|\lambda_i^t-\tilde{\lambda}_i^t\right\|^2 + (\nabla_{\lambda}L_i(\w_t,\lambda_i^t)-z_i^t)(\lambda-\tilde{\lambda}_i^t) \\
	&\quad + \frac{\mu_L}{2}\left\|\lambda-\lambda_i^t \right\|^2 - \frac{L_L}{2}\left\|\tilde{\lambda}_i^t-\lambda_i^t \right\|^2.
\end{aligned}
\end{equation*}
	
Thus, combining the above inequality with (\ref{eq:fksong_lambda_lemma1}), we have
\begin{equation*}
\begin{aligned}
	L_i(\w_t,\tilde{\lambda}_i^t) & \geq L_i(\w_t, \lambda_i(\w_t)) \\
	& \geq L_i(\w_t,\tilde{\lambda}_i^t) + \frac{1}{\tau}(\lambda_i^t-\tilde{\lambda}_i^t)(\lambda_i(\w_t)-\lambda_i^t) + \frac{1}{\tau}\left\|\lambda_i^t-\tilde{\lambda}_i^t\right\|^2 + (\nabla_{\lambda}L_i(\w_t,\lambda_i^t)-z_i^t)(\lambda_i(\w_t)-\tilde{\lambda}_i^t) \\
	&\quad + \frac{\mu_L}{2}\left\|\lambda_i(\w_t)-\lambda_i^t \right\|^2 - \frac{L_L}{2}\left\|\tilde{\lambda}_i^t-\lambda_i^t \right\|^2 \\
	& \geq L_i(\w_t,\tilde{\lambda}_i^t) + \frac{1}{\tau}(\lambda_i^t-\tilde{\lambda}_i^t)(\lambda_i(\w_t)-\lambda_i^t) + \frac{1}{\tau}\left\|\lambda_i^t-\tilde{\lambda}_i^t\right\|^2 \\
	&\quad -\frac{2}{\mu_L}\left\| \nabla_{\lambda}L_i(\w_t,\lambda_i^t)-z_i^t \right\|^2 -\frac{\mu_L}{4} \left\|\lambda_i(\w_t)-\lambda_i^t \right\|^2 -\frac{\mu_L}{4}  \left\|\lambda_i^t-\tilde{\lambda}_i^t \right\|^2 \\
	&\quad + \frac{\mu_L}{2}\left\|\lambda_i(\w_t)-\lambda_i^t \right\|^2 - \frac{L_L}{2}\left\|\tilde{\lambda}_i^t-\lambda_i^t \right\|^2 \\
	&\geq L_i(\w_t,\tilde{\lambda}_i^t) + \frac{1}{2\tau_t\tau}\left(\left\|\bar{\lambda}_i^t - \lambda_i(\w_t) \right\|^2- \left\|\lambda_i^t - \lambda_i(\w_t) \right\|^2 -\tau_t^2 \left\|\tilde{\lambda}_i^t - \lambda_i^t \right\|^2 \right) \\
	&\quad -\frac{2}{\mu_L}\left\| \nabla_{\lambda}L_i(\w_t,\lambda_i^t)-z_i^t \right\|^2 + \frac{\mu_L}{4} \left\|\lambda_i(\w_t)-\lambda_i^t \right\|^2 + \left(\frac{1}{\tau}-\frac{\mu_L}{4}-\frac{L_L}{2} \right)\left\|\tilde{\lambda}_i^t - \lambda_i^t \right\|^2
\end{aligned}
\end{equation*}

Hence we have
\begin{equation*}
\begin{aligned}
	\left\|\bar{\lambda}_i^t - \lambda_i(\w_t) \right\|^2 & \leq \frac{4\tau_t\tau}{\mu_L}\left\| \nabla_{\lambda}L_i(\w_t,\lambda_i^t)-z_i^t \right\|^2 + \left(1-\frac{\tau\tau_t\mu_L}{2}\right) \left\|\lambda_i(\w_t)-\lambda_i^t \right\|^2 \\
	&\quad -2\tau\tau_t \left(\frac{1}{\tau}-\frac{\mu_L}{4}-\frac{L_L}{2} -\frac{\tau_t}{2\tau}\right)\left\|\tilde{\lambda}_i^t - \lambda_i^t \right\|^2 \\
	& \leq  \left(1-\frac{\tau\tau_t\mu_L}{2}\right) \left\|\lambda_i(\w_t)-\lambda_i^t \right\|^2 + \frac{4\tau_t\tau}{\mu_L}\left\| \nabla_{\lambda}L_i(\w_t,\lambda_i^t)-z_i^t \right\|^2 - 2\tau\tau_t \left(\frac{3}{4\tau}-\frac{3}{4}L_L\right)\left\|\tilde{\lambda}_i^t - \lambda_i^t \right\|^2 \\
	& \leq  \left(1-\frac{\tau\tau_t\mu_L}{2}\right) \left\|\lambda_i(\w_t)-\lambda_i^t \right\|^2 + \frac{4\tau_t\tau}{\mu_L}\left\| \nabla_{\lambda}L_i(\w_t,\lambda_i^t)-z_i^t \right\|^2 - 2\frac{\tau}{\tau_t} \left(\frac{3}{4\tau}-\frac{3}{4}L_L\right)\left\|\bar{\lambda}_i^t - \lambda_i^t \right\|^2,
\end{aligned}
\end{equation*}
where we use $\tau_t\leq\frac{1}{2}$ and $\mu_L\leq L_L$ in the second inequality, and use $\bar{\lambda}_i^t = \lambda_i^t + \tau_t(\tilde{\lambda}_i^t - \lambda_i^t)$ in the last inequality.

Notice that if $i\in\B_i^t$, then $\bar{\lambda}_i^t = \lambda_i^{t+1}$, so we have
\begin{equation*}
\begin{aligned}
	& \E\left[\left\|\lambda_i^{t+1}-\lambda_i(\w_t)\right\|^2 \right] = \frac{B_1}{n}\E\left[\left\|\bar{\lambda}_i^t - \lambda_i(\w_t) \right\|^2 \right] + \frac{n-B_1}{n}\E\left[\left\|\lambda_i^t - \lambda_i(\w_t) \right\|^2 \right] \\
	& \leq \left(1-\frac{\tau\tau_t\mu_L B_1}{2n}\right) \left\|\lambda_i(\w_t)-\lambda_i^t \right\|^2 + \frac{4\tau_t\tau B_1}{\mu_L n}\left\| \nabla_{\lambda}L_i(\w_t,\lambda_i^t)-z_i^t \right\|^2 - \frac{3\tau B_1}{2\tau_t n} \left(\frac{1}{\tau}-L_L\right)\left\|\lambda_i^{t+1} - \lambda_i^t \right\|^2.
\end{aligned}
\end{equation*}

Thus
\begin{equation*}
\begin{aligned}
	& \E\left[\left\|\lambda_i^{t+1}-\lambda_i(\w_{t+1})\right\|^2 \right] \\
	\leq & \left(1+\frac{\tau\tau_t\mu_L B_1}{4n}\right) \E\left[\left\|\lambda_i^{t+1}-\lambda_i(\w_t)\right\|^2 \right] + \left(1+\frac{4n}{\tau\tau_t\mu_L B_1}\right)\E\left[\left\|\lambda_i(\w_{t+1})-\lambda_i(\w_{t}) \right\|^2 \right] \\
	\leq & \left(1-\frac{\tau\tau_t\mu_L B_1}{4n}\right)\E\left[\left\| \lambda_i(\w_t)-\lambda_i^t\right\|^2 \right] + \frac{8\tau_t\tau B_1}{\mu_L n}\left\| \nabla_{\lambda}L_i(\w_t,\lambda_i^t)-z_i^t \right\|^2 \\
	&\quad - \frac{3\tau B_1}{\tau_t n} \left(\frac{1}{\tau}-L_L\right)\left\|\lambda_i^{t+1} - \lambda_i^t \right\|^2 + \frac{8n C_\lambda^2}{\tau\tau_t\mu_L B_1}\E\left[\left\| \w_{t+1}-\w_t \right\|^2 \right],
\end{aligned}
\end{equation*}
where we use $(1-\epsilon)(1+\frac{\epsilon}{2})\leq 1-\frac{\epsilon}{2}$ and the assumption $\tau_t\tau\leq\frac{4n}{\mu_L B_1}$ i.e., $\frac{\tau_t\tau\mu_L B_1}{4n}\leq 1$ in the last inequality.

Taking summation over all queries and expectation over all randomness, we have
\begin{equation*}
\begin{aligned}
	& \E\left[\left\|\lambda^{t+1}-\lambda(\w_{t+1})\right\|^2 \right] \\
	\leq & \left(1-\frac{\tau\tau_t\mu_L B_1}{4n}\right)\E\left[\left\| \lambda(\w_t)-\lambda^t\right\|^2 \right] + \frac{8\tau_t\tau B_1}{\mu_L n}\left\| \nabla_{\lambda}L(\w_t,\lambda^t)-z^t \right\|^2 \\
	&\quad - \frac{3\tau B_1}{\tau_t n} \left(\frac{1}{\tau}-L_L\right)\left\|\lambda^{t+1} - \lambda^t \right\|^2 + \frac{8n^2 C_\lambda^2}{\tau\tau_t\mu_L B_1}\E\left[\left\| \w_{t+1}-\w_t \right\|^2 \right].
\end{aligned}
\end{equation*}

\end{proof}

\subsubsection{Proof of Lemma~\ref{lemma_msvr}}

\begin{proof}

Denote $\bar{\d}_i^t= (1-\gamma_t)\d^{t-1}_{i} + \gamma_t f_i(\x_{t};\xi_t) + \beta_t (f_i(\x_{t};\xi_t)- f_i(\x_{t-1};\xi_t))$, then

\begin{equation}\label{eq:msvr_1}
\E\left[\left\|\d_i^t -f_i(\x_t) \right\|^2 \right]=\E\left[(1-\frac{B_1}{n})\left\| \d_i^{t-1}-f_i(\x_t) \right\|^2 + \frac{B_1}{n}\left\| \bar{\d}_i^t -f_i(\x_t) \right\|^2 \right].	
\end{equation}

First, we can decompose the first term on the RHS of (\ref{eq:msvr_1}) into 
\begin{equation}\label{eq:msvr_2}
\begin{aligned}
	&(1-\frac{B_1}{n})\E\left[\left\| \d_i^{t-1}-f_i(\x_t) \right\|^2 \right] = (1-\frac{B_1}{n})\E\left[\left\| \d_i^{t-1}- f_i(\x_{t-1}) + f_i(\x_{t-1})- f_i(\x_t) \right\|^2 \right] \\
	=&(1-\frac{B_1}{n})\E\left[\left\| \d_i^{t-1}- f_i(\x_{t-1}) \right\|^2\right] + (1-\frac{B_1}{n})\E\left[\left\| f_i(\x_{t-1}) -f_i(\x_t) \right\|^2\right] \\
	+& \underbrace{2(1-\frac{B_1}{n})\E\left[(\d_i^{t-1}- f_i(\x_{t-1}))(f_i(\x_{t-1}) -f_i(\x_t)) \right]}_{A}. 
\end{aligned}
\end{equation}

Then, we rewrite the the second term on the RHS of (\ref{eq:msvr_1}) into 
\begin{equation}\label{eq:msvr_3}
\begin{aligned}
	& \frac{B_1}{n}\E\left[ \left\| \bar{\d}_i^t - f_i(\x_t) \right\|^2 \right] \\
	= & \frac{B_1}{n} \E\left[\left\| (1-\gamma_t)(\d_i^{t-1} - f_i(\x_{t-1})) + (1-\gamma_t)(f_i(\x_{t-1}) - f_i(\x_t)) \right.\right. \\
	& \quad\quad\left.\left. + \gamma_t(f_i(\x_t;\xi_t) - f_i(\x_t)) + \beta_t(f_i(\x_t;\xi_t)-f_i(\x_{t-1};\xi_t)) \right\|^2 \right] \\
	= & \frac{B_1}{n} \E\left[\left\| (1-\gamma_t)(\d_i^{t-1} - f_i(\x_{t-1})) + (1-\gamma_t)(f_i(\x_{t-1}) - f_i(\x_t)) + \beta_t(f_i(\x_t;\xi_t)-f_i(\x_{t-1};\xi_t)) \right\|^2 \right] \\
	  &\ + \frac{B_1 \gamma_t^2}{n} \E\left[\left\|f_i(\x_t;\xi_t) - f_i(\x_t)\right\|^2 \right] \\
	  &\ + \frac{2 B_1 \gamma_t \beta_t}{n} \E\left[(f_i(\x_t;\xi_t) - f_i(\x_t))(f_i(\x_t;\xi_t)-f_i(\x_{t-1};\xi_t))\right] \\
	= &  \frac{B_1 (1-\gamma_t)^2}{n} \E\left[\left\| \d_i^{t-1} - f_i(\x_{t-1})\right\|^2\right]\\
	  &\ +  \frac{B_1}{n} \E\left[\left\| (1-\gamma_t)(f_i(\x_{t-1}) - f_i(\x_t)) + \beta_t(f_i(\x_t;\xi_t)-f_i(\x_{t-1};\xi_t)) \right\|^2 \right] \\
	  &\ + \underbrace{\frac{2 B_1}{n}(1-\gamma_t)(1-\gamma_t-\beta_t)\E\left[(\d_i^{t-1}- f_i(\x_{t-1}))(f_i(\x_{t-1}) -f_i(\x_t)) \right]}_{B} \\
	  &\ + \frac{B_1 \gamma_t^2}{n} \E\left[\left\|f_i(\x_t;\xi_t) - f_i(\x_t)\right\|^2 \right] \\
	  &\ + \frac{2 B_1 \gamma_t \beta_t}{n} \E\left[(f_i(\x_t;\xi_t) - f_i(\x_t))(f_i(\x_t;\xi_t)-f_i(\x_{t-1};\xi_t))\right].
\end{aligned}
\end{equation}

To make $A+B=0$, we require $2(1-\frac{B_1}{n}) + \frac{2 B_1}{n}(1-\gamma_t)(1-\gamma_t-\beta_t) = 0$, which gives us $\beta_t = 1-\gamma_t + \frac{n-B_1}{B_1 (1-\gamma_t)}$. Then, we plug (\ref{eq:msvr_2}) and (\ref{eq:msvr_3}) into (\ref{eq:msvr_1}) and have

\begin{equation*}
\begin{aligned}
	& \E\left[\left\|\d_i^t -f_i(\x_t) \right\|^2 \right]=\E\left[(1-\frac{B_1}{n})\left\| \d_i^{t-1}-f_i(\x_t) \right\|^2 + \frac{B_1}{n}\left\| \bar{\d}_i^t -f_i(\x_t) \right\|^2 \right] \\
	= & \left(1-\frac{B_1}{n} +  \frac{B_1 (1-\gamma_t)^2}{n}\right)\E\left[\left\| \d_i^{t-1}- f_i(\x_{t-1}) \right\|^2\right] + (1-\frac{B_1}{n})\E\left[\left\| f_i(\x_{t-1}) -f_i(\x_t) \right\|^2\right] \\
	  &\ +  \frac{B_1}{n} \E\left[\left\| (1-\gamma_t)(f_i(\x_{t-1}) - f_i(\x_t)) + \beta_t(f_i(\x_t;\xi_t)-f_i(\x_{t-1};\xi_t)) \right\|^2 \right] \\
	  &\ + \frac{B_1 \gamma_t^2}{n} \E\left[\left\|f_i(\x_t;\xi_t) - f_i(\x_t)\right\|^2 \right] \\
	  &\ + \frac{2 B_1 \gamma_t \beta_t}{n} \E\left[(f_i(\x_t;\xi_t) - f_i(\x_t))(f_i(\x_t;\xi_t)-f_i(\x_{t-1};\xi_t))\right] \\
	= & \left(1-\frac{B_1}{n} +  \frac{B_1 (1-\gamma_t)^2}{n}\right)\E\left[\left\| \d_i^{t-1}- f_i(\x_{t-1}) \right\|^2\right] + (1-\frac{B_1}{n})\E\left[\left\| f_i(\x_{t-1}) -f_i(\x_t) \right\|^2\right] \\
	  &\ +  \frac{B_1 (1-\gamma_t)^2}{n} \E\left[\left\|f_i(\x_{t-1}) - f_i(\x_t)\right\|^2 \right] + \frac{B_1 \beta_t^2}{n}\E\left[\left\|f_i(\x_t;\xi_t)-f_i(\x_{t-1};\xi_t) \right\|^2 \right]\\
	  &\ -  \frac{2 B_1 (1-\gamma_t)\beta_t}{n} \left\| f_i(\x_{t-1}) - f_i(\x_t) \right\|^2 \\
	  &\ + \frac{B_1 \gamma_t^2}{n} \E\left[\left\|f_i(\x_t;\xi_t) - f_i(\x_t)\right\|^2 \right] \\
	  &\ + \frac{2 B_1 \gamma_t \beta_t}{n} \E\left[(f_i(\x_t;\xi_t) - f_i(\x_t))(f_i(\x_t;\xi_t)-f_i(\x_{t-1};\xi_t))\right].
\end{aligned}	
\end{equation*}

Let $\gamma_t \leq \frac{1}{2}$, thus $\beta_t=1-\gamma_t+\frac{n-B_1}{B_1 (1-\gamma_t)}\leq 1-\gamma_t+\frac{2(n-B_1)}{B_1}=1-\gamma_t + \frac{2n}{B_1}-2 \leq \frac{2n}{B_1}$. Moreover, from $\beta_t=1-\gamma_t+\frac{n-B_1}{B_1 (1-\gamma_t)}$ we have $\frac{B_1 (1-\gamma_t)\beta_t}{n} = \frac{B_1 (1-\gamma_t)^2}{n}+1-\frac{B_1}{n}$, thus $1-\frac{B_1}{n} + \frac{B_1 (1-\gamma_t)^2}{n} -\frac{2 B_1 (1-\gamma_t)\beta_t}{n} \leq 0$. So we have

\begin{equation*}
\begin{aligned}
	& \E\left[\left\|\d_i^t -f_i(\x_t) \right\|^2 \right] \\
	\leq & \left(1-\frac{\gamma_t B_1}{n}\right) \E\left[\left\| \d_i^{t-1}- f_i(\x_{t-1}) \right\|^2\right] + \frac{4 n}{B_1}\E\left[\left\|f_i(\x_t;\xi_t)-f_i(\x_{t-1};\xi_t) \right\|^2 \right]+ \frac{B_1 \gamma_t^2}{n} \E\left[\left\|f_i(\x_t;\xi_t) - f_i(\x_t)\right\|^2 \right] \\
	  &\ + \frac{2 B_1 \gamma_t \beta_t}{n} \E\left[(f_i(\x_t;\xi_t) - f_i(\x_t))(f_i(\x_t;\xi_t)-f_i(\x_{t-1};\xi_t))\right] \\
	 \leq & \left(1-\frac{\gamma_t B_1}{n}\right)\E\left[\left\| \d_i^{t-1}- f_i(\x_{t-1}) \right\|^2\right] + \frac{4 n}{B_1}\E\left[\left\|f_i(\x_t;\xi_t)-f_i(\x_{t-1};\xi_t) \right\|^2 \right]+ \frac{B_1 \gamma_t^2}{n} \E\left[\left\|f_i(\x_t;\xi_t) - f_i(\x_t)\right\|^2 \right]\\
	 & + \frac{B_1 \gamma_t^2}{n} \E\left[\left\| f_i(\x_t;\xi_t) - f_i(\x_t)\right\|^2 \right] + 
	 \frac{B_1 \beta_t^2}{n} \E\left[\left\| f_i(\x_t;\xi_t)-f_i(\x_{t-1};\xi_t)\right\|^2 \right] \\
	 \leq & \left(1-\frac{\gamma_t B_1}{n}\right)\E\left[\left\| \d_i^{t-1}- f_i(\x_{t-1}) \right\|^2\right] + \frac{8 n}{B_1}\E\left[\left\|f_i(\x_t;\xi_t)-f_i(\x_{t-1};\xi_t) \right\|^2 \right] + \frac{2 B_1 \gamma_t^2 \sigma^2}{n}.
\end{aligned}	
\end{equation*}

Finally, we have

\begin{equation*}
\begin{aligned}
	& \E\left[\left\|\d^t -f(\x_t) \right\|^2 \right] =\sum_{i\in\S} \E\left[\left\|\d_i^t -f_i(\x_t) \right\|^2 \right] \\
	 \leq & \left(1-\frac{\gamma_t B_1}{n}\right)\E\left[\left\| \d^{t-1}- f(\x_{t-1}) \right\|^2\right] + \frac{8 n}{B_1}\sum_{i\in\S} \E\left[\left\|f_i(\x_t;\xi_t)-f_i(\x_{t-1};\xi_t) \right\|^2 \right] + 2 B_1 \gamma_t^2 \sigma^2.
\end{aligned}	
\end{equation*}

Then, we derive the bound for $\left\| \d^t - \d^{t-1} \right\|^2$

\begin{equation*}
\begin{aligned}
	&\E\left[\left\| \d^t - \d^{t-1} \right\|^2\right] = \sum_{i\in\S} \E\left[\left\| \d_i^t - \d_i^{t-1} \right\|^2\right]  \\
	= & \sum_{i\in\S} \E\left[\frac{B_1}{n} \left\| \bar{\d}_i^t - \d_i^{t-1} \right\|^2 + \frac{n-B_1}{n} \left\| \d_i^{t-1} - \d_i^{t-1} \right\|^2 \right] =\frac{B_1}{n} \sum_{i\in\S} \E\left[ \left\| \bar{\d}_i^t - \d_i^{t-1} \right\|^2 \right] \\
	= & \frac{B_1}{n}\sum_{i\in\S} \E\left[ \left\|\gamma_t (f_i(\x_t;\xi_t)-\d_i^{t-1}) + \beta_t(f_i(\x_t;\xi_t)-f_i(\x_{t-1};\xi_t)) \right\|^2  \right] \\
	\leq & \frac{B_1}{n} \sum_{i\in\S}\E \left[ 2\gamma_t^2 \left\|f_i(\x_t;\xi_t)-\d_i^{t-1} \right\|^2 + 2\beta_t^2 \left\|f_i(\x_t;\xi_t)-f_i(\x_{t-1};\xi_t) \right\|^2 \right] \\
	\leq & \frac{2 B_1 \gamma_t^2}{n} \sum_{i\in\S} \E\left[\left\| f_i(\x_t;\xi_t) -f_i(\x_t) \right\|^2 + \left\| f_i(\x_t) - \d_i^{t-1} \right\|^2 \right] \\
	& + \frac{8n}{B_1}\sum_{i\in\S}\E\left[ \left\|f_i(\x_t;\xi_t)-f_i(\x_{t-1};\xi_t) \right\|^2\right] \\
	\leq & 2 B_1 \gamma_t^2 \sigma^2 +  \frac{4 B_1 \gamma_t^2}{n} \sum_{i\in\S} \E\left[\left\| f_i(\x_t) - f_i(\x_{t-1}) \right\|^2 \right] + \frac{4 B_1 \gamma_t^2}{n}\sum_{i\in\S}\E\left[\left\| f_i(\x_{t-1}) - \d_i^{t-1} \right\|^2 \right] \\
	& + \frac{8n}{B_1}\sum_{i\in\S}\E\left[ \left\|f_i(\x_t;\xi_t)-f_i(\x_{t-1};\xi_t) \right\|^2\right] \\
	\leq & 2 B_1 \gamma_t^2 \sigma^2 + \frac{4 B_1 \gamma_t^2}{n}\E\left[\left\| f(\x_{t-1}) - \d^{t-1} \right\|^2 \right] + \frac{9n}{B_1}\sum_{i\in\S}\E\left[ \left\|f_i(\x_t;\xi_t)-f_i(\x_{t-1};\xi_t) \right\|^2\right],
\end{aligned}	
\end{equation*}
where the second inequality is due to $\E\left[f_i(\x_t;\xi_t)-f_i(\x_{t}) \right]=0$ and $\beta_t\leq\frac{2n}{B_1}$.

\end{proof}

\end{document}